%% file: neurips_2026.tex
\documentclass{article}

\PassOptionsToPackage{numbers,compress}{natbib}
\usepackage[preprint]{neurips_2026}

\usepackage[utf8]{inputenc} %
\usepackage[T1]{fontenc}    %
\usepackage{hyperref}       %
\usepackage{url}            %
\usepackage{booktabs}       %
\usepackage{amsfonts}       %
\usepackage{nicefrac}       %
\usepackage{microtype}      %
\usepackage{xcolor}         %
\usepackage{soul}

\usepackage{amsmath,amssymb,amsfonts, bbm}
\usepackage[disable]{todonotes}
\usepackage{titlesec}
\usepackage{wrapfig}
\usepackage{graphicx}
\usepackage{subcaption}
\input{math_commands}

\title{Provable Robustness against Backdoor Attacks \\ via the Primal-Dual Perspective on Differential Privacy}

\author{%
Aman Saxena\textsuperscript{\ensuremath{1,2,3}}, Jan Schuchardt\textsuperscript{\ensuremath{4}}, Yan Scholten\textsuperscript{\ensuremath{1,2,3}}, Stephan Günnemann\textsuperscript{\ensuremath{1,2,3}}\\
\texttt{\{a.saxena, y.scholten, s.guennemann\}@tum.de},  \texttt{jan.a.schuchardt@morganstanley.com}\\
\textsuperscript{\ensuremath{1}}{Department of Computer Science, Technical University of Munich}\\
\textsuperscript{\ensuremath{2}}{Munich Data Science Institute} \textsuperscript{\ensuremath{3}}{MCML}\\
\textsuperscript{\ensuremath{4}}{Machine Learning Research, Morgan Stanley}
}

\begin{document}
\maketitle
\begin{abstract}Randomized smoothing is a powerful tool for certifying robustness to adversarial perturbations, including poisoning attacks via randomized training and evasion attacks via randomized inference. Extending these guarantees to backdoor attacks, where training and test data are jointly perturbed, remains challenging because training- and test-time randomized mechanisms must be analyzed within a single robustness certificate. We address this by connecting randomized smoothing to the dual view of differential privacy through privacy profiles, which provide a numerical procedure for composing heterogeneous mechanisms. The resulting framework enables tight, modular, end-to-end certification of complex, composed mechanisms while leveraging existing analyses of differentially private mechanisms. We instantiate the framework for DP-SGD and Deep~Partition~Aggregation with inference-time smoothing, deriving joint robustness guarantees against both training-time and inference-time attacks. Experiments on MNIST and CIFAR-10 demonstrate the effectiveness of our framework. Overall, we provide a principled and general framework for using composite mechanisms to certify robustness under complex threat models that better capture the capabilities of real-world adversaries.
\end{abstract}

\section{Introduction}
\label{sec:Introduction}
\input{content/Introduction}

\section{Related Work}
\label{sec:related_work}
\input{content/related_work}

\section{Background and Preliminaries}
\label{sec:basics_of_DP}
\input{content/basics_of_DP}

\section{Randomized Smoothing from the Dual Perspective of DP}
\label{sec:Mathematical_Formulatiomn}
\input{content/mathematical_formulation}

\section{Designing Robust Training--Test Procedures}
\label{sec:design_robust_TTP}
\input{content/design_robust_TTP}

\vspace{-0.2cm}
\section{Experimental Evaluation}
\label{sec:experiments}
\input{content/experiments}
\vspace{-0.2cm}

\section{Conclusion}
\label{sec:conclusion}
We establish a unified framework for certifying robustness against poisoning and backdoor attacks by leveraging the primal--dual perspective on differential privacy. Our key insight is that, while the primal (hypothesis-testing) view naturally connects differential privacy to randomized smoothing, the dual view, expressed through privacy profiles and hockey-stick divergences, enables efficient numerical composition of heterogeneous mechanisms. This dual perspective allows us to decompose complex joint training--test procedures into simpler randomized components and compose them to obtain end-to-end robustness guarantees. We demonstrate the practical utility of our framework by deriving joint robustness certificates for novel threat models and tight poisoning certificates for DP-SGD. We further provide an analytical characterization of composing DPA with any base mechanism.

\paragraph{Limitations.} Our approach has several limitations. First, certifying DP-SGD requires sampling many models to estimate the relevant expectations, which introduces a non-trivial computational overhead. However, certification is performed offline and can often be parallelized, making the cost manageable in many practical settings. Second, certification via decomposition scales as \(O(R)\); for example, for the subsampled Gaussian mechanism under \(R\) additions/deletions, one must certify \(R+1\) distinct cases. While this introduces additional complexity, the resulting guarantees remain tractable for moderate threat sizes and provide a level of flexibility not supported by prior approaches. Third, robustness guarantees can come at the cost of utility, a trade-off that is not specific to our approach but rather common in certified robustness and privacy-preserving learning more broadly. Nevertheless, we believe this trade-off can be mitigated through improved training procedures and optimization techniques, making certifiable robustness increasingly practical at scale in the long run.

\clearpage
\section*{Acknowledgments}
We would like to thank Arthur Kosmala for proofreading this manuscript. This work has been funded by the DAAD program Konrad Zuse Schools of Excellence in Artificial Intelligence (sponsored by the Federal Ministry of Education and Research). The authors of this work take full responsibility for its content.

\bibliography{example_paper}
\bibliographystyle{plainnat}

\appendix

\input{content/Appendix}

\end{document}

%% file: math_commands.tex
\usepackage{amsmath,amsfonts,bm}
\usepackage{mathtools}
\usepackage{amsthm, graphicx}
\theoremstyle{plain}
\newtheorem{theorem}{Theorem}[section]

\newtheorem{lemma}[theorem]{Lemma}
\newtheorem{corollary}[theorem]{Corollary}
\theoremstyle{definition}
\newtheorem{definition}[theorem]{Definition}

\theoremstyle{remark}
\newtheorem{remark}[theorem]{Remark}
\usepackage{mathrsfs}
\usepackage{enumitem}

\usepackage{array}
\newcolumntype{L}[1]{>{\raggedright\arraybackslash}m{#1}}
\newcolumntype{R}[1]{>{\raggedright\arraybackslash}m{#1}}

\newcommand{\vect}[1]{\mathbf{#1}}

\newcommand{\classifier}[1][]{f\ifx&#1&\else(#1)\fi}

\newcommand{\logit}[1][]{l\ifx&#1&\else(#1)\fi}

\newcommand{\classpred}[1][]{\eta\ifx&#1&\else(#1)\fi}

\counterwithin*{theorem}{section}
\newtheorem{innercustomgeneric}{\customgenericname}
\providecommand{\customgenericname}{}
\newcommand{\newcustomtheorem}[2]{%
  \newenvironment{#1}[1]
  {%
   \renewcommand\customgenericname{#2}%
   \renewcommand\theinnercustomgeneric{##1}%
   \innercustomgeneric
  }
  {\endinnercustomgeneric}
}

\newcustomtheorem{customthm}{Theorem}
\newcustomtheorem{customlemma}{Lemma}
\newcustomtheorem{customproposition}{Proposition}

\usepackage{amsmath,amssymb,amsfonts}

\def\eqref#1{equation~\ref{#1}}

\def\1{\bm{1}}

\def\vx{{\bm{x}}}
\def\vy{{\bm{y}}}

\DeclareMathAlphabet{\mathsfit}{\encodingdefault}{\sfdefault}{m}{sl}
\SetMathAlphabet{\mathsfit}{bold}{\encodingdefault}{\sfdefault}{bx}{n}

\DeclareMathOperator{\argmax}{\rm{argmax}}

\DeclarePairedDelimiterX{\infdivx}[2]{[}{]}{%
  #1\;\delimsize\|\;#2%
}

%% file: content/Introduction.tex
Adversarial attacks on machine learning systems remain a major concern, particularly with the rapid deployment of large-scale models such as large language models \cite{shayegani2023surveyvulnerabilitieslargelanguage, schwinn2023adversarialattacksdefenseslarge, geisler2025attackinglargelanguagemodels}. These attacks extend beyond test-time evasion \cite{biggio2013evasion,szegedy2014intriguing} to include data poisoning and backdoor attacks \cite{yngqi2018trojan,chen2017targeted}, where adversaries manipulate training data to degrade performance or induce hidden behaviors at inference time, making robustness guarantees under such joint threat settings particularly challenging. While formally verifying neural network robustness remains difficult \cite{katz2017reluplex}, randomized smoothing (RS) has emerged as a powerful approach for certifying robustness against inference-time attacks \cite{lecuyer2019certified,cohen2019randomized, sosnin2025certified}. It has been successfully applied across a range of domains, including image classification \cite{cohen2019randomized, zhang2020blackboxcertificationrandomizedsmoothing}, discrete data such as graphs \cite{lee2020tightcertificatesadversarialrobustness, bojchevski2023efficientrobustnesscertificatesdiscrete, Scholten2022randomized}, and quantum classifiers \cite{Du_2021, saxena2024certifiablyrobustencodingschemes}. However, extending RS to certify robustness under joint training- and inference-time attacks remains largely unexplored, with existing approaches limited to specific threat models and discrete-feature settings \cite{Zhang2022BagFlipAC, Wang2020OnCR}.

To certify robustness to inference-time attacks, RS samples from a distribution over perturbed inputs and bounds how adversarial perturbations affect the induced prediction distribution. Viewing training as a mapping from datasets to test-time predictions, the same idea extends to poisoning robustness by considering distributions induced by perturbations to the training algorithm or  data~\cite{weber2023rab, jia2020intrinsiccertifiedrobustnessbagging, levine2021dpa}. This perspective suggests a possible extension to joint perturbations of training and test data, but it remains unclear how to compose guarantees from training-time and inference-time randomization.
\begin{figure}[t!]
    \centering
    \includegraphics[
        width=0.85\linewidth,
        trim=0cm 4cm 0 4.5cm,
        clip
    ]{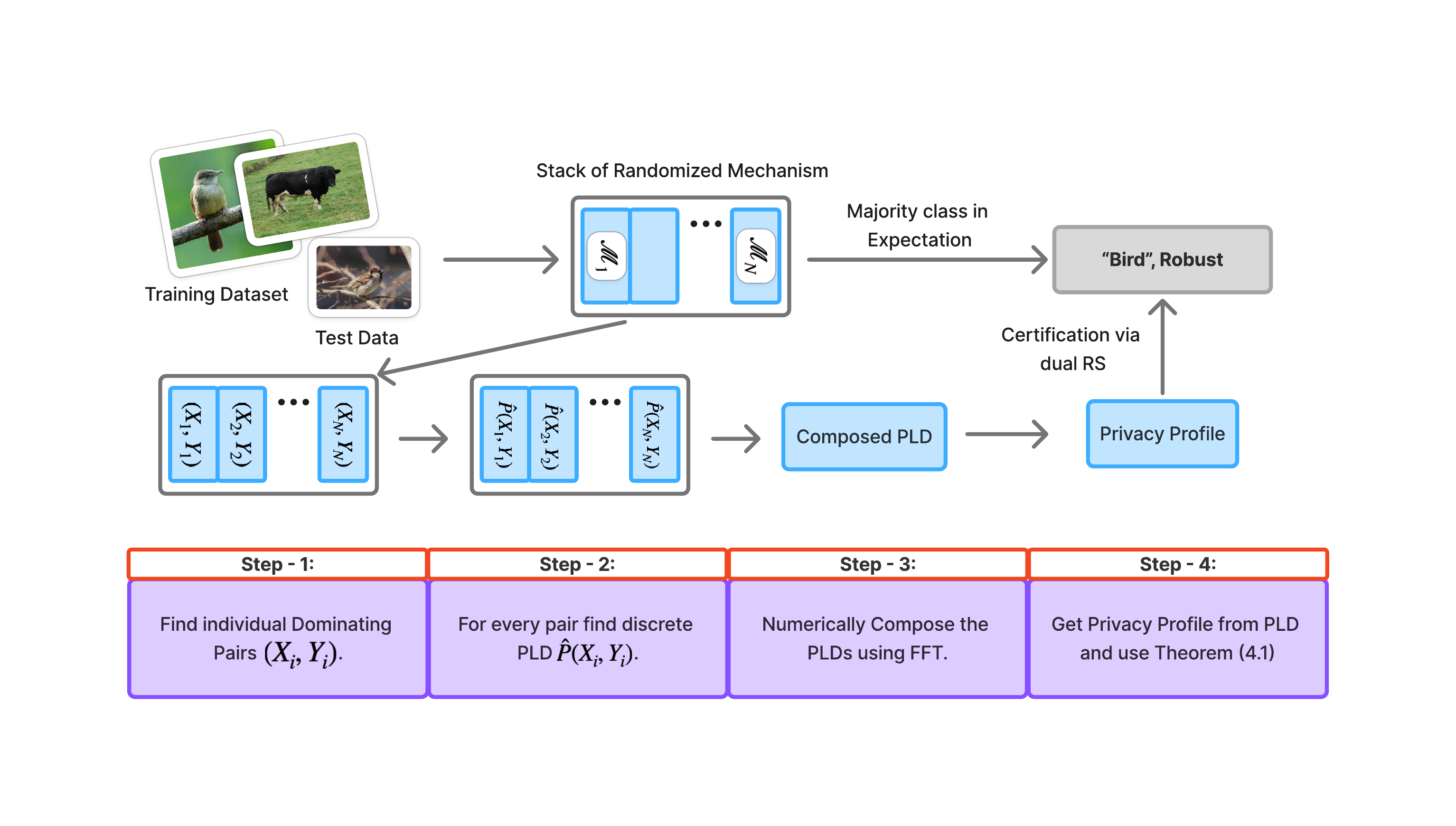}
    \caption{Overview of our framework and novel perspective: The training and classification pipeline can be viewed as a composition of randomized mechanisms, each with its own privacy guarantee. By leveraging the primal-dual equivalence between $f$-DP and privacy profiles, we can efficiently compose these guarantees to derive end-to-end robustness certificates against backdoor attacks.}
    \label{fig:pipeline}
\end{figure}
Compositionality lies at the core of differential privacy (DP)~\cite{Dwork2010BoostingAD, DBLP:journals/corr/abs-2106-08567, dong2019gaussian}, and prior work~\cite{lyu2024adaptive} has leveraged the formalism of $f$-DP to tightly compose the adversarial robustness guarantees for specific randomized smoothing mechanisms such as Gaussian noise. However, while $f$-DP provides a tight theoretical characterization of composition, it does not yield tractable procedures for composition of arbitrary mechanisms, and tractable analysis remains limited to homogeneous settings (e.g., Gaussian or Laplace mechanisms). In contrast, other works connect easily composable notions of DP, such as ADP~\cite{lecuyer2019certified, xie2023unravelingconnectionsprivacycertified} and RDP~\cite{liu2023enhancing}, to randomized smoothing, yielding tractable composition but weaker guarantees than $f$-DP~\cite{riess2026optimalconversionrenyidifferential, dong2019gaussian}. This raises a central question:

{\centering
\emph{How can we tractably compose arbitrary mechanisms while preserving a tight connection to randomized smoothing to certify robustness against complex threat models such as backdoor attacks?}
\par}

In this work, we answer this question by connecting randomized smoothing to the dual perspective of differential privacy, namely privacy profiles. We derive this dual formulation of randomized smoothing by leveraging and extending the primal–dual equivalence between $f$-DP and privacy profiles. This enables us to draw on the extensive literature on tight analyses of commonly used differentially private mechanisms in machine learning~\cite{balle2018amplification, schuchardt2024unifiedmechanismspecificamplificationsubsampling, schuchardt2025privacyamplificationstructuredsubsampling}, together with numerical composition techniques (e.g., via PLDs~\cite{doroshenko2022connectdotstighterdiscrete, Meiser2018TightOB}), thereby operationalizing the connection between $f$-DP and RS.

\textbf{Contributions}
We introduce a general framework for provable robustness that enables a modular design of randomized smoothing-based classifiers composed of simpler mechanisms. We provide an overview of our framework in \autoref{fig:pipeline}. Our approach tracks the privacy of individual components under a given threat model and composes these guarantees to derive robustness certificates via the dual formulation of randomized smoothing. This framework directly enables the evaluation of backdoor robustness under joint threat models. We demonstrate certified backdoor robustness on MNIST and CIFAR-10 for compositions of inference-time Gaussian noise with training-time randomization, including Deep Partition Aggregation (DPA)~\cite{levine2021dpa} and DP-SGD. This modularity further allows us to analyze complex training mechanisms in isolation. In particular, we derive provably tight randomized smoothing guarantees for DP-SGD~\cite{abadi2016deep} under additions and removals of training data, and empirically demonstrate improvements over \citet{liu2023enhancing}.%

%% file: content/related_work.tex
\paragraph{Certification Against Backdoor Attacks}
\todo{Add the only deterministic certification.}Randomized smoothing has become a standard tool for certifiable robustness against evasion attacks~\citep{cohen2019randomized, lecuyer2019certified, li2023sok}. Building on this foundation, several works have explored joint certification using RS~\citep{Zhang2022BagFlipAC, Wang2020OnCR, Zhang2023PECANAD}. Other works adopt notions of backdoor robustness that do not capture joint training-time and inference-time perturbations~\citep{weber2023rab, Qiao2025CertSSBDCB, jia2021certifiedrobustnessnearestneighbors}. In addition, \citep{sun-etal-2024-crowd} proposes an RS-based approach tailored to language-based tasks, while \citet{lorenz2024fullcertdeterministicendtoendcertification} relies on bound-propagation techniques applied to the unrolled training procedure. 

Overall, these approaches are limited to simple training procedures, specific model classes or tasks, and narrowly defined threat models that are easy to analyze and compose. In contrast, we develop a general framework that enables the use of tools from differential privacy to analyze and compose more general randomized mechanisms. As a result, randomized smoothing can be applied to general training and inference procedures under broad threat models.

\textbf{Certification Against Poisoning Attacks (Without DP)}
Beyond joint certification, a large body of work studies poisoning robustness in isolation. Most RS-based poisoning certification methods rely on aggregation-based techniques, typically variants of DPA~\cite{levine2021dpa, Hammoudeh_2024, wang2022improvedcertifieddefensesdata, jia2020intrinsiccertifiedrobustnessbagging, rezaei2023runoffelectionimprovedprovable, Scholten2025provably, gosch2026cert}. Another line of work~\cite{rosenfeld2020certifiedrobustnesslabelflippingattacks} uses randomized smoothing with a task-specific smoothing mechanism for label-flipping attacks. While effective in specific settings, these approaches are tightly coupled to particular randomized training mechanisms and are therefore tied to specific threat models. In contrast, our framework supports a broad class of randomized training procedures that admit a privacy-profile characterization under a given threat model, along with their principled composition. 

\textbf{Certification Against Poisoning Attacks (With DP)}
Several works exploit the connection between differential privacy and robustness certification \cite{xie2023unravelingconnectionsprivacycertified, borgnia2021dpinstahideprovablydefusingpoisoning, naseri2022localcentraldifferentialprivacy, liu2023enhancing}. \citet{du2019robustanomalydetectionbackdoor} uses differential privacy for adversarial detection. This connection has been similarly studied in the context of evasion robustness \cite{lecuyer2019certified, lee2019tight, lyu2024adaptive}. Existing approaches rely on a coarse characterization of differential privacy via ADP, or Rényi Differential Privacy~\cite{jia2020intrinsiccertifiedrobustnessbagging}, leading to suboptimal robustness bounds. In contrast, our work uses a privacy-profile–based formulation of differential privacy, which is equivalent to the hypothesis-testing formulation underlying randomized smoothing~\cite{cohen2019randomized, lyu2024adaptive}. %

\textbf{Robustness and Differential Privacy} Adaptive randomized smoothing~\cite{lyu2024adaptive} connects $f$-DP to robustness certification. However, this perspective does not provide an explicit composition rule, except for homogeneous composition of basic mechanisms such as the addition of Gaussian and Laplace noise. We use the primal-dual perspective of DP~\cite{dong2019gaussian} to convert the $f$-DP interpretation of randomized smoothing into a dual (privacy profile) formulation. This enables tractable composition of randomized mechanisms and allows robustness certification to directly leverage privacy analyses from the DP literature, such as tight guarantees for subsampled Gaussian mechanisms~\citep{schuchardt2024unifiedmechanismspecificamplificationsubsampling}. 

The perspective that privacy profiles provide an optimal characterization of robustness has appeared implicitly in prior work~\cite{dvijotham2020framework, zhang2020blackboxcertificationrandomizedsmoothing}, which derive RS guarantees for $\ell_p$-norm evasion certification, that resembles our dual-to-primal conversion in~\autoref{thm:primal_dual_asymmetric}. However, these works do not establish any connection to differential privacy, and therefore cannot leverage existing tools and analyses from the DP literature. This work expands upon the use of privacy accounting proposed in Dissertation \cite{schuchardt2026probabilistic}.

\todo{Add Non-RS papers, and see if CRFL makes sense}

%% file: content/basics_of_DP.tex
\textbf{Joint Training-Inference Procedure} Let each training/test sample $\vx$ come from a space $\mathbb X$ and the associated label $y $ from a set $ \mathbb Y := \{1, \ldots, C\}$. We represent a classifier as a function $f : \mathbb X \to \mathbb Y$, and denote the space of all such classifiers by $\mathbb Y^{\mathbb X}$. For $i \in \mathbb{N}$, the set $T_i := (\mathbb X \times \mathbb Y)^i$ denotes the space of training datasets of size $i$. %
A training algorithm $\mathcal A$ maps a training dataset of arbitrary size to a classifier, i.e., $\mathcal A: \cup_{i=1}^{\infty} T_i \to \mathbb{Y}^{\mathbb{X}}$. Finally, the joint training and inference procedure is a function $\mathcal J: \cup_{i=1}^{\infty} T_i \times \mathbb{X} \mapsto \mathbb{Y}$ that takes a training dataset $\vect{X}_{\text{train}}$, produces a classifier $f = \mathcal A(\vect{X}_{\text{train}})$, and outputs the prediction for a test input $\vx_{\text{test}} \in \mathbb X$, i.e., $\mathcal J: (\vect{X}_{\text{train}}, \vx_{\text{test}}) \mapsto \mathcal{A}(\vect{X}_{\text{train}})(\vx_{\text{test}})$.

\textbf{Backdoor Threat Model}
Let the input space $\mathbb X$ be equipped with a distance $d_{\text{input}}$ and the space of training datasets $\cup_{i=1}^{\infty} T_i$ with a distance $d_{\text{train}}$. Backdoor robustness requires that small perturbations to both the training data and the test input do not change the prediction of the joint training–inference procedure. Specifically, for $(\vect{X}_{\text{train}}, \vx_{\text{test}})$ and all $(\vect{X'}_{\text{train}}, \vx'_{\text{test}})$ such that $d_{\text{train}}(\vect{X'}_{\text{train}}, \vect{X}_{\text{train}}) \leq R$ and $d_{\text{input}}(\vx'_{\text{test}}, \vx_{\text{test}}) \leq \rho$, we call a joint training-inference procedure $\mathcal J$, $(R,\rho)$-robust w.r.t. $(d_{\text{train}}, d_{\text{input}})$ at $(\vect{X}_{\text{train}}, \vx_{\text{test}})$ if $\mathcal J(\vect{X'}_{\text{train}}, \vx'_{\text{test}}) = \mathcal J(\vect{X}_{\text{train}}, \vx_{\text{test}})$. We use “backdoor robustness” and “joint training–inference robustness” interchangeably, occasionally abbreviated as “joint robustness”.

\textbf{Randomized Mechanisms}
 The notion of a randomized mechanism provides a common abstraction for randomized smoothing and differential privacy. A \emph{randomized mechanism} $\mathcal{M}$ is a random function that maps an input $X$ to a distribution over a space $\mathcal Z$, denoted as $\mathcal M(\cdot \mid X)$. Examples include: (i) adding Gaussian noise, mapping $\vx \mapsto \mathcal N(\vx, \sigma^2 I)$; (ii) stochastic training procedures such as DP-SGD, which induce a distribution over model parameters given a dataset; and (iii) randomized subsampling, which defines a distribution over datasets.
 
\textbf{Randomized Smoothing: $C$-class Smooth Classifier} Randomized smoothing derives worst-case, typically black-box, robustness guarantees by applying a classifier to samples from a randomized mechanism and aggregating the resulting predictions. Let $\mathcal M$ be a randomized mechanism mapping an input $X$ to an intermediate space $\mathcal Z$. Further, let $\{\phi_i\}_{i=1}^C$ be a collection of hypothesis tests\footnote{A \emph{hypothesis test} is a randomized mechanism $\phi : \mathcal Z \to [0,1]$, where $\phi(z)$ denotes the probability of predicting $1$.} $\phi_i : \mathcal Z \to (\{0,1\}),
\quad i \in \{1,\dots,C\}$ with $\sum_{i} \phi_i = 1$.
The corresponding smoothed class probabilities are defined as $p_i
\;:=\;
\mathbb E_{z \sim \mathcal M(\cdot \mid X)}
\bigl[\phi_i(z)\bigr],
\quad i \in \{1,\dots,C\}$.
The induced smoothed classifier predicts the label $\argmax_{i \in \{1,\dots,C\}} \; p_i$. Here, $X$ may represent either a test input, a training dataset, or a training–test pair, depending on the context. For instance, $\mathcal M$ adds Gaussian noise to the test input, while a neural network represents the collection of hypothesis tests $\{\mathcal{\phi}_i\}_{i=1}^{C}$.

\textbf{Differential Privacy}
Differential privacy~\cite{dwork2006calibrating,dwork2014algorithmic} studies the closeness of the output distributions $\mathcal M(\cdot \mid X)$ and $\mathcal M(\cdot \mid X')$ induced by a randomized mechanism $\mathcal M$ on neighboring inputs $X \approx X'$, where $\approx$ denotes a relation on $\mathcal X$. %
This closeness is typically quantified either through hypothesis–testing–based notions~\cite{wasserman2010statistical,kairouz2015composition,dong2019gaussian} or through divergence-based measures~\cite{barthe2013beyond,balle2018amplification}. The two equivalent characterizations, $f$-DP and privacy profiles, are respectively grounded in these perspectives. We briefly discuss this equivalence in Section~\ref{sec:Mathematical_Formulatiomn}, with details in Appendix~\ref{Appendix:sec:fdp_privacy_profile_equivalence}.

\textbf{Privacy Profiles: The Dual Perspective of DP}
Classically, differential privacy is defined via a pair $(\varepsilon,\delta)$, known as \emph{$(\varepsilon,\delta)$-differential privacy}~\cite{dwork2006data}, also referred to as approximate differential privacy (ADP). A randomized mechanism $\mathcal M$ is said to be $(\varepsilon,\delta)$-DP if, for all neighboring inputs $X \approx X'$ and all events $B$, $\mathcal M(B \mid X) \le e^{\varepsilon}\mathcal M(B \mid X') + \delta$. While this definition captures privacy guarantees at a fixed pair $(\varepsilon,\delta)$, the trade-off between $(\varepsilon,\delta)$ values is needed to tightly characterize composition. Privacy profiles address this limitation by characterizing, for each $\varepsilon$, the smallest $\delta$ such that the mechanism satisfies $(\varepsilon,\delta)$-DP. This refinement can be expressed via hockey-stick divergences~\cite{barthe2013beyond}, defined as %
{$D^{\mathrm{HS}}_{\alpha}(P \,\|\, Q) = \int (p(y) - \alpha q(y))_+ \, dy$}, for the respective densities $p$, and $q$. 
The \emph{privacy profile} of $\mathcal M$ w.r.t.\ $\approx$ is a function $\delta : \mathbb R \to [0,1]$ defined by
\[
\delta(\varepsilon)
\;=\;
\sup_{X \approx X'}
D^{\mathrm{HS}}_{e^{\varepsilon}}\!\left(\mathcal M(X) \,\middle\|\, \mathcal M(X')\right).
\]

\textbf{Dominating Pairs}
Given a mechanism $\mathcal M$ with privacy profile $\delta(\varepsilon)$ for the neighboring relation $\approx$, any pair of distributions $(P,Q)$ satisfying $\delta(\varepsilon) \leq D^{\mathrm{HS}}_{e^{\varepsilon}}\!\left( P \,\middle\|\, Q\right)
\quad \text{for every } \varepsilon \in \mathbb R$ is called a \emph{dominating pair} of $(\mathcal M,\approx)$. If equality holds for every $\varepsilon \in \mathbb R$, then $(P,Q)$ is called a \emph{tight dominating pair}.

\textbf{$f-$DP: The Primal Perspective of DP}
The notion of $f-$DP exploits a hypothesis-testing perspective to compare the distributions induced by neighboring inputs. Distinguishing two distributions $P$ and $Q$ is characterized by the trade-off between Type~I and Type~II errors, formalized by the tradeoff function $\Lambda(P \,\|\, Q)$~\cite{dong2019gaussian}. It evaluates the minimal Type~II error at a given Type~I error level, over all hypothesis tests $\phi \in \mathcal H$:
\[
\Lambda(P \,\|\, Q)(\alpha)
\;=\;
\inf_{\phi \in \mathcal H} \; \mathbb{E}_{z \sim Q}[1 - h(z)]
\quad \text{s.t.} \quad
\mathbb{E}_{z \sim P}[h(z)] = \alpha .
\]
A function $f : [0,1] \to [0,1]$ is a tradeoff function iff it is convex, continuous, non-increasing, and $f(\alpha) \le 1 - \alpha$ for all $\alpha$. 
Given a tradeoff function $f$, a mechanism $\mathcal M$ is said to satisfy \emph{$f$-differential privacy} if, for all $X \approx X'$, $\;\Lambda\!\left(\mathcal M(X) \,\middle\|\, \mathcal M(X')\right) \;\ge\; f $.

\textbf{Randomized Smoothing from the Primal Perspective of DP}
  Certification of the $C-$class smooth classifier as defined above reduces to bounding worst-case changes in the class probabilities $p_i$ under perturbations $X'$ satisfying $d(X, X') \leq \delta$ for an appropriate distance $d$. As detailed in Appendix~\ref{Appendix:sec:RS}, this leads to the following optimization problem, which closely resembles the $f$-DP formulation:
\begin{equation}
    \label{eq:MainOPT}
    \begin{aligned}
        \text{Opt}(\alpha) := \inf_{\phi} \inf_{d(X,X') \le \delta}
        \; \mathbb{E}_{Z \sim \mathcal{M}(\cdot \mid X')}[\phi(Z)] \quad \text{s.t.} \quad
        \mathbb{E}_{Z \sim \mathcal{M}(\cdot \mid X)}[\phi(Z)] = \alpha.
    \end{aligned}
\end{equation}
The connection between $f-$DP and the above optimization problem is formalized in \citet{lyu2024adaptive}, and we restate it in Lemma~\ref{lemma:fdp_reformulation}.
\begin{lemma}
\label{lemma:fdp_reformulation}
Let $\approx_\delta$ be the relation defined by $X \approx_\delta X'$ if and only if $d(X,X') \le \delta$. If the mechanism $\mathcal M$ satisfies $f$-differential privacy with respect to $\approx_\delta$, then $\text{Opt}(\alpha) \ge f(1-\alpha)$.
\end{lemma}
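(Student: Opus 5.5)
The plan is to fix an arbitrary pair $X \approx_\delta X'$ and an arbitrary test $\phi$ with $\mathbb{E}_{Z \sim \mathcal M(\cdot\mid X)}[\phi(Z)] = \alpha$. It then suffices to show $\mathbb{E}_{Z \sim \mathcal M(\cdot\mid X')}[\phi(Z)] \ge f(1-\alpha)$, because taking the infimum over $\phi$ and over admissible $X'$ gives $\text{Opt}(\alpha) \ge f(1-\alpha)$. The key idea is to convert the optimization in \eqref{eq:MainOPT}, which minimizes the \emph{acceptance} probability under the perturbed input, into the Type~II error of a complementary test. That complementary test is the object the tradeoff function controls.

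Concretely, I set $h := 1 - \phi$, which is again a (randomized) hypothesis test in $\mathcal H$. Under $P := \mathcal M(\cdot \mid X)$ its Type~I error is $\mathbb{E}_{P}[h] = 1 - \alpha$. Under $Q := \mathcal M(\cdot\mid X')$ its Type~II error is $\mathbb{E}_{Q}[1-h] = \mathbb{E}_{Q}[\phi]$. By the definition of the tradeoff function, $\mathbb{E}_{Q}[\phi] \ge \Lambda(P\,\|\,Q)(1-\alpha)$, since $h$ is feasible for the infimum at level $1-\alpha$. Because $X \approx_\delta X'$ and $\mathcal M$ is $f$-DP with respect to $\approx_\delta$, we have $\Lambda(P\,\|\,Q) \ge f$ pointwise, and therefore $\mathbb{E}_{Q}[\phi] \ge f(1-\alpha)$.

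The only delicate point is bookkeeping about what counts as a test and which direction the definition is applied in. First, the tests $\phi_i$ in the smoothing setup are $\{0,1\}$-valued, while $\mathcal H$ in the tradeoff definition includes randomized $[0,1]$-valued tests. Since $\{0,1\}$-valued tests are a subset of these, the bound holds for them a fortiori, and $1-\phi$ stays in the same class either way. Second, the relation $\approx_\delta$ is symmetric because $d$ is a distance, so the $f$-DP guarantee applies to the ordered pair $(X, X')$ exactly as it appears in \eqref{eq:MainOPT}. No symmetrization of $f$ is needed. Once these are checked the argument is a one-line reduction, and I expect no real obstacle beyond stating the complement trick cleanly.
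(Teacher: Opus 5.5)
Your proof is correct. Setting $h=1-\phi$ turns any $\phi$ feasible for $\text{Opt}(\alpha)$ into a test at level $1-\alpha$ in the definition of $\Lambda(\mathcal M(X)\,\|\,\mathcal M(X'))$, with Type~II error exactly $\mathbb E_{\mathcal M(X')}[\phi]$, and the $f$-DP assumption then gives the bound.

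The paper gives no proof of its own. It restates the result from \citet{lyu2024adaptive} and later invokes their Proposition~2.1 in the proof of Theorem~\ref{thm:main_theorem_appendix}. Your complement-test reduction is the standard argument behind that citation, so it fills the gap rather than taking a competing route.

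One correction: symmetry of $\approx_\delta$ plays no role. The constraint $d(X,X')\le\delta$ in Equation~\ref{eq:MainOPT} already puts the ordered pair (clean input first, perturbed input second) in the relation, which is the orientation the $f$-DP definition uses. This matters because the paper applies the same bound to asymmetric decomposed relations $\approx_{\rho_i}$, such as $r_+$ additions and $r_-$ removals. Your argument carries over verbatim there precisely because its core step uses only the oriented pair.
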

However, the primal perspective (Lemma~\ref{lemma:fdp_reformulation}) mainly provides a reformulation of the original optimization problem. To obtain the trade-off function $f$, one must still solve Equation~\ref{eq:MainOPT}, e.g., via the Neyman--Pearson Lemma~\ref{lem:NP}.In section~\ref{sec:Mathematical_Formulatiomn}, we discuss further limitations of this perspective and connect randomized smoothing to the dual perspective of DP via privacy profiles.

%% file: content/mathematical_formulation.tex
Our goal is to derive robustness guarantees under the joint training--inference threat model introduced in Section~\ref{sec:basics_of_DP}. We randomize the training and inference procedures separately, yielding the composed mechanism $\mathcal M_I \circ \mathcal M_T$, which samples a model $m \sim \mathcal M_T(\cdot \mid X_{\text{train}})$ followed by sampling from $\mathcal M_I(\cdot \mid m, x_{\text{test}})$. The composed mechanism finally defines a robust classifier on the joint training -- test space (Definition~\ref{def:Joint-training-procedure}). However, as discussed in Appendix~\ref{appendix_subsec:limitaions_of_f-DP}, the primal perspective handles composition tightly at the theoretical level; it does not provide a tractable way to compose general compositions. To address this, we introduce the \emph{dual} formulation of randomized smoothing.

\paragraph{Basic Setup}
Denote $\approx_\rho$ as the neighboring relation for the threat model $\{{X'} : \;\; d( {X'}, X) \leq \Delta \}$, i.e., we represent ${X'}$ as an adversary within the radius $\Delta$ of $X$ by ${X'} \approx_\Delta X$. For all such $ {X'}$, we want to show that a $C-$class smooth classifier $g(\cdot)$ induced by $(\mathcal M, \{\phi_i\}_{i=1}^{C})$ (as defined in Section~\ref{sec:basics_of_DP}) satisfies $g({X'}) = g(X)$. Suppose that this relation decomposes into a collection of relations $\{\approx{\rho_i}\}_{i=1}^{N}$, i.e., $X \approx_{\rho} X^{'}$ iff there is some $i$ such that $X \approx_{\rho_{i}} X^{'}$, and that the randomized mechanism $\mathcal M$ admits privacy profiles $\delta_i(\varepsilon)$ with respect to $\approx_{\rho_i}$. The following theorem establishes robustness guarantees under the threat model induced by $\{\approx_{\rho_i}\}_{i=1}^N$ via the privacy profiles $\{\delta_i(\varepsilon)\}_{i=1}^N$.
\begin{theorem}
\label{thm:main_theorem}
For the unperturbed input $X$, let $c_1$ and $c_2$ denote the majority and second-majority classes, respectively, and let $p_1, p_2 \in [0,1]$ satisfy $\mathbb{E}_{\mathcal M (\vect X)}\bigl[\phi_{c_1}(z)\bigr] \;\ge\; p_1 \;>\; p_2 \;\ge\; \mathbb{E}_{z \sim \mathcal M(\vect X)}\bigl[\phi_{c_2}(z)\bigr]$. Then the classifier $g(\cdot)$ is $\rho$-robust at $X$, i.e., for all $X^{'} \approx_\rho X$, $g(X^{'}) = g(X)$ if
\begin{equation*}
\min_{i \in \{1, ..., N\}} \bigl[\max_{\varepsilon \in \mathbb R} e^{-\varepsilon}\bigl(p_1 - \delta_i(\varepsilon)\bigr) + \max_{\varepsilon \in \mathbb R} e^{-\varepsilon}\bigl(1 - p_2 - \delta_i(\varepsilon)\bigr)\bigr]
\;>\;
1   
\end{equation*}
\end{theorem}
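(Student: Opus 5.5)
Since $X \approx_\rho X'$ holds iff $X \approx_{\rho_i} X'$ for some $i$, it suffices to fix an arbitrary $i$ and an arbitrary $X'$ with $X \approx_{\rho_i} X'$. We then show that the bracketed quantity for this $i$ exceeding $1$ forces $\mathbb{E}_{\mathcal M(X')}[\phi_{c_1}] > \mathbb{E}_{\mathcal M(X')}[\phi_{c}]$ for every $c \neq c_1$. The minimum over $i$ in the statement then yields robustness for the whole union of relations. The approach is entirely dual: we turn each hockey-stick bound from the privacy profile into a lower bound on the perturbed majority probability and an upper bound on every perturbed runner-up probability.

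\textbf{Step 1 (lower bound for $c_1$).} By definition of $\delta_i$, we have $D^{\mathrm{HS}}_{e^\varepsilon}(\mathcal M(X)\,\|\,\mathcal M(X')) \le \delta_i(\varepsilon)$. For any test $\phi$ with values in $[0,1]$,
\[
\mathbb{E}_{\mathcal M(X)}[\phi] - e^{\varepsilon}\mathbb{E}_{\mathcal M(X')}[\phi] = \int \phi\,(p - e^\varepsilon q) \le \int (p-e^\varepsilon q)_+ \le \delta_i(\varepsilon).
\]
Applying this with $\phi=\phi_{c_1}$ gives $\mathbb{E}_{\mathcal M(X')}[\phi_{c_1}] \ge e^{-\varepsilon}(p_1 - \delta_i(\varepsilon))$ for every $\varepsilon$. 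Maximizing over $\varepsilon$ gives $\mathbb{E}_{\mathcal M(X')}[\phi_{c_1}] \ge A_i := \max_\varepsilon e^{-\varepsilon}(p_1-\delta_i(\varepsilon))$.

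\textbf{Step 2 (upper bound for any $c\ne c_1$).} Apply the same inequality to the complementary test $1-\phi_c$. Because $c_2$ is the second-majority class, every $c \neq c_1$ satisfies $\mathbb{E}_{\mathcal M(X)}[\phi_c] \le p_2$, hence $\mathbb{E}_{\mathcal M(X)}[1-\phi_c] \ge 1-p_2$. We obtain $1 - \mathbb{E}_{\mathcal M(X')}[\phi_c] \ge e^{-\varepsilon}(1-p_2-\delta_i(\varepsilon))$, and maximizing over $\varepsilon$ gives $\mathbb{E}_{\mathcal M(X')}[\phi_c] \le 1 - B_i$ with $B_i := \max_\varepsilon e^{-\varepsilon}(1-p_2-\delta_i(\varepsilon))$.

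Both steps use the same direction of the divergence, $D^{\mathrm{HS}}(\mathcal M(X)\|\mathcal M(X'))$. This matters because the privacy profile need not be symmetric unless $\approx_{\rho_i}$ is symmetric. The sup in the definition of $\delta_i$ ranges over ordered pairs, so we only use the ordered pair $(X,X')$, which is covered.

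\textbf{Step 3 (conclusion).} The hypothesis $A_i + B_i > 1$ gives $\mathbb{E}_{\mathcal M(X')}[\phi_{c_1}] \ge A_i > 1 - B_i \ge \mathbb{E}_{\mathcal M(X')}[\phi_c]$ for all $c\ne c_1$. Hence $g(X')=c_1=g(X)$. Since the minimum over $i$ exceeds $1$, this holds for every $i$, and therefore for all $X'\approx_\rho X$.

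The only delicate points are minor technicalities. First, the maxima over $\varepsilon$ may only be suprema; strict inequality still passes through by choosing $\varepsilon$ close to optimal. Second, randomized $[0,1]$-valued tests are handled by the integral bound in Step 1. I would also remark that, via the primal--dual equivalence (Appendix~\ref{Appendix:sec:fdp_privacy_profile_equivalence}), these bounds coincide with the Neyman--Pearson bounds of Lemma~\ref{lemma:fdp_reformulation}, which shows the certificate is not loose.
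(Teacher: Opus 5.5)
Your proof is correct, and it reaches exactly the same certificate as the paper by a more direct route. The paper converts each $\delta_i$ into the optimal tradeoff function $f_i(\alpha)=\sup_{\varepsilon} e^{-\varepsilon}(1-\delta_i(\varepsilon)-\alpha)$, using the dual-to-primal half of Theorem~\ref{thm:primal_dual_asymmetric}. It then invokes the $f$-DP certification condition of \citet{lyu2024adaptive}, $f_i(1-p_1) > 1-f_i(p_2)$. The two suprema in the statement are precisely $f_i(1-p_1)$ and $f_i(p_2)$.

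You instead apply the defining inequality $\mathbb{E}_{\mathcal M(X)}[\phi]-e^{\varepsilon}\mathbb{E}_{\mathcal M(X')}[\phi]\le\delta_i(\varepsilon)$ directly to $\phi_{c_1}$ and to $1-\phi_c$. This is the easy direction of the dual-to-primal conversion, specialized to the two tests you actually need.

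What your route buys:
\begin{itemize}
\item It is self-contained. You need no strong-duality step, biconjugate characterization of the optimal tradeoff function, left-continuous inverse, or asymmetric extension of the primal--dual equivalence. You use the same ordered pair $(X,X')$ as the paper, so the asymmetry is handled consistently.
\item It holds verbatim when $\delta_i$ is only an upper bound on the privacy profile, which is what the PLD accountant in the pipeline produces. The paper's equality $f_i=\sup_\varepsilon(\cdot)$ formally presumes the exact profile.
\end{itemize}

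What the paper's route buys is interpretation. Identifying your two bounds with the optimal tradeoff function shows the certificate is the best one $f$-DP can give for each sub-relation. That underpins the tightness discussion (Lemma~\ref{app:lemma:tightness}) and the comparison with non-decomposed $f$-DP certificates. Your closing remark points to this, but it is not needed for validity.

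Your first technicality is unnecessary. Since the lower and upper bounds hold for every $\varepsilon$, they hold for the suprema directly, so no near-optimal $\varepsilon$ needs to be chosen.
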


Intuitively, Theorem~\ref{thm:main_theorem} certifies robustness by checking each decomposed neighboring relation $\approx_i$ separately. This relies on the connection between $f-$DP and randomized smoothing (Lemma~\ref{lemma:fdp_reformulation}) together with the primal--dual equivalence of DP (Theorem~\ref{thm:primal_dual_asymmetric}). We defer the proof to Appendix~\ref{APP_Sec: Main_robustness_results}.   
\paragraph{Connection to $f-$DP}
For brevity, we interpret the above result in the binary classification setting. Let $c_1$ be the majority class for the unperturbed sample $X$, and let $p_1$ denote the lower bound on the smooth probability $\mathbb{E}_{\mathcal M (X)}\bigl[\phi_{c_1}(z)\bigr]$. Deriving robustness guarantees then amounts to solving the optimization problem in Equation~\ref{eq:MainOPT} and checking whether $\mathrm{Opt}(p_1) > 0.5$. This can be rewritten as the infimum of the tradeoff functions $\Lambda(\mathcal M(X)\,\|\,\mathcal M(X'))(1-p_1)$ over perturbed inputs $X'$ such that $X' \approx_\rho X$. \citet{lyu2024adaptive} lower bound this infimum using the $f-$DP characterization of $\mathcal M$ with respect to the relation $\approx_{\rho}$. We strengthen this bound by decomposing the space of perturbed inputs using the relations $\{\approx_{\rho_i}\}_{i=1}^{N}$, when the corresponding $\delta_i(\cdot)$ are tractable to obtain. To utilize these privacy profiles, we leverage the primal -- dual connection of DP. 

\paragraph{Exploiting the Primal--Dual Equivalence} 
\citet{dong2019gaussian} establish the equivalence between $f-$DP and privacy profiles for symmetric neighboring relations. We extend this equivalence to general, potentially asymmetric, relations. This is useful even when $\approx_\rho$ is symmetric, since decomposing it into asymmetric sub-relations can yield tighter guarantees. For instance, as discussed later, the symmetric relation of $R$ additions/deletions on training datasets can be decomposed into asymmetric relations with exactly $r_+$ additions and $r_-$ deletions, yielding provably stronger guarantees (Figure~\ref{fig:convexification_tf}). 
Essentially, we show that the privacy profile $\delta(\cdot)$ of a mechanism $\mathcal M$ with respect to a relation $\approx$ is equivalent to the convex biconjugate of the infimum over tradeoff functions $\Lambda(\mathcal M( X),|,\mathcal M( X'))$. We formalize this via the Optimal Tradeoff Function (Definition~\ref{def:optimal_tradeoff_function}, Theorem~\ref{th:optimal_tradeoff_function}) and state the equivalence in Theorem~\ref{thm:primal_dual_asymmetric}, with details and proofs deferred to Appendix~\ref{Appendix:sec:fdp_privacy_profile_equivalence}.
\begin{definition}(Optimal Tradeoff Function)
    \label{def:optimal_tradeoff_function}
    Given a mechanism $\mathcal M$, and neighboring relation $\approx$, we say that a tradeoff function $f$ is the optimal tradeoff function if $\mathcal M$ is $f-$DP w.r.t., $\approx$, and for any tradeoff function $f{'}$ if $\mathcal M$ is $f^{'}-$DP w.r.t., $\approx$, then $f^{'} \leq f$.    
\end{definition}
\begin{theorem}
\label{thm:primal_dual_asymmetric}
Let $\mathcal M$ be a randomized mechanism with optimal tradeoff function $f$ and privacy profile $\delta$ for the neighboring relation $\approx$. The following equivalent conversions hold:
\begin{align*}
\text{(Primal to Dual)} \quad 
\delta(\varepsilon) 
&= 1 + (f^{-1})^*(-e^{\varepsilon}), \\
\text{(Dual to Primal)} \quad 
f(\alpha) 
&= \sup_{\varepsilon \in \mathbb{R}} e^{-\varepsilon}(1 - \delta(\varepsilon) - \alpha).
\end{align*}
Here, $f^{-1}$ represents the left continuous inverse of a non-decreasing function, and $f^{*}$ represents the convex conjugate of $f$. 
\end{theorem}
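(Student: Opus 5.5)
The plan is to reduce both conversions to a pairwise statement and then take suprema and infima over neighboring pairs. For fixed $X \approx X'$, write $P = \mathcal M(X)$, $Q = \mathcal M(X')$ and $T_{X,X'} = \Lambda(P\|Q)$. The basic pairwise identity is
\begin{equation*}
D^{\mathrm{HS}}_{e^\varepsilon}(P\|Q) = \sup_{\phi}\bigl(\mathbb E_P[\phi] - e^{\varepsilon}\mathbb E_Q[\phi]\bigr)
= \sup_{\alpha\in[0,1]}\bigl(1 - T_{X,X'}(\alpha) - e^{\varepsilon}\alpha\bigr),
\end{equation*}
after a suitable relabelling of which error is Type~I. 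It holds because the supremum over tests is attained by Neyman--Pearson tests (Lemma~\ref{lem:NP}), which trace out exactly the boundary of the tradeoff region. Taking the supremum over $X\approx X'$ then gives $\delta(\varepsilon) = \sup_\alpha (1 - g(\alpha) - e^{\varepsilon}\alpha)$, where $g = \inf_{X\approx X'} T_{X,X'}$ is the pointwise infimum of the tradeoff curves. Note that $g$ need not be convex, which is exactly the asymmetric and non-symmetric issue.

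Next, identify the optimal tradeoff function from Definition~\ref{def:optimal_tradeoff_function}. The claim is that $f = g^{**}$, the convex biconjugate (lower convex envelope) of $g$ on $[0,1]$. First, $g^{**}$ is a tradeoff function: it is convex, and it inherits non-increasingness and $g^{**}\le g \le 1-\alpha$ because the convex envelope of a non-increasing function is non-increasing. Continuity at the endpoints needs a small argument, using lower semicontinuity of the envelope and the fact that convexity gives continuity in the interior. Second, $\mathcal M$ is $g^{**}$-DP since $g^{**}\le g \le T_{X,X'}$. Third, if $\mathcal M$ is $f'$-DP for a tradeoff function $f'$, then $f'\le g$, and $f'$ is convex, so $f'\le g^{**}$. (The existence of this optimal $f$ is presumably Theorem~\ref{th:optimal_tradeoff_function}, which I would invoke.)

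Primal to dual then follows by observing that the linear functional $\alpha\mapsto 1-h(\alpha)-e^{\varepsilon}\alpha$ has the same supremum for $h=g$ and $h=g^{**}$, since supporting lines of $g$ and of its envelope coincide. Hence $\delta(\varepsilon)=\sup_\alpha(1-f(\alpha)-e^\varepsilon\alpha)$. I then rewrite this in the paper's form $1+(f^{-1})^*(-e^\varepsilon)$. This is a change of variables: $\beta=f(\alpha)$ swaps the roles of the two error types, and the left-continuous inverse handles flat pieces of $f$. The steps are to compute $(f^{-1})^*(-e^\varepsilon)=\sup_\beta(-e^{\varepsilon}\beta - f^{-1}(\beta))$ and match the terms, checking that the generalized inverse does not change the supremum on flat segments.

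Dual to primal uses the Fenchel--Moreau theorem. The map $\varepsilon\mapsto\delta(\varepsilon)$ with slope $s=e^{\varepsilon}$ encodes $f^*(-s)$ up to sign and the constant, namely $\delta(\varepsilon)-1 = f^*(-e^{\varepsilon})$. Since $f$ is convex and lower semicontinuous on $[0,1]$, we have $f = f^{**}$, so $f(\alpha) = \sup_{s}\bigl(-s\alpha - f^*(-s)\bigr)$. The supremum restricts to $s\ge0$ because $f$ is non-increasing, and $s=0$ can be approached as $\varepsilon\to-\infty$. Substituting $s=e^{\varepsilon}$ gives $\sup_\varepsilon\bigl(-e^{\varepsilon}\alpha + 1 - \delta(\varepsilon)\bigr)$.

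This differs from the stated $\sup_\varepsilon e^{-\varepsilon}(1-\delta(\varepsilon)-\alpha)$ only in which direction the error roles are taken. So the main obstacle, and the step I would check most carefully, is consistent bookkeeping of orientation. The asymmetric relation means $\Lambda(P\|Q)$ and $\Lambda(Q\|P)$ differ, so I would fix the convention from the paper's definition of $\Lambda$, apply the pairwise identity with $T$'s inverse (equivalently the $Q$-vs-$P$ curve), and verify the formula on Gaussian mechanisms as a sanity check. The secondary technical obstacle is the endpoint and continuity behaviour of the convex envelope, including attainment at $\alpha\in\{0,1\}$.
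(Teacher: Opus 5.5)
There is a genuine gap, and it sits exactly where you defer to ``bookkeeping of orientation''. That bookkeeping is the whole content of the asymmetric extension, and as written your chain proves the formulas for the \emph{opposite} relation.

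With the paper's convention $\Lambda(P\,\|\,Q)(\alpha)=\inf\{\mathbb E_Q[1-\phi]:\mathbb E_P[\phi]=\alpha\}$, your pairwise expression $\sup_\alpha\bigl(1-\Lambda(P\,\|\,Q)(\alpha)-e^{\varepsilon}\alpha\bigr)$ equals $D^{\mathrm{HS}}_{e^\varepsilon}(Q\,\|\,P)$, not $D^{\mathrm{HS}}_{e^\varepsilon}(P\,\|\,Q)$. Your ``$\delta$'' is therefore the profile $\delta^{\mathrm{op}}$ of $\approx^{\mathrm{op}}$. If you instead relabel so that $T_{X,X'}=\Lambda(Q\,\|\,P)$, the identity becomes correct, but $g^{**}$ is then the optimal tradeoff function of $\approx^{\mathrm{op}}$, namely $f^{-1}$, not $f$.

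Either way, $\delta(\varepsilon)=1+f^*(-e^\varepsilon)$ is false for asymmetric relations, and the change of variables cannot repair it. Substituting $\beta=f(\alpha)$ gives $(f^{-1})^*(-e^\varepsilon)=\sup_\alpha\bigl(-e^\varepsilon f(\alpha)-\alpha\bigr)=e^{\varepsilon}f^*(-e^{-\varepsilon})$: the slope lands on the other coordinate. Likewise, your dual-to-primal output $\sup_\varepsilon(1-\delta(\varepsilon)-e^\varepsilon\alpha)$ equals $f^{-1}(\alpha)$, not $f(\alpha)$.

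Concretely, take a single pair with $\mathcal M(X)$ a point mass at $a$ and $\mathcal M(X')$ uniform on $\{a,b\}$. Then $f(\alpha)=(1-\alpha)/2$ and $\delta(\varepsilon)=(1-e^\varepsilon/2)_+$. But $1+f^*(-2)=1/2\neq 0=\delta(\log 2)$. Also $\sup_\varepsilon(1-\delta(\varepsilon)-e^\varepsilon\alpha)=(1-2\alpha)_+$, which gives $1\neq f(0)=1/2$ at $\alpha=0$. When $f=f^{-1}$ (symmetric relations, or equal-variance Gaussians) the discrepancy vanishes, so your proposed Gaussian sanity check would not catch it.

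The repair is to use the correctly oriented identity $D^{\mathrm{HS}}_{e^\varepsilon}(P\,\|\,Q)=\sup_\beta\bigl(1-\beta-e^{\varepsilon}\Lambda(P\,\|\,Q)(\beta)\bigr)$, obtained by fixing $\mathbb E_P[\phi]=1-\beta$ and minimizing $\mathbb E_Q[\phi]$. Then:
\begin{itemize}
\item Taking the supremum over pairs and using $g^*=(g^{**})^*$ gives $\delta(\varepsilon)=1+e^{\varepsilon}f^*(-e^{-\varepsilon})$.
\item Your substitution, justified by $f(f^{-1}(\beta))\le\beta$ and $f^{-1}(f(\alpha))\le\alpha$ (using continuity and $f(1)=0$), turns this into $1+(f^{-1})^*(-e^\varepsilon)$.
\item Fenchel--Moreau with slope $s=e^{-\varepsilon}$, where $s\downarrow 0$ now corresponds to $\varepsilon\to+\infty$, yields exactly $f(\alpha)=\sup_\varepsilon e^{-\varepsilon}(1-\delta(\varepsilon)-\alpha)$ for all $\alpha\in[0,1]$, including the endpoint.
\end{itemize}
Equivalently, you can keep your computation as a statement about $\delta^{\mathrm{op}}$ and convert it with Lemma~\ref{lemma:privacy_opposite}.

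Once repaired, your route genuinely differs from the paper's. The paper orients via $\Lambda(Q\,\|\,P)$ and identifies the resulting envelope as $f^{\mathrm{op}}$. It then invokes Lemma~\ref{lemma:relating_f_to_f_inv} ($f^{\mathrm{op}}=f^{-1}$, proved through Blackwell's theorem). For dual-to-primal it uses Lagrangian duality on $f(\alpha)=\inf\{y: f^{-1}(y)\le\alpha\}$, plus a separate continuity argument at $\alpha=0$. The repaired version of your argument needs neither Blackwell's theorem nor that endpoint argument.
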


\paragraph{Why decompose $\approx_\rho$ further?} The \emph{Dual to Primal} part of Theorem~\ref{thm:main_theorem} suggests that taking the maximum over $\delta_i(\cdot)$ and then converting to $f$ yields a lower bound on first converting each $\delta_i$ to $f_i$ and then taking the minimum. As discussed above, we need the latter for robustness certification. We illustrate this using a subsampled Gaussian mechanism under $100$ changes ($\approx_{100}$) in Figure~\ref{fig:convexification_tf}. The pointwise minimum over the decomposed tradeoff functions lies strictly above the tradeoff function for the non-decomposed relation $\approx_{100}$.

\paragraph{Advantages of the Dual Perspective} 
\begin{wrapfigure}{r}
{0.45\textwidth}
    \centering
 \includegraphics[width=0.43\textwidth]{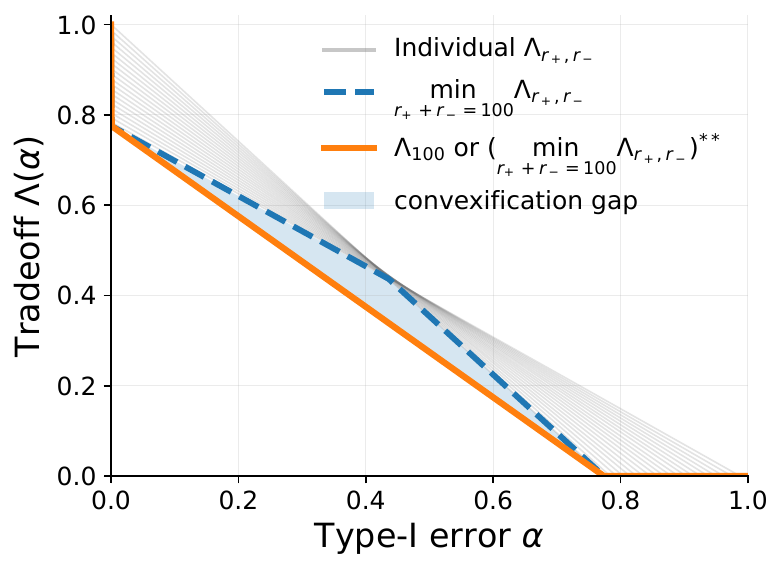}
    \caption{\small Minimum over decomposed tradeoff functions vs.\ the tradeoff function for the non-decomposed relation($\approx_{100}$). $\Lambda_{r_+, r_-}$: subsampled Gaussian ($\gamma=0.00256$, $\sigma=0.05$).}  
    \label{fig:convexification_tf}
\end{wrapfigure}
\emph{Numerical composition and leveraging the DP literature:} In contrast to the primal perspective, representing mechanisms via their privacy profiles enables efficient, algorithm-agnostic numerical accounting, e.g., through convolution of privacy-loss distributions~\citep{sommer2018privacy,koskela2020fft}, thereby yielding tight privacy guarantees for complex, heterogeneous mechanisms. Moreover, the dual perspective allows us to directly leverage existing analyses of mechanisms relevant to machine learning. For example, \citet{schuchardt2024unifiedmechanismspecificamplificationsubsampling} derive tight dominating pairs for the subsampled Gaussian mechanism underlying DP-SGD under additions/removals.
\emph{Tightness.} For a collection of tradeoff functions $\Lambda_i$ corresponding to $\delta_i(\cdot)$, any tradeoff function $f$ such that $\mathcal M$ is $f-$DP for $\approx_\rho$ satisfies $f \le \min_i \Lambda_i$, with strict inequality in general (Figure~\ref{fig:convexification_tf}). Therefore, the primal perspective of~\citet{lyu2024adaptive} yields provably weaker guarantees than decomposition-based analysis in Theorem~\ref{thm:main_theorem}. Moreover, if each $\delta_i(\cdot)$ is attained by a pair of datasets, the resulting guarantees are black-box tight (Lemma~\ref{app:lemma:tightness})\todo{Write this as Theorem}.

These properties enable a modular design of provably robust classifiers from simpler components without losing the guarantees of RS. The next section describes how to operationalize this framework and instantiate it in different settings.

%% file: content/design_robust_TTP.tex
We adopt a modular pipeline separating \emph{threat modeling}, \emph{mechanism design}, and \emph{privacy accounting}. Given a threat model, we (i) represent the mechanism as a stack of simpler randomized components with tractable dominating pairs with respect to the neighboring relation induced by the threat model, (ii) discretize and (iii) compose the dominating pairs to obtain a discrete dominating pair for the overall mechanism, and (iv) derive privacy profiles to certify robustness via Theorem~\ref{thm:main_theorem} (see \autoref{fig:pipeline}). As discussed in Section~4, we decompose the neighboring relation into subsets when needed for tighter analysis. Numerical privacy accounting, i.e., step~(ii), has been studied extensively~\cite{choquettechoo2024privacyamplificationmatrixmechanisms, feldman2026efficientprivacylossaccounting, ghazi2022fasterprivacyaccountingevolving, doroshenko2022connectdotstighterdiscrete, koskela2021tightdifferentialprivacydiscretevalued, koskela2021computingdifferentialprivacyguarantees, gopi2021numericalcompositiondifferentialprivacy, DBLP:journals/corr/abs-2106-08567, koskela2020fft, sommer2018privacy, Meiser2018TightOB}. In our implementation, we use the method of~\citet{doroshenko2022connectdotstighterdiscrete}, as implemented in the Google Differential Privacy library~\cite{google_dp_accounting}, to obtain an upper bound on the composed privacy profile. We illustrate these steps through the following examples.

\paragraph{Example 1: Poisoning Certification using DP-SGD}
\emph{(a) Threat model and mechanism:}
To certify against poisoning attacks where an adversary can add or remove up to $R$ training examples from $\vect X_{\text{train}}$, we employ DP-SGD training. Each iteration of DP-SGD involves applying Poisson subsampling with rate $\gamma$ to get a minibatch, i.e., including each training record i.i.d. with probability $\gamma$. This is followed by adding Gaussian noise $z\sim \mathcal N(0, C^2\sigma^2)$ to the gradients evaluated on the minibatch and clipped in $\ell_2$ norm to $C$. Let $\mathcal M_i$ denote the randomized mechanism at iteration $i$. After $I$ iterations, the composed mechanism $\mathcal M_{\text{train}} := \mathcal M_I \circ \cdots \circ \mathcal M_1$ induces a distribution $\mathcal M_{\text{train}}(\vect X_{\text{train}})$ over learned parameters $w$. \emph{(b) Decomposition of the neighboring relation:}
We decompose the relation $\approx_R$ into $\{\approx_{r_+, r_-}\}_{r_+ + r_- = R}$ corresponding to $r_+$ additions and $r_-$ removals. This decomposition is advantageous as it admits tractable~\cite[Theorem~3.8]{schuchardt2024unifiedmechanismspecificamplificationsubsampling} and tight dominating pairs in the black-box sense~\cite[Theorem~M.4]{schuchardt2024unifiedmechanismspecificamplificationsubsampling}. Specifically, latter implies that \autoref{thm:main_theorem} achieves a tight certificate for DP-SGD in the black-box sense (see Lemma~\ref{app:lemma:tightness})\emph{(c) Dominating pairs:}
For each mechanism $\mathcal M_i$, the dominating pair $(P_{r_+,r_-}, Q_{r_+, r_-})$ is given by
\[
P_{r_+,r_-} = \sum_{i=0}^{r_-} \binom{r_-}{i}\gamma^i(1-\gamma)^{r_- - i}\mathcal N(i,\sigma^2),
\quad
Q_{r_+,r_-} = \sum_{j=0}^{r_+} \binom{r_+}{j}\gamma^j(1-\gamma)^{r_+ - j}\mathcal N(-j,\sigma^2).
\]
\paragraph{Example 2: Joint Certification using DP-SGD and Inference-time Gaussian Noise}
\emph{(a) Threat model and mechanism:} We consider a joint threat model where an adversary can add or remove up to $R$ training examples and perturb the test input by at most $\delta$ in $\ell_2$ norm. Building on the DP-SGD training mechanism $\mathcal M_{\text{train}}$ defined above, we apply an additional Gaussian mechanism at inference time, $\mathcal G_\sigma(\vect x_{\text{test}}) := \mathcal N(\vect x_{\text{test}}, \sigma^2 \mathbb I)$, resulting in $\mathcal M(\vect X_{\text{train}}, \vect x_{\text{test}})
=
\bigl(
w \sim \mathcal M_{\text{train}}(\vect X_{\text{train}}),
\;
\tilde{\vect x} \sim \mathcal G_\sigma(\vect x_{\text{test}})
\bigr)$. \emph{(b) Decomposition of the neighboring relation:} We extend the decomposition to the joint relation $\approx_{R,\delta}$ as $\{\approx_{r_+, r_-, \delta}\}_{r_+ + r_- = R}$, where the training component is handled as before, while the $\ell_2$ perturbation is handled directly without further decomposition, as the worst-case HS divergence is attained by a pair of inputs at distance $\delta$. \emph{(c) Dominating pairs:} The training component uses $(P_{r_+,r_-}, Q_{r_+,r_-})$ as defined above, while the Gaussian mechanism admits the dominating pair $(\mathcal N(0,\sigma^2), \mathcal N(\delta,\sigma^2))$. These are composed to obtain a dominating pair for the joint mechanism.
\paragraph{Example 3: Composing DPA with an arbitrary Mechanism}
We reinterpret DPA~\cite{levine2021dpa} as a randomized training mechanism, derive its privacy profile, and an analytical tradeoff function for a neighboring relation induced by up to $R$ changes to the dataset (deletion, insertion, or substitution). We further extend this characterization to the composition of DPA with an arbitrary base mechanism, such as inference-time Gaussian noise. We formalize these in Theorem~\ref{thm:DPA_compose_base}, and discuss proofs and details in \autoref{App:Sub_Sec:DPA}.

\begin{theorem}
\label{thm:DPA_compose_base}
Let $\mathcal D_N$ denote the DPA mechanism with $N$ partitions, and let $\mathcal B$ be any inference-time mechanism on $\mathbb X$ that is $f_b$-DP with privacy profile $\delta_b(\epsilon)$ for a neighboring relation $\approx_b$. Let $\approx_R$ be the relation corresponding to upto $R$ changes in the dataset, define the combined relation as $\approx_c$ as $(\vect X_{\text{train}}, \vect x_{\text{test}}) \approx_c (\vect {X^{'}_{\text{train}}}, \vect {x^{'}}_{\text{test}}) \iff \vect X_{\text{train}} \approx_R \vect {X^{'}}_{\text{train}} \;\; \land \;\; \vect x_{\text{test}} \approx_b \vect{x^{'}}_{\text{test}}$. Then the composed mechanism $\mathcal B \circ \mathcal D_N$, defined on $\mathcal \bigcup_{i = 1}^{N} T_i \times \mathbb X$ is $f_c$-DP with privacy profile $\delta_c(\epsilon)$ for the neighboring relation $\approx_c$, where $\delta_c(\epsilon)
=
\frac{R}{N}
+
(1 - \frac{R}{N})\,\delta_b(\epsilon)$, and 
\[
f_c(\alpha) =
\begin{dcases}
\Bigl(1 - \frac{R}{N}\Bigr)\,
f_b\!\left(\frac{\alpha}{1 - \frac{R}{N}}\right),
& \text{if } \alpha \le 1 - \frac{R}{N}, \\[6pt]
0, & \text{otherwise}.
\end{dcases}
\]
\end{theorem}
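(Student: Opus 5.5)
Our approach is to model DPA as a randomized training mechanism and reduce everything to a mixture over a uniformly random partition index. Concretely, $\mathcal D_N$ deterministically splits $\vect X_{\text{train}}$ into $N$ disjoint partitions via a fixed hash of each sample, and then outputs the base model trained on a uniformly chosen partition $J \sim \mathrm{Unif}\{1,\dots,N\}$ (this is the randomized reading of DPA's majority vote, whose smoothed class probabilities are exactly the vote fractions). Since the hash is deterministic and per-sample, any $R$ changes (insertion, deletion, substitution) affect at most $R$ partitions. Let $S$ denote this set of affected indices, with $|S| \le R$. Conditioned on $J \notin S$, the trained model is identical under $\vect X_{\text{train}}$ and $\vect X'_{\text{train}}$. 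Conditioned on $J\in S$, nothing can be said.

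\textbf{Privacy profile.} First we write, for $(\vect X_{\text{train}},\vect x)\approx_c(\vect X'_{\text{train}},\vect x')$,
\[
P=\mathcal B\circ\mathcal D_N(\vect X_{\text{train}},\vect x)=\tfrac{|S|}{N}P_S+\bigl(1-\tfrac{|S|}{N}\bigr)P_0,\qquad Q=\tfrac{|S|}{N}Q_S+\bigl(1-\tfrac{|S|}{N}\bigr)Q_0,
\]
where $P_0,Q_0$ share the same model distribution and differ only through $\mathcal B(\vect x)$ versus $\mathcal B(\vect x')$. Here we need $\mathcal B$'s privacy to hold conditionally on every fixed model, i.e.\ $\mathcal B(\cdot\mid m,\cdot)$ is $f_b$-DP for each $m$. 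Post-processing/mixing over $m$ then gives $D^{\mathrm{HS}}_{e^\varepsilon}(P_0\|Q_0)\le\delta_b(\varepsilon)$. Joint convexity of the hockey-stick divergence, together with the trivial bound $D^{\mathrm{HS}}\le 1$ on the $S$ part, yields $D^{\mathrm{HS}}_{e^\varepsilon}(P\|Q)\le \frac{|S|}{N}+(1-\frac{|S|}{N})\delta_b(\varepsilon)$. This is increasing in $|S|$, so the bound becomes $\frac RN+(1-\frac RN)\delta_b(\varepsilon)$.

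For equality, which is needed for the ``is $\delta_c$'' claim, we exhibit datasets where the $R$ changes hit $R$ distinct partitions and make those models disjointly supported from all others. Choosing $\vect x,\vect x'$ attaining $\delta_b$ (or approaching the supremum) then makes the divergence split additively across the disjoint supports.

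\textbf{Tradeoff function.} Rather than recompute via Neyman--Pearson, we use the Dual to Primal conversion of Theorem~\ref{thm:primal_dual_asymmetric}. Write $q=1-\frac RN$; then
\[
f_c(\alpha)=\sup_\varepsilon e^{-\varepsilon}\bigl(1-(1-q)-q\delta_b(\varepsilon)-\alpha\bigr)=q\sup_\varepsilon e^{-\varepsilon}\bigl(1-\delta_b(\varepsilon)-\tfrac{\alpha}{q}\bigr).
\]
Applying the same conversion to $f_b$ gives $q f_b(\alpha/q)$ whenever $\alpha/q\le 1$. For $\alpha>q$, the bracket is $\le 1-\alpha/q<0$ for every $\varepsilon$, and letting $\varepsilon\to\infty$ drives the supremum to $0$ (tradeoff functions are nonnegative, so $f_c=0$ there).

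One must also check that the resulting function is the optimal tradeoff function of Definition~\ref{def:optimal_tradeoff_function}, so that the theorem applies. This follows since $f_b$ is optimal and the map $f\mapsto qf(\cdot/q)$ extended by $0$ is the $\delta\mapsto (1-q)+q\delta$ transform in the primal.

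\textbf{Main obstacle.} The delicate step is the tightness/equality part. Attaining $\frac RN$ requires the adversary to be able to affect $R$ distinct partitions and produce perfectly distinguishable models, while simultaneously realizing the worst case of $\mathcal B$ \emph{for the models produced by the clean partitions}. The latter presupposes that $\mathcal B$'s worst-case pair is model-independent. If that fails, I would state the profile as a tight upper bound (a dominating pair $(\frac RN\,\mathbf 1_{a}+q P_b,\ \frac RN\,\mathbf 1_{b'}+qQ_b)$ with disjoint atoms) and note that the $f_c$ formula follows from this dominating pair in any case.
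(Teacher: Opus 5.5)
Your proof is correct, and its second half is the paper's. Both arguments plug $\delta_c$ into the Dual-to-Primal conversion of Theorem~\ref{thm:primal_dual_asymmetric}, pull out $q=1-\frac{R}{N}$, and let $\varepsilon\to\infty$ when $\alpha>q$.

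The privacy-profile half takes a different route.
\begin{itemize}
\item \emph{The paper} works with privacy loss distributions. It takes the DPA loss $Z_d$ to be $0$ with probability $1-\frac{R}{N}$ and $+\infty$ with probability $\frac{R}{N}$, and assumes $Z_d$ is independent of the base loss $Z_b$. It then finds that $Z_d+Z_b$ has CDF $(1-\frac{R}{N})\Phi_b$ plus an enlarged atom at $+\infty$, and evaluates $\mathbb E[(1-e^{\varepsilon-Z_c})_+]$. The supremum over $r\le R$ is handled by monotonicity.
\item \emph{You} split both output distributions into affected and unaffected partition components. The upper bound comes from joint convexity of $D^{\mathrm{HS}}$ together with $D^{\mathrm{HS}}\le 1$. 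Equality comes from an explicit witness in which the affected components sit on atoms charged by only one distribution, so the positive part of $P-e^{\varepsilon}Q$ splits additively.
\end{itemize}
Your route is more elementary and somewhat more general: the upper bound needs no product structure, so a model-dependent $\mathcal B$ that is uniformly $\delta_b$-private is also covered. It also makes explicit what the paper's DPA loss silently assumes, namely that some neighboring pair really changes $R$ distinct partitions with disjointly supported outputs. The paper's route buys a one-shot exact computation in PLD language, which matches its numerical accounting pipeline.

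Three smaller remarks.
\begin{itemize}
\item \emph{The optimality check is unnecessary.} $\mathcal B\circ\mathcal D_N$ has an optimal tradeoff function by Theorem~\ref{th:optimal_tradeoff_function}, and the conversion theorem identifies it with the transform of $\delta_c$. Your sketched justification for that step is heuristic in any case.
\item \emph{Your ``main obstacle'' disappears under the paper's modeling.} In Theorem~\ref{App:Thm:DPA}, $\mathcal D_N$ outputs the sampled partition itself, with training absorbed into the hypothesis tests. Affected partitions are therefore automatically distinct atoms. If the output were the trained model instead, you could not force disjoint supports: a constant trainer yields profile $\delta_b$, not $\delta_c$. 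Moreover, the statement has $\mathcal B$ acting on $\mathbb X$ alone, so its worst case is model-independent. No fallback to a dominating-pair formulation is needed.
\item \emph{Substitutions can touch two partitions.} Your claim that $R$ insertions, deletions or substitutions affect at most $R$ partitions fails under a content hash, because the removed and inserted points can land in different partitions. The claim holds if a substitution counts as two changes (symmetric difference, as in DPA), or if the partition key is preserved by substitutions. The paper's DPA loss makes the same implicit assumption.
\end{itemize}
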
\todo{Add intuition here: This allows us to use RS for evasion literature and combine it efficiently with DPA to get joint certification. Talk about the separation of results.}
These instantiations demonstrate how our framework systematically reduces joint training–test robustness certification to privacy accounting of composable randomized mechanisms.

%% file: content/experiments.tex
We evaluate two settings: \emph{(i) certification against poisoning attacks} and \emph{(ii) joint training--test certification against backdoor attacks}. Our training procedures combine selected components from sub-sampling (batching), additive Gaussian noise on training examples, DP-SGD, and DPA. We instantiate our framework on image classification for CIFAR10~\cite{krizhevsky2009learning} and MNIST~\cite{lecun1998gradient}. In \emph{(i)}, we show improved adversarial accuracy for our analysis of DP-SGD compared to prior work~\cite{liu2023enhancing}. In \emph{(ii)}, we show the utility of our framework for certifying novel threat models. We provide experiment setup details in \autoref{sec:experimental_details}.

\vspace{-0.2cm}
\subsection{Poisoning Certification}
\vspace{-0.1cm}
We consider the training-perturbation threat model that allows $R$ changes in the training data, as discussed in Section~\ref{sec:design_robust_TTP}. We compare against DPA and RDP-based accounting for DP-SGD. However, the group-privacy analysis of \citet{liu2023enhancing} does not account for the increase in sensitivity with the radius $R$. We therefore also compare against a "corrected" RDP-based accounting that we derive in \autoref{App:Sec:Corrected_RDP_group}. We report results for $8$ epochs of DP-SGD and indicate the number of DPA partitions in Figures~\ref{fig:certacc_epoch_8_CIFAR},~\ref{fig:certacc_epoch_8_MNIST}. The degradation in certification with the number of epochs is shown in \autoref{app:more}. We also observe that, for the large $\frac{\sigma}{\gamma}$ values considered here, the subsampled Gaussian behaves closely to a Gaussian with effective standard deviation $\frac{\sigma}{\gamma}$; we denote this as the Gaussian approximation in the figures.
\begin{figure*}[ht]
    \centering
    \begin{minipage}{0.44\textwidth}
        \centering
        \includegraphics[width=\textwidth]{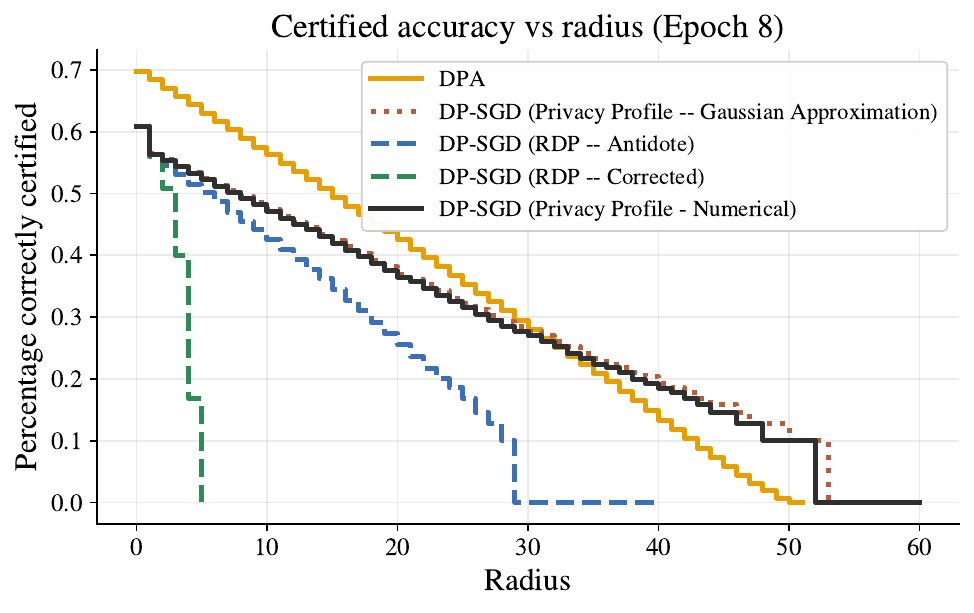}
        \caption{\small{Comparison of certified accuracies of our method (Privacy Profile -- Numerical) for DP-SGD with epoch 8, with RDP-based accounting and DPA using $100$ partitions for CIFAR-10.}}
        \label{fig:certacc_epoch_8_CIFAR}
    \end{minipage}
    \hfill
    \begin{minipage}{0.52\textwidth}
        \centering
        \includegraphics[width=\textwidth]{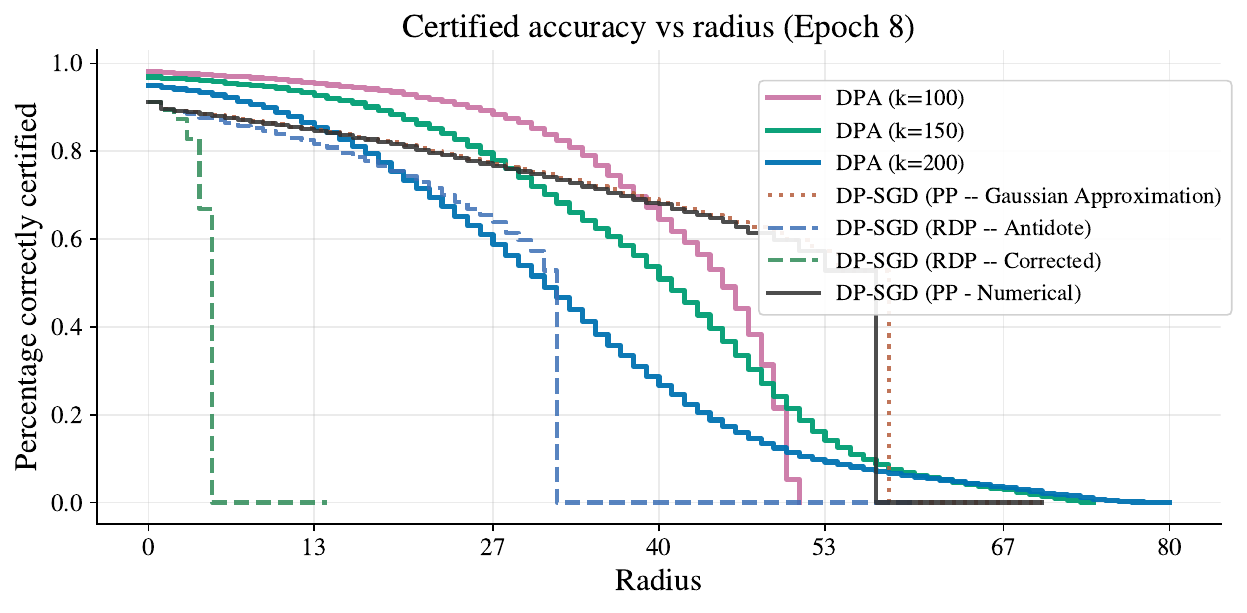}
        \caption{\small {Comparison of certified accuracies of our method (Privacy Profile -- Numerical) for DP-SGD with epoch 8, with RDP-based accounting and DPA using $100, 150, 200$ partitions for the MNIST dataset.}}
        \vspace{-0.1cm}
        \label{fig:certacc_epoch_8_MNIST}
    \end{minipage}
\end{figure*}

\vspace{-0.3cm}
\subsection{Joint Robustness Certification}
We consider two training threat models: \emph{(i)} poisoning via up to $R$ training-example additions or deletions, and \emph{(ii)} perturbations of up to $\delta_{\text{train}}$ in $\ell_2$ norm applied to at most $R$ training examples. Each is combined with an $\ell_2$ perturbation of size $\delta_{\text{test}}$ applied to the test image at inference time. To certify against \emph{(ii)}, we add noise to the training samples and use Theorem~\ref{thm:amplification_random_preprocessiing} to upper bound the privacy profile. We show results for the first threat model in Figures~\ref{fig:certified-accuracy-deltas_cifar_},~\ref{fig:certified-accuracy-deltas_mnist}, and for the second in Figure~\ref{fig:certified-accuracy-deltas}.
\begin{figure}[ht]
    \centering
    \includegraphics[width=\linewidth]{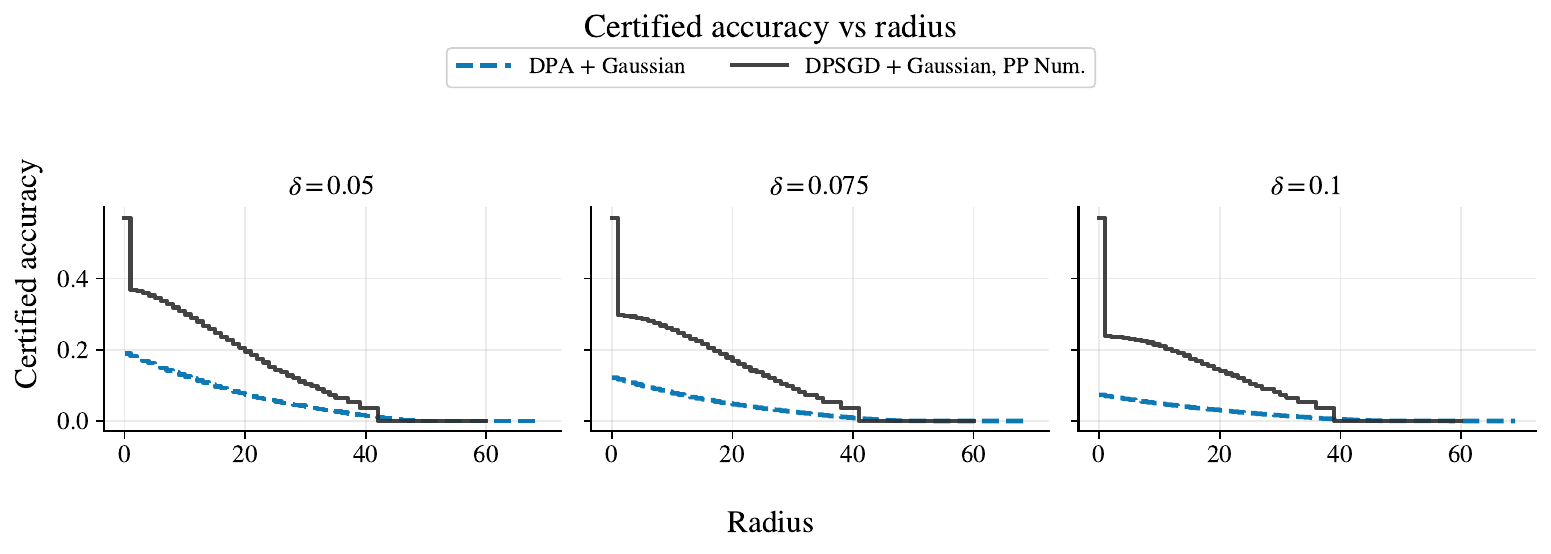}
    \caption{
    Certified accuracy vs.\ number of additions/deletions in the training dataset (\emph{Radius}) for different inference perturbations $\delta$ for the CIFAR-10 dataset. We compare DPA trained on $100$ partitions with $10$ epochs of DPSGD.}
    \label{fig:certified-accuracy-deltas_cifar_}
\end{figure}
\begin{figure}[ht]
    \centering
    \includegraphics[width=0.65\linewidth]{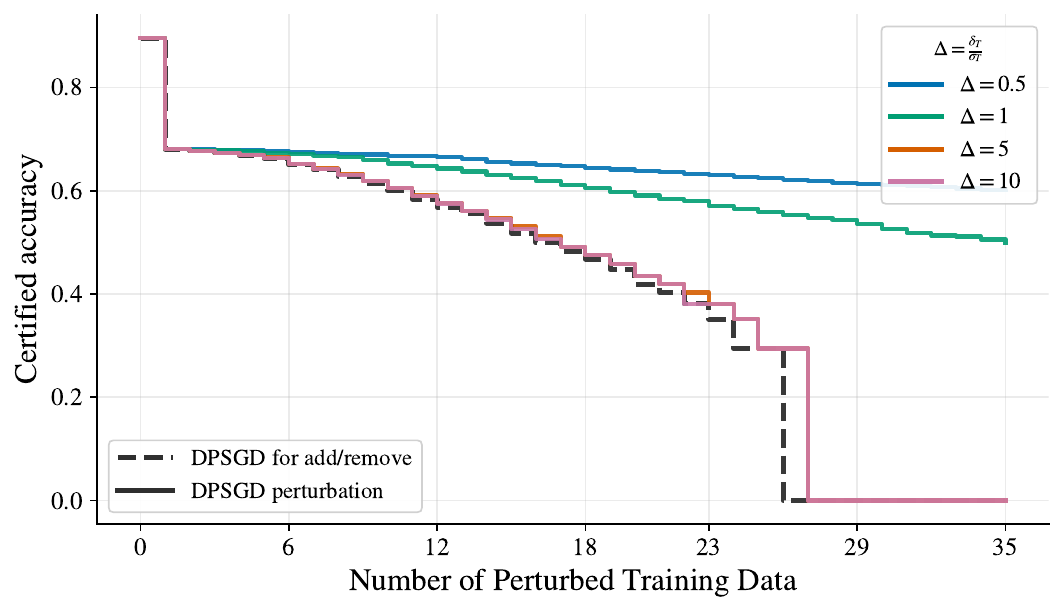}
    \caption{
Certified accuracy vs.\ the number of training examples that can be perturbed up to $\ell_2$ magnitude $\delta$, together with an inference perturbation of size $\delta_{\text{test}} = 0.2$. The plots are for MNIST models trained with DP-SGD and input Gaussian noise $\sigma = 0.2$ for $8$ epochs. Each plot corresponds to a single choice of $\delta/\sigma$.}
    \label{fig:certified-accuracy-deltas}
\end{figure}%
\begin{figure}[ht]
    \centering
    \includegraphics[width=0.85\linewidth]{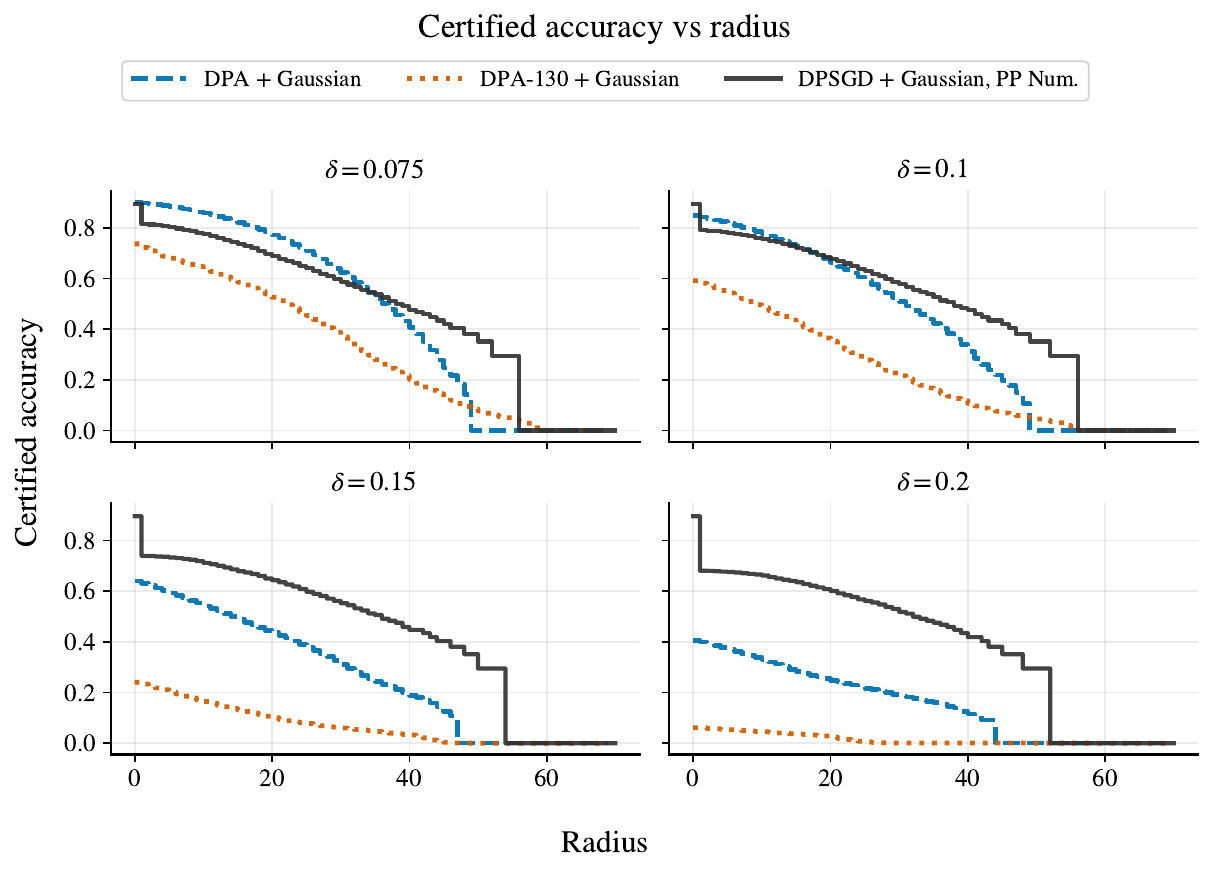}
    \caption{
    Certified accuracy vs.\ number of additions/deletions in the training dataset (\emph{Radius}) for different inference perturbations $\delta$ for the MNIST dataset. We compare DPA trained on $100$, and $130$ partitions with $8$ epochs of DPSGD.}
    \label{fig:certified-accuracy-deltas_mnist}
\end{figure}

\vspace{-0.1cm}

%% file: content/Appendix.tex
\section{Broader impacts}\label{sec:broader_impact}
This work advances theoretical foundations for certifying robustness against backdoor attacks in machine learning systems. By establishing provable guarantees against joint training-time poisoning and inference-time perturbations, our framework enables safer deployment of ML models in security-critical applications. While provable robustness may reduce model accuracy or increase computational costs, we believe these trade-offs can be mitigated long-term. Overall, our work contributes to the development of more secure and trustworthy AI systems in the long-term.

\section{Experimental details}\label{sec:experimental_details}
\paragraph{Models and Training.} For both DPA- and DP-SGD-based training, we use a ResNet18~\cite{he2016resnet} pretrained on ImageNet1K~\cite{deng2009imagenet} and available in PyTorch~\cite{paszke2019pytorch}. We used Opacus~\cite{opacus} for private training. To use the same pretrained model on MNIST, we replicate the single channel across three channels. To improve the utility of DP-SGD-trained models, we adopt the data-augmentation strategy with multiplicity $50$ proposed by \citet{de2022unlockinghighaccuracydifferentiallyprivate}. For all DP-SGD training, we use Gaussian noise with $\sigma = 3$, clipping norm $1$, and batch size $128$, and correspondingly to sub-sampling rates ($\gamma$) of $\frac{128}{50000}$ for CIFAR-10 and $\frac{128}{60000}$ for MNIST. We do not replace the BatchNorm layer with alternatives commonly used in the differential privacy literature, such as GroupNorm. Instead, we freeze the BatchNorm layers so that normalization uses the statistics obtained during pre-training. This approach correctly accounts for privacy while preserving utility. In our experiments, it significantly outperforms GroupNorm in terms of model accuracy. All models are trained on a single NVIDIA A100 GPU.
We compute a probabilistic lower/upper bound on class probabilities using Clopper-Pearson intervals for Binomials as done in \citet{cohen2019randomized}.

\textbf{DPA training details.} We train ResNet18 using Adam with a learning rate of \(0.01\), momentum \(0.9\), and weight decay \(5 \times 10^{-4}\). The training batch size is \(128\), while the inference batch size is \(300\). Models are trained for a maximum of \(400\) epochs with early stopping after \(100\) epochs without improvement on the training accuracy. We use a cosine learning rate scheduler and initialize models from pretrained weights. During training the individual classifiers, we add the same amount of Gaussian noise to the images that we use to smooth the classifier at inference. The training procedure itself is deterministic and follows the instructions described in \citep{levine2021dpa}.

\section{Equivalence of \texorpdfstring{$f$-DP}{f-DP} and Privacy Profiles}
\label{Appendix:sec:fdp_privacy_profile_equivalence}

In this section, we describe the equivalence between the primal ($f$-DP) and dual (privacy profile) formulations of differential privacy. We adopt a hypothesis testing viewpoint, which makes the shared geometric structure of the two formulations explicit.

\paragraph{Hypothesis Testing Viewpoint.}
Let $P$ and $Q$ be probability distributions on a space $\mathcal Y$. A (possibly randomized) hypothesis test
\[
H : \mathcal Y \to \{0,1\}
\]
aims to distinguish whether a sample $y \in \mathcal Y$ is drawn from $P$ or $Q$. We interpret $H(y)=1$ as deciding $Q$, and $H(y)=0$ as deciding $P$.

The performance of a test $H$ is characterized by its \emph{Type~I} and \emph{Type~II} errors:
\[
\alpha(H) := \mathbb P_{y \sim P}[H(y)=1],
\qquad
\beta(H) := \mathbb P_{y \sim Q}[H(y)=0].
\]

\paragraph{The Error Region.}
As $H$ ranges over all hypothesis tests, the set of achievable error pairs
\[
\mathcal E(P,Q)
\;:=\;
\bigl\{(\alpha(H),\beta(H)) : H \text{ is a hypothesis test}\bigr\}
\subseteq [0,1]^2
\]
forms a convex subset of the unit square. This follows because any convex combination of Type~I and Type~II error pairs can be realized by a corresponding convex combination of hypothesis tests, which is itself a valid hypothesis test.

\paragraph{Two Equivalent Descriptions.}
The region $\mathcal E(P,Q)$ admits two equivalent descriptions.

\begin{itemize}
    \item \emph{Primal (boundary) description.}  
    The lower boundary of $\mathcal E(P,Q)$ gives the smallest achievable Type~II error for each Type~I error. This defines the \emph{tradeoff function}
    \[
    \Lambda(P \,\|\, Q)(\alpha)
    \;=\;
    \inf_{H:\,\alpha(H)=\alpha} \beta(H),
    \]
    which underlies $f$-differential privacy.

    \item \emph{Dual (supporting hyperplane) description.}  
Alternatively, $\mathcal E(P,Q)$ can be characterized via the supporting hyperplanes of its lower boundary. In particular, for a slope $-e^{\epsilon}$, the supporting hyperplane has intercept
\[
1 - \delta(\epsilon) = - f^*(-e^{\epsilon}),
\]
where $f^*$ denotes the convex conjugate of the tradeoff function $f$. As shown in \cite{dong2019gaussian}, this characterization coincides with the privacy profile when the neighboring relation is \emph{symmetric}. 
\end{itemize}

In this work, we relax this assumption and extend this perspective to general \emph{asymmetric} neighboring relations. We first define the notion of optimal tradeoff function for a given mechanism $\mathcal M$ with respect to a neighboring relation $\sim$. Following, we show that it's an equivalent description of privacy to the privacy profile of $\mathcal M $ for the opposite relation $\sim_{op}$ defined by $X \sim_{\mathrm{op}} X'$ if and only if $X' \sim X$. Alternatively, the privacy profile of the mechanism $\mathcal M$ for $\sim$ is related to $f^{-1}$, via convex conjugation. Here $f^{-1}$ is left-continuous inverse of a non-increasing function $f$,

\begin{equation}
\label{eq:left_continuou_inverse}
    f^{-1}(\alpha)
:=
\inf\bigl\{ t \in [0,1] : f(t) \le \alpha \bigr\},
\end{equation}

Finally, we show that if the underlying neighboring relation $\sim$ is symmetric, the conversion between $f-$DP and privacy profile reduces to the one presented in the \citet{dong2019gaussian}. We restate this equivalence as Corollary~\ref {crl:symmetric_tradeoff_function}.

\paragraph{Optimal Tradeoff Function:} Let $\mathcal F$ be the space of tradeoff functions, i.e., by Proposition $2.2$ in \cite{dong2019gaussian}, 
\[
\mathcal F := \{f : [0,1] \to [0,1] \; \vert \; f \text{ is convex, continuous, non-increasing, and  } f(x) \leq 1 - x \}
\]
\begin{definition}(Optimal Tradeoff Function)
    \label{def:optimal_tradeoff_function_appendix}
    Given a mechanism $\mathcal M$, and neighboring relation $\sim$, we say that $f \in \mathcal F$ is the optimal tradeoff function if $\mathcal M$ is $f-$DP w.r.t., $\sim$, and for any $f{'} \in \mathcal F$ if $\mathcal M$ is $f^{'}-$DP w.r.t., $\sim$, then $f^{'} \leq f$.    
\end{definition}
 We characterize the optimal tradeoff function via Theorem \ref{th:optimal_tradeoff_function} and finally establish its connection to privacy profile in Theorem \ref{thm:primal_dual_asymmetric}. We first state the supporting lemmas needed to prove the main results, then state and prove the main results, and finally prove the supporting lemmas.

\subsection{Auxiliary Lemmas}
\paragraph{Convention:} For all the results that follow, whenever we write $f^*$, or $f^{**}$, it is implicitly implied that the convex conjugation is applied to the $\infty-$extension of the function $f$.
\begin{lemma}
\label{lemma:epigraph_monotonicity}
Let $\mathbb{X} \subseteq \mathbb{R}$ and $f : \mathbb{X} \to \mathbb{R}$. Define the property $(\mathcal P_\mathbb{X})$ for a set $A \subseteq \mathbb{X} \times \mathbb{R}$ by
\begin{equation}
\label{eq:rightinclusion}
(\mathcal P_\mathbb{X}) \qquad 
\forall (x,t) \in A,\ \forall \hat{x} \in \mathbb{X} \text{ with } \hat{x} \ge x:\ (\hat{x}, t) \in A.
\end{equation}
Then $f$ is monotonically non-increasing if and only if $\operatorname{epi}(f)$ satisfies $(\mathcal P_\mathbb{X})$.
\end{lemma}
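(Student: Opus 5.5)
We prove the two implications separately, directly from the definitions; only elementary order reasoning is needed. Recall that $\operatorname{epi}(f) = \{(x,t) \in \mathbb X \times \mathbb R : f(x) \le t\}$, so membership $(x,t) \in \operatorname{epi}(f)$ is exactly the inequality $f(x) \le t$. Both directions then reduce to comparing values of $f$ at ordered points.

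\emph{($\Rightarrow$).} Assume $f$ is non-increasing. Take any $(x,t) \in \operatorname{epi}(f)$ and any $\hat x \in \mathbb X$ with $\hat x \ge x$. Monotonicity gives $f(\hat x) \le f(x)$, and membership gives $f(x) \le t$. Chaining these, $f(\hat x) \le t$, i.e.\ $(\hat x, t) \in \operatorname{epi}(f)$. Hence $\operatorname{epi}(f)$ satisfies $(\mathcal P_{\mathbb X})$.

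\emph{($\Leftarrow$).} Assume $\operatorname{epi}(f)$ satisfies $(\mathcal P_{\mathbb X})$, and fix $x \le \hat x$ in $\mathbb X$. The point $(x, f(x))$ is trivially in $\operatorname{epi}(f)$, since $f$ is real-valued and $f(x)$ is finite. Applying $(\mathcal P_{\mathbb X})$ yields $(\hat x, f(x)) \in \operatorname{epi}(f)$, i.e.\ $f(\hat x) \le f(x)$. Thus $f$ is non-increasing.

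There is no genuine obstacle here. The only point to check is the choice of test point in the reverse direction: taking $t = f(x)$ needs $f(x)$ to be a real number, so that $(x, f(x))$ is a legitimate point of $\mathbb X \times \mathbb R$ lying in the epigraph. This is guaranteed by the hypothesis $f : \mathbb X \to \mathbb R$. (For the $\infty$-extensions used later in the appendix, the same argument goes through on the effective domain, but that is not needed for the statement as given.)
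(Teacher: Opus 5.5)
Your proof is correct and follows the paper's argument essentially line for line: the forward direction chains $f(\hat x) \le f(x) \le t$, and the reverse direction applies $(\mathcal P_{\mathbb X})$ to the point $(x, f(x))$ to conclude $f(\hat x) \le f(x)$.
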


\begin{lemma}
\label{lemma:convex_closure_preserves_right_inclusion}
Let $\mathbb{X} \subseteq \mathbb{R}$ be a closed interval and let $f : \mathbb{X} \to \mathbb{R}$. If $\operatorname{epi}(f)$ satisfies $(\mathcal P_\mathbb{X})$, then $\operatorname{cl}(\operatorname{conv}(\operatorname{epi}(f)))$ also satisfies $(\mathcal P_\mathbb{X})$.
\end{lemma}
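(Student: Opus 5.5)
We argue directly on epigraphs: first establish that $\operatorname{conv}(\operatorname{epi}(f))$ has the right-inclusion property $(\mathcal P_\mathbb{X})$, and then pass to the closure.

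\emph{Convex hull.} Let $(x,t) \in \operatorname{conv}(\operatorname{epi}(f))$ and $\hat x \in \mathbb{X}$ with $\hat x \ge x$. Write $(x,t)=\sum_{k=1}^m \lambda_k (x_k,t_k)$ as a finite convex combination of points $(x_k,t_k)\in\operatorname{epi}(f)$, so $x_k \in \mathbb{X}$ and $t_k \ge f(x_k)$. Set $s := \hat x - x \ge 0$ and define shifted points $\hat x_k := x_k + s$. A shift may leave $\mathbb{X}=[a,b]$, so we instead use $\hat x_k := \min(x_k + s, b)$. Then $\hat x_k \ge x_k$, and $(\mathcal P_\mathbb{X})$ for $\operatorname{epi}(f)$ gives $(\hat x_k, t_k) \in \operatorname{epi}(f)$. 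However, $\sum_k \lambda_k \hat x_k$ may be smaller than $\hat x$ because of the truncation.

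To repair this, we use the fact that $\hat x \le b$ and that the point $(b, \max_k t_k)$ lies in $\operatorname{epi}(f)$; indeed $(x_k,t_k)$ moved right to $b$ stays in the epigraph. We avoid a ``max'' on the $t$-coordinate by moving each point separately, which keeps the $t$-coordinate unchanged. The map $u \mapsto \sum_k \lambda_k \min(x_k+u, b)$ is continuous and nondecreasing on $[0, b-\min_k x_k]$. Its value at $u=0$ is $x$ and its value at the right endpoint is $b \ge \hat x$. By the intermediate value theorem there is a $u^\ast$ with $\sum_k \lambda_k \min(x_k+u^\ast,b) = \hat x$. The points $(\min(x_k+u^\ast,b), t_k)$ lie in $\operatorname{epi}(f)$, and their $\lambda$-combination is exactly $(\hat x, t)$. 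Hence $(\hat x,t)\in \operatorname{conv}(\operatorname{epi}(f))$, so the convex hull satisfies $(\mathcal P_\mathbb{X})$.

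\emph{Closure.} Let $(x,t) \in \operatorname{cl}(C)$, where $C := \operatorname{conv}(\operatorname{epi}(f))$, and let $\hat x \ge x$ with $\hat x\in\mathbb{X}$. Take $(x_n,t_n)\in C$ with $(x_n,t_n)\to(x,t)$; since $\mathbb{X}$ is closed, $x\in \mathbb{X}$. Define $\hat x_n := \max(x_n, \hat x)$, which lies in $\mathbb{X}$ because both $x_n,\hat x\in\mathbb{X}$ and $\mathbb{X}$ is an interval. Then $\hat x_n \ge x_n$, so $(\hat x_n, t_n)\in C$ by the previous step. Moreover $\hat x_n \to \max(x,\hat x) = \hat x$. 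Therefore $(\hat x,t)$ is a limit of points of $C$ and lies in $\operatorname{cl}(C)$.

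The only delicate point is the boundary truncation in the convex-hull step. A naive uniform shift of all $x_k$ by $s$ can exit $\mathbb{X}$. The intermediate-value choice of the shift $u^\ast$, with truncation at $b$, is what resolves this. It uses only that $\mathbb{X}$ is an interval with a finite right endpoint. If $\mathbb{X}$ is unbounded to the right, the plain shift $u^\ast = s$ already works.
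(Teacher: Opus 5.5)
Your proof is correct. It has the same two-stage structure as the paper's (convex hull first, then closure), but your convex-hull step takes a different route, and that route actually repairs a flaw in the paper's argument.

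\textbf{Convex-hull step.} The paper picks one index $j$ with $x_j \le x$, replaces $x_j$ by $x_j + \hat x - x$, and leaves all other points fixed. That combination gives $\sum_i \lambda_i \hat x_i = x + \lambda_j(\hat x - x)$, which equals $\hat x$ only when $\lambda_j = 1$ or $\hat x = x$. Rescaling the shift to $(\hat x - x)/\lambda_j$ does not fix a one-point move, because it can leave $\mathbb X$. For example, take $\mathbb X=[0,1]$, $x_1=0$, $x_2=0.9$, $\lambda=(0.1,0.9)$, $x=0.81$, $\hat x=1$; this would require $\hat x_1 = 1.9$. Your truncated uniform shift $u\mapsto\sum_k\lambda_k\min(x_k+u,b)$, combined with the intermediate value theorem, spreads the displacement over all points while keeping each in $[x_k,b]$. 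That is exactly what is needed, and your separate remark on the right-unbounded case is also correct.

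\textbf{A shorter alternative.} Set $b' := \max\{\hat x, x_1,\dots,x_m\}\in\mathbb X$. Then $(b',t_k)\in\operatorname{epi}(f)$ for every $k$, so $(b',t)=\sum_k\lambda_k(b',t_k)\in\operatorname{conv}(\operatorname{epi}(f))$. Since $(\hat x,t)$ lies on the segment from $(x,t)$ to $(b',t)$, convexity finishes the step. This handles the bounded and unbounded cases uniformly.

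\textbf{Closure step.} Your argument matches the paper's idea (eventually $x_n < \hat x$). Using $\max(x_n,\hat x)$ is slightly cleaner, since it needs no separate treatment of $\hat x = x$.
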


\begin{lemma}
    \label{lemma:convex_closure_preserves_monotonicity}
    Let $\mathbb{X}\subseteq\mathbb{R}$ be a closed interval and let $f:\mathbb{X}\to\mathbb{R}$ be monotonically non-increasing.
    Consider the biconjugate of the $+\infty$-extension of $f$ to $\mathbb{R}$, and denote its restriction to $\mathbb{X}$ by $f^{**}\!\mid_\mathbb{X}$.
    Then $f^{**}\!\mid_\mathbb{X} : \mathbb{X}\to\mathbb{R}$ is monotonically non-increasing.
\end{lemma}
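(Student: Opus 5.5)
We prove the claim via the epigraph characterization of Lemma~\ref{lemma:epigraph_monotonicity}. Let $\tilde f:\mathbb R\to\mathbb R\cup\{+\infty\}$ denote the $+\infty$-extension of $f$. By Fenchel--Moreau, $f^{**}$ is the closed convex hull of $\tilde f$, i.e.\ $\operatorname{epi}(f^{**}) = \operatorname{cl}(\operatorname{conv}(\operatorname{epi}(\tilde f)))$. This identity holds whenever $\tilde f$ admits an affine minorant; otherwise $f^{**}\equiv-\infty$, which is trivially non-increasing. Since $\tilde f = +\infty$ outside $\mathbb X$, we have $\operatorname{epi}(\tilde f) = \operatorname{epi}(f) \subseteq \mathbb X\times\mathbb R$. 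Because $\mathbb X$ is a closed interval, closed convex hulls of subsets of $\mathbb X\times\mathbb R$ remain inside $\mathbb X\times\mathbb R$. It follows that
\[
\operatorname{epi}(f^{**}\!\mid_{\mathbb X}) = \operatorname{epi}(f^{**})\cap(\mathbb X\times\mathbb R) = \operatorname{cl}(\operatorname{conv}(\operatorname{epi}(f))).
\]

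The argument then runs in three steps. First, since $f$ is non-increasing, Lemma~\ref{lemma:epigraph_monotonicity} shows that $\operatorname{epi}(f)$ satisfies $(\mathcal P_\mathbb X)$. Second, Lemma~\ref{lemma:convex_closure_preserves_right_inclusion} shows that $\operatorname{cl}(\operatorname{conv}(\operatorname{epi}(f)))$ also satisfies $(\mathcal P_\mathbb X)$. Third, applying the converse direction of Lemma~\ref{lemma:epigraph_monotonicity} to $f^{**}\!\mid_\mathbb X$, using the epigraph identity above, gives monotonicity.

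The third step needs a small adjustment. Lemma~\ref{lemma:epigraph_monotonicity} is stated for real-valued functions, while $f^{**}\!\mid_\mathbb X$ could a priori take the value $-\infty$ or $+\infty$. We avoid this by arguing directly: take $x\le\hat x$ in $\mathbb X$ and any $t$ with $(x,t)\in\operatorname{epi}(f^{**})$. Property $(\mathcal P_\mathbb X)$ gives $(\hat x,t)\in\operatorname{epi}(f^{**})$, so $f^{**}(\hat x)\le t$. Taking the infimum over $t$ yields $f^{**}(\hat x)\le f^{**}(x)$, and this reasoning is valid for extended values as well.

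The main obstacle is justifying the epigraph identity rigorously. Two points need care: Fenchel--Moreau yields the closed convex hull of the epigraph only when a proper affine minorant exists, and we must check that intersecting with $\mathbb X\times\mathbb R$ commutes with taking the closure. For the first point, if $f$ is unbounded below on a bounded $\mathbb X$, then $f^{**}\equiv-\infty$ and the claim is trivial. Otherwise the hull is proper and contained in $\mathbb X\times\mathbb R$, since that set is closed and convex.
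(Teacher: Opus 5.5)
Your proposal is correct and follows the paper's own route. Like the paper, you use the identity $\operatorname{epi}(f^{**})=\operatorname{cl}(\operatorname{conv}(\operatorname{epi}(f)))\subseteq\mathbb X\times\mathbb R$, then Lemma~\ref{lemma:epigraph_monotonicity}, Lemma~\ref{lemma:convex_closure_preserves_right_inclusion}, and the converse direction of Lemma~\ref{lemma:epigraph_monotonicity}. Your affine-minorant caveat and extended-value argument are refinements the paper leaves implicit.

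One small fix to your last paragraph: on a bounded closed interval a non-increasing $f$ is already bounded below by its value at the right endpoint, so the degenerate case $f^{**}\equiv-\infty$ can only arise for unbounded $\mathbb X$ (e.g.\ $f(x)=-x^2$ on $[0,\infty)$). There the right criterion is the absence of an affine minorant, as you correctly stated in your first paragraph, not unboundedness below.
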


\begin{lemma}
    \label{lemma:HS-Div-via-HT}
    Let $P$ and $Q$ be distributions on $Y$, and $\mathcal H$ be the space of randomized hypothesis tests on $Y$. Then the HS divergence between $P$ and $Q$ can be written as the following supremum over $\mathcal H$.

    \begin{equation*}
        D_{\alpha}^{\text{HS}} = \sup_{\phi \in \mathcal H} \bigl(\mathbb E_P(\phi)-e^\varepsilon \mathbb E_Q(\phi)\bigr) 
    \end{equation*}
\end{lemma}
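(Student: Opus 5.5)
We will prove the identity directly by writing both sides as integrals against a common dominating measure and comparing them pointwise. Throughout we write $\alpha = e^{\varepsilon}$, so that the left-hand side is $D^{\mathrm{HS}}_{e^\varepsilon}(P\,\|\,Q) = \int (p - e^{\varepsilon} q)_+\,d\mu$. Here $p, q$ are densities of $P, Q$ with respect to a common $\sigma$-finite measure $\mu$.

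\textbf{Setup.} If $P$ and $Q$ do not already come with densities, take $\mu := P + Q$. Both $P \ll \mu$ and $Q \ll \mu$, so the Radon--Nikodym theorem gives measurable $p = dP/d\mu$ and $q = dQ/d\mu$. The value $\int (p - e^\varepsilon q)_+\,d\mu$ does not depend on the choice of dominating measure, since changing $\mu$ rescales $p$ and $q$ by the same density. A randomized hypothesis test $\phi \in \mathcal H$ is a measurable map $\phi : Y \to [0,1]$. Hence $\mathbb E_P(\phi) = \int \phi\, p\, d\mu$ and $\mathbb E_Q(\phi) = \int \phi\, q\, d\mu$, and therefore
\[
\mathbb E_P(\phi) - e^\varepsilon \mathbb E_Q(\phi) = \int \phi\,(p - e^\varepsilon q)\, d\mu .
\]

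\textbf{Upper bound.} Fix any $\phi \in \mathcal H$. Since $0 \le \phi \le 1$, at every point we have $\phi\,(p - e^\varepsilon q) \le (p - e^\varepsilon q)_+$. Indeed, where $p - e^\varepsilon q \ge 0$ the product is at most $p - e^{\varepsilon}q$, and elsewhere it is $\le 0$. Integrating this pointwise bound and taking the supremum over $\phi$ gives $\sup_{\phi} \bigl(\mathbb E_P(\phi) - e^\varepsilon \mathbb E_Q(\phi)\bigr) \le D^{\mathrm{HS}}_{e^\varepsilon}(P\,\|\,Q)$.

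\textbf{Attainment.} Take the deterministic (Neyman--Pearson type) test $\phi^\star := \mathbbm 1\{y : p(y) > e^\varepsilon q(y)\}$. It is measurable because $p$ and $q$ are, and it lies in $\mathcal H$. With this choice, $\phi^\star (p - e^\varepsilon q) = (p - e^\varepsilon q)_+$ holds pointwise, so integrating yields equality in the upper bound. The supremum is therefore attained and equals the hockey-stick divergence.

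\textbf{Main obstacle.} The computation itself is elementary. The only point needing care is measure-theoretic bookkeeping: ensuring densities exist via $\mu = P+Q$, and that $\int (p - e^{\varepsilon}q)_+\,d\mu$ is finite. Finiteness holds because this integral is bounded by $\int p\, d\mu = 1$, so all the expectations above are well defined.
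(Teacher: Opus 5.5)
Your proof is correct and is essentially the paper's argument: both pass to densities with respect to a common dominating measure ($P+Q$ for you, $\tfrac{P+Q}{2}$ in the paper) and observe that the supremum over randomized tests is attained by the likelihood-ratio indicator $\mathbbm 1\{p > e^{\varepsilon} q\}$. The only difference is the starting point and the level of justification. The paper begins from the event form $\sup_A \bigl(P(A) - e^{\varepsilon}Q(A)\bigr)$ and merely asserts the attainment, whereas you begin from the integral form and justify attainment via the pointwise bound $\phi\,(p - e^{\varepsilon}q) \le (p - e^{\varepsilon}q)_+$; since your $\phi^\star$ is an indicator, your argument recovers the event form as well.
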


\begin{lemma}
    \label{lemma:order_preserved_by_inverse}
    Let $f$, $g$ be two tradeoff functions, i.e., $f,g \in \mathcal F$, such that $f \leq g$. Then the left continuous inverse of a non-increasing function preserves this order, i.e., 
    \begin{equation*}
        f^{-1} \leq g^{-1}
    \end{equation*}
\end{lemma}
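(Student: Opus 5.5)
We need to prove that if $f \le g$ pointwise on $[0,1]$, then $f^{-1}\le g^{-1}$, where $h^{-1}(\alpha)=\inf\{t\in[0,1]: h(t)\le\alpha\}$ is the left-continuous inverse from \eqref{eq:left_continuou_inverse}. The plan is a direct set-inclusion argument on the sublevel sets that define the inverse, with no need for convexity or continuity beyond ensuring the infima are well defined.

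Fix $\alpha$ and set
\[
S_f(\alpha):=\{t\in[0,1]: f(t)\le\alpha\}, \qquad S_g(\alpha):=\{t\in[0,1]: g(t)\le\alpha\}.
\]
If $t\in S_g(\alpha)$, then $f(t)\le g(t)\le\alpha$, so $t\in S_f(\alpha)$. Hence $S_g(\alpha)\subseteq S_f(\alpha)$. An infimum over a larger set is no larger, so
\[
f^{-1}(\alpha)=\inf S_f(\alpha)\le \inf S_g(\alpha)=g^{-1}(\alpha).
\]
Since $\alpha$ was arbitrary, this gives $f^{-1}\le g^{-1}$ pointwise.

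The only step needing care is the empty-set convention. If $S_g(\alpha)$ is empty, then $\inf S_g(\alpha)=+\infty$ (or whatever value the paper assigns by convention, such as $1$), and the inequality holds trivially. If $S_f(\alpha)$ is empty, then so is $S_g(\alpha)$, because $S_g(\alpha)\subseteq S_f(\alpha)$; both inverses then take the same conventional value and equality holds.

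For $\alpha\in[0,1]$, emptiness cannot actually occur. Every tradeoff function $h\in\mathcal F$ satisfies $h(1)\le 1-1=0$, so $1\in S_h(\alpha)$ for all $\alpha\ge 0$. I would record this explicitly so the inverses are finite on $[0,1]$ and the inequality is between finite numbers there. Outside $[0,1]$, the monotone extension and the $\infty$-convention used for conjugation apply consistently to both functions, so the inclusion argument goes through unchanged.
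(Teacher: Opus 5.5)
Your proof is correct and matches the paper's argument: the pointwise inequality $f \le g$ gives the sublevel-set inclusion $\{t : g(t) \le \alpha\} \subseteq \{t : f(t) \le \alpha\}$, and taking infima yields $f^{-1} \le g^{-1}$. You also note that $h(1)=0$ for every $h \in \mathcal F$, so the sublevel sets are nonempty for $\alpha \ge 0$; the paper leaves this implicit, and it is a useful addition, but the route is the same.
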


\begin{lemma}
    \label{lemma:conjugation_preserves_global_lower_bound}
    Let $f: \mathbb R \to \overline {\mathbb R}$  be a proper function with a global lower bound $L$, i.e., $-\infty < L \leq f$. Then, its biconjugate also has $L$ as a global lower bound, i.e., $L \leq f^{**}$. 
\end{lemma}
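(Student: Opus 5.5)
Since $f$ and $L$ are scalar-valued (more precisely, $f:\mathbb R\to\overline{\mathbb R}$ with $L\le f$ pointwise), I would prove the statement with two monotonicity properties of the Fenchel conjugate. No geometric argument about epigraphs is needed. Throughout, $f^*(s)=\sup_{x}\bigl(sx-f(x)\bigr)$ and $f^{**}(x)=\sup_{s}\bigl(sx-f^*(s)\bigr)$.

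\emph{Step 1 (conjugation reverses order).} If $g\le h$ pointwise, then $g^*\ge h^*$, because $sx-g(x)\ge sx-h(x)$ for every $x$ and we can take the supremum over $x$. Applying this twice shows that biconjugation preserves order: $g\le h$ implies $g^{**}\le h^{**}$.

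\emph{Step 2 (constants are fixed by biconjugation).} Let $c_L\equiv L$ on $\mathbb R$. Its conjugate is $c_L^*(s)=\sup_x (sx-L)$, which equals $-L$ at $s=0$ and $+\infty$ for $s\neq 0$. Then $c_L^{**}(x)=\sup_s\bigl(sx-c_L^*(s)\bigr)=0\cdot x+L=L$. Since $L$ is finite, $c_L$ is proper, convex and lower semicontinuous, so this is just Fenchel--Moreau, but the direct computation above is enough.

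\emph{Step 3 (combine).} From $c_L\le f$ and Step 1 we get $L=c_L^{**}\le f^{**}$ pointwise, which is the claim. Properness of $f$ guarantees $f^*>-\infty$ everywhere, so all the suprema and the comparisons are well defined in $\overline{\mathbb R}$.

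I do not expect a real obstacle. The one point that needs care is the extended-real arithmetic in Step 1 when $f$ takes the value $+\infty$, in particular under the paper's convention that the $\infty$-extension of a function is used outside its domain. The inequality $sx-f(x)\le sx-L$ still holds there, with the convention $sx-(+\infty)=-\infty$, so the argument goes through unchanged. This lemma is exactly what is needed later to keep $f^{**}$ restricted to $[0,1]$ nonnegative, using $L=0$.
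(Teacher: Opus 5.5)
Your proof is correct, but it is organized differently from the paper's. The paper unfolds the biconjugate as $f^{**}(x)=\sup_{\alpha}\inf_{\beta}\bigl(\alpha x-\alpha\beta+f(\beta)\bigr)$ and evaluates the outer supremum at the single dual variable $\alpha=0$. This gives $f^{**}(x)\ge\inf_\beta f(\beta)\ge L$ in one line. You instead establish two general facts: conjugation reverses pointwise order (so biconjugation preserves it), and constant functions are fixed by biconjugation. You then compare $f$ with its constant minorant $c_L$. Underneath, both arguments use the same slope-$0$ affine minorant: your computation $c_L^{**}(x)=0\cdot x-c_L^*(0)=L$ is exactly the paper's choice $\alpha=0$, since $c_L^*$ is finite only at $s=0$. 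The paper's route is shorter and needs no auxiliary facts. Yours is more modular: Step~1 together with Fenchel--Moreau gives $g\le f^{**}$ for every proper closed convex minorant $g$ of $f$. That is the ``largest convex lower semicontinuous minorant'' property the paper uses in the optimality part of Theorem~\ref{th:optimal_tradeoff_function}. Your handling of the extended-real arithmetic is fine. In fact properness is not needed, since $f\equiv+\infty$ gives $f^{**}\equiv+\infty\ge L$.
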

 \subsection{Optimal Tradeoff Function And Primal-Dual Conversion}

\begin{theorem}
    \label{th:optimal_tradeoff_function}
    The optimal tradeoff function for a mechanism $\mathcal M$ w.r.t. a relation $\sim$ can be expressed as the convex bi-conjugate
    of the $+\infty-$extension of the pointwise infimum of the tradeoff functions over all neighboring pairs, restricted to $[0,1]$.
    \begin{equation}
        \label{eq:optimalTF}
        f = \left(\inf_{X \sim X^{'}} \Lambda(\mathcal M(X)\,\|\, \mathcal M(X^{'}))\right)^{**}\bigg|_{[0,1]}
    \end{equation}
\end{theorem}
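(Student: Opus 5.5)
Let $g := \inf_{X \sim X'} \Lambda(\mathcal M(X)\,\|\,\mathcal M(X'))$ denote the pointwise infimum on $[0,1]$, and let $\tilde f := (g)^{**}|_{[0,1]}$, where the conjugation is applied to the $+\infty$-extension of $g$ as in our convention. The plan has two parts. First, show $\tilde f \in \mathcal F$ and that $\mathcal M$ is $\tilde f$-DP w.r.t.\ $\sim$. Second, show that every $f' \in \mathcal F$ for which $\mathcal M$ is $f'$-DP satisfies $f' \le \tilde f$.

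For the second part, observe that $f'$-DP means $f' \le \Lambda(\mathcal M(X)\|\mathcal M(X'))$ for all $X\sim X'$, hence $f' \le g$ on $[0,1]$. Extending both by $+\infty$ outside $[0,1]$ preserves this inequality. Biconjugation is order preserving, and $f'$ is convex, continuous on a closed interval and proper, so its $+\infty$-extension is closed convex and Fenchel--Moreau gives $f'^{**}=f'$. Therefore $f' = f'^{**} \le g^{**}$, and restricting to $[0,1]$ yields $f' \le \tilde f$. For the $f$-DP property, note that $g^{**}$ is the largest closed convex minorant of $g$, so $\tilde f \le g \le \Lambda(\mathcal M(X)\|\mathcal M(X'))$ for all neighbors. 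Thus $\tilde f$-DP holds as soon as $\tilde f \in \mathcal F$.

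The main work is verifying $\tilde f \in \mathcal F$, property by property:
\begin{itemize}
\item \emph{Convexity} is automatic for a biconjugate.
\item \emph{Monotonicity.} Each $\Lambda(\cdot\|\cdot)$ is non-increasing, so the infimum $g$ is non-increasing. Lemma~\ref{lemma:convex_closure_preserves_monotonicity} then gives that $g^{**}|_{[0,1]}$ is non-increasing.
\item \emph{Range.} Every tradeoff function is $\ge 0$, so $g \ge 0$, and Lemma~\ref{lemma:conjugation_preserves_global_lower_bound} gives $\tilde f \ge 0$. For the upper bounds, use $\tilde f \le g \le 1-\alpha \le 1$.
\item \emph{Continuity on $[0,1]$.} This is the step I expect to be the main obstacle. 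A closed convex function is continuous on the interior of its domain and lower semicontinuous everywhere, so only the endpoints need care.
\end{itemize}

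At $\alpha = 1$, we have $0 \le \tilde f(1) \le g(1) \le 0$, so $\tilde f(1)=0$. Monotonicity, convexity and lower semicontinuity together force $\lim_{\alpha\to1^-}\tilde f(\alpha) = \tilde f(1)$: the limit exists by monotonicity, is at least $\tilde f(1)$ by monotonicity, and upper semicontinuity along a segment follows from convexity.

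At $\alpha = 0$, convexity gives $\tilde f(t) \le (1-t)\tilde f(0) + t\tilde f(1)$, so $\limsup_{t\to0^+}\tilde f(t) \le \tilde f(0)$. Lower semicontinuity gives the reverse inequality. This settles continuity at $0$.

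Finally, I will check that the $+\infty$-extension of $g$ is proper, which holds since $0 \le g \le 1$ on $[0,1]$. This makes all the conjugation manipulations above legitimate. Combining the two parts shows $\tilde f$ is the optimal tradeoff function, which is unique since any two optimal functions dominate each other.
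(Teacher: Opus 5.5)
Your proposal is correct and follows essentially the same route as the paper. Like the paper, you check property by property, using Lemmas~\ref{lemma:convex_closure_preserves_monotonicity} and~\ref{lemma:conjugation_preserves_global_lower_bound}, that the restricted biconjugate is a tradeoff function lying below every $\Lambda(\mathcal M(X)\,\|\,\mathcal M(X'))$, and you get optimality from its being the largest closed convex minorant of the pointwise infimum. The differences are minor: at $\alpha=1$ the paper just squeezes $0\le f(\alpha)\le 1-\alpha$ where you combine monotonicity with convexity, and your Fenchel--Moreau justification of $f'=(f')^{**}$ spells out what the paper states in one line.
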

\begin{proof}
    We first show that $f$ in \eqref{eq:optimalTF} is a tradeoff function. Define $\hat f$ as,\[\hat f:= \inf_{X \sim X'} \Lambda(\mathcal M(X)\,\|\, \mathcal M(X'))\]. This implies $f = \hat f^{**}|_{[0,1]}$. Since $[0,1]$ is convex and closed, $\operatorname{cl}(\operatorname{conv}(\operatorname{epi}(\hat f))) \subseteq X \times \mathbb R$, hence $\hat f^{**}(x) = +\infty$ for $x \notin X$. From now on, whenever we write $\hat{f}^{**}$, it is inherently implied that its domain is restricted to where it is finite, i.e., $[0,1]$. 
    
     (Convex) Holds trivially, as $f$ is a convex conjugate of $\hat f^*$. 
    
     (Monotonically non-increasing) For every pair $X \sim X^{'}$, the tradeoff function $\Lambda(\mathcal M(X) \,\|\, \mathcal M(X^{'}))$ is monotonically non-increasing and taking $\inf$ preserves that, making $\hat f$ monotonically non-increasing. Thus by Lemma \ref{lemma:convex_closure_preserves_monotonicity}, $f = \hat f^{**}$ is also monotonically non-increasing.
     
     (Less than $1 - \alpha$) The tradeoff function corresponding to every pair $X \sim X^{'}$ has an upper bound $1 - \alpha$. Therefore, $f = \hat f^{**} \leq \hat f \leq \Lambda(\mathcal M(X) \,\|\, \mathcal M(X^{'})) \leq 1 - \alpha$.

     (Greater than $0$) Since for every neighboring pair the tradeoff function is non-negative, it implies that $\hat f \geq 0$. Therefore, using Lemma \ref{lemma:conjugation_preserves_global_lower_bound}, $f = \hat f^{**} \geq 0$.
          
     (Continuous) Since $f$ is convex, it is continuous in the interior $(0,1)$ and at $1$, since $0 \leq f(\alpha) \leq 1 - \alpha$. Moreover, since it's the bi-conjugate of $\hat f$, it is lower semi-continuous. Further, $f$ is bounded above and convex, therefore it is upper semi-continuous at $0$, implying continuity at $0$. 

     Thus, $f$ is a valid tradeoff function. Now we show that it is also optimal in the sense of Definition \ref{def:optimal_tradeoff_function}.
     
     Since $f \leq \hat f \leq \Lambda(\mathcal M(X) \,\|\, \mathcal M(X^{'})$, for all neighboring pairs $X\sim X^{'}$, $\mathcal M$ is $f-$DP w.r.t. the relation $\sim$. Assume that there exist a tradeoff function $f^{'} \in \mathcal F$, such that $\mathcal M$ is $f^{'}-$DP w.r.t. $\sim$. Clearly, $f^{'} \leq \Lambda(\mathcal M(X) \,\|\, \mathcal M(X^{'})$, implying that $f^{'} \leq \hat f$, since $\hat f$ is the point-wise $\inf$. Therefore, $f^{'} \leq f$, since $f$ is the largest convex lower semi-continuous function smaller than $\hat f$. Thus showing that $f$ is the optimal tradeoff function for the mechanism $\mathcal M$, w.r.t., the neighboring relation $\sim$.
\end{proof}

\begin{lemma}
\label{lemma:relating_f_to_f_inv}
    Given a mechanism $\mathcal M$, a neighboring relation $\sim$, and its opposite relation $\sim^{op}$, defined as $X \sim X^{'}$ if and only if $X^{'} \sim^{op} X$. Let $f$ and $f^{op}$ be the optimal tradeoff functions corresponding to the relations $\sim $ and $\sim^{op}$, respectively. Then $f^{op}$ is the left continuous inverse of the non-increasing function $f$, i.e.,  
    \begin{equation*}
        f^{op} = f^{-1}
    \end{equation*}
\end{lemma}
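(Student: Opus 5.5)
The plan is to reduce everything to the pairwise tradeoff functions and use the symmetry $\Lambda(P\|Q)^{-1} = \Lambda(Q\|P)$, which is the standard fact that swapping the roles of $P$ and $Q$ reflects the lower boundary of the error region $\mathcal E(P,Q)$ across the diagonal. Concretely, for a test $H$ with errors $(\alpha(H),\beta(H))$ against $(P,Q)$, the complementary test $1-H$ has errors $(\beta(H),\alpha(H))$ against $(Q,P)$. Hence $\mathcal E(Q,P)$ is the reflection of $\mathcal E(P,Q)$, and the lower boundary of the reflected region is the left-continuous inverse in \eqref{eq:left_continuou_inverse}. This requires checking that the generalized inverse picks out the correct boundary at flat pieces and at endpoints, using continuity and convexity of tradeoff functions. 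Write $\hat f := \inf_{X\sim X'} \Lambda(\mathcal M(X)\|\mathcal M(X'))$ and $\hat f^{op} := \inf_{X\sim^{op} X'} \Lambda(\mathcal M(X)\|\mathcal M(X')) = \inf_{X\sim X'} \Lambda(\mathcal M(X')\|\mathcal M(X)) = \inf_{X\sim X'} \Lambda(\mathcal M(X)\|\mathcal M(X'))^{-1}$.

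By Theorem~\ref{th:optimal_tradeoff_function}, $f = \hat f^{**}|_{[0,1]}$ and $f^{op} = (\hat f^{op})^{**}|_{[0,1]}$. Rather than manipulating conjugates directly, I would argue through the epigraphs. Define the set $S := \bigcup_{X\sim X'} \mathcal E(\mathcal M(X),\mathcal M(X'))$, together with all points lying above it. Then $\operatorname{epi}(\hat f)$ is essentially this upward-closed set; more precisely, the epigraph of the pointwise infimum is the closure-in-the-vertical-direction of the union of epigraphs. By Lemma~\ref{lemma:epigraph_monotonicity}, each such epigraph is also closed to the right, so $\operatorname{epi}(\hat f)$ is the union over pairs of the regions $\{(a,b): b \ge \Lambda(a)\}$, which are ``up-right closed''. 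The reflection map $(a,b)\mapsto(b,a)$ sends the up-right closed hull of $\mathcal E(P,Q)$ to that of $\mathcal E(Q,P)$. Therefore $\operatorname{epi}(\hat f^{op})$ is, up to boundary points, the reflection of $\operatorname{epi}(\hat f)$ within $[0,1]^2$. Taking the closed convex hull commutes with this linear reflection, and Lemma~\ref{lemma:convex_closure_preserves_right_inclusion} guarantees that the closed convex hull remains up-right closed. Combining these, $\operatorname{epi}(f^{op})$ is the reflection of $\operatorname{epi}(f)$ intersected with the unit square. For an up-right closed, closed convex set, the function whose epigraph is the reflected set is exactly the left-continuous inverse $f^{-1}(\alpha)=\inf\{t: f(t)\le \alpha\}$, since $(\alpha,t)$ lies in the reflected epigraph iff $f(t)\le\alpha$.

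The last step is to verify that $\inf\{t:(t,\alpha)\in\operatorname{epi} f\}$ gives the left-continuous inverse. It is also worth cross-checking via Lemma~\ref{lemma:order_preserved_by_inverse} and optimality: $f\le \Lambda$ for every pair implies $f^{-1}\le \Lambda^{-1}$, so $f^{-1}\le \hat f^{op}$ and hence $f^{-1}\le f^{op}$ (once $f^{-1}\in\mathcal F$ is established). Symmetrically, $(f^{op})^{-1}\le f$, and inverting again with $(f^{-1})^{-1}=f$ for tradeoff functions gives the reverse inequality. This order-based route is probably cleaner, so I would make it the main argument and use the epigraph picture only to show that $f^{-1}$ is a valid tradeoff function and that $(g^{-1})^{-1}=g$ on $\mathcal F$.

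I expect the main obstacle to be the boundary subtleties: tradeoff functions that hit zero before $\alpha=1$ (making $f^{-1}$ have flat pieces at $0$), the value of $f^{-1}$ at $\alpha\ge f(0)$, and confirming that the involution $(g^{-1})^{-1}=g$ holds exactly, using continuity and strict monotonicity wherever $g>0$.
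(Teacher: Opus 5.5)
Your order-based argument is correct and is essentially the paper's proof: both derive $f^{-1}\le f^{op}$ and $(f^{op})^{-1}\le f$ from the optimality of each tradeoff function, then close the sandwich using Lemma~\ref{lemma:order_preserved_by_inverse} together with $(g^{-1})^{-1}=g$. The only difference is in the pairwise transfer step: the paper realizes $f^{op}=\Lambda(P\,\|\,Q)$ and uses Blackwell's theorem plus post-processing, whereas you use monotonicity of the inverse plus $\Lambda(P\,\|\,Q)^{-1}=\Lambda(Q\,\|\,P)$. Realizing any $g\in\mathcal F$ as some $\Lambda(P\,\|\,Q)$ also gives $g^{-1}\in\mathcal F$ and $(g^{-1})^{-1}=g$ immediately, so the epigraph detour is not needed.
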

\begin{proof}
    Since $f^{op}$ is a tradeoff function, there exist some distributions $P$ and $Q$ on some space $\mathbb Y$, such that $f^{op} = \Lambda (P \,\|\,Q)$. For all $X \sim^{op} X^{'}$, $\Lambda (P \,\|\,Q) \leq \Lambda (\mathcal M(X) \,\|\,\mathcal M(X^{'}))$ with distributions $\mathcal M(X)$, and $\mathcal M (X^{'})$ defined on $\mathbb Z$. Using Theorem $2.10$ in \citet{dong2019gaussian}, there exist a randomized map $\operatorname{\mathcal A}: \mathbb Y \to \mathbb Z$, such that $\mathcal M(X) = \mathcal A (P)$, and $\mathcal M(X^{'}) = \mathcal A (Q)$.

    We now write $(f^{op})^{-1} = \Lambda(Q,P) \leq \Lambda(\mathcal M (X^{'}) \,\|\,\mathcal M(X))$, due to post-processing inequality. Since this holds for all $X^{'} \sim X$, we get $(f^{op})^{-1} \leq f$. Using the same set of arguments starting with $f$, leads to $f^{-1} \leq f^{op}$. Further, using Lemma \ref{lemma:order_preserved_by_inverse} gives $f \leq (f^{op})^{-1}$. Thus showing $f = (f^{op})^{-1}$, or $f^{op} = f^{-1}$.
\end{proof}

\begin{theorem}
\label{thm:primal_dual_asymmetric_appendix}
Let $\mathcal M$ be a randomized mechanism with the optimal tradeoff function $f$, and privacy profile $\delta$, for the neighboring relation $\sim$. The following (equivalent) conversions hold,
\begin{itemize}
    \item (Primal to Dual) \begin{equation*}
        \delta(\epsilon) = 1 + (f^{-1})^*(-e^{\epsilon})
    \end{equation*}
    \item (Dual to Primal)\begin{align*}
        f(\alpha) = \sup_{\epsilon \in \mathbb R} e^{-\epsilon}(1 - \delta(\epsilon) -\alpha) 
    \end{align*}
\end{itemize}
\end{theorem}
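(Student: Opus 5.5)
We first express the privacy profile through hypothesis tests, then identify the resulting supremum with the conjugate of the optimal tradeoff function of the opposite relation, and finally invert the conjugation to get the dual-to-primal formula.

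\emph{Primal to dual.} By Lemma~\ref{lemma:HS-Div-via-HT}, for every pair $X\sim X'$,
\[
D^{\mathrm{HS}}_{e^\varepsilon}\bigl(\mathcal M(X)\,\|\,\mathcal M(X')\bigr)=\sup_{\phi}\bigl(\mathbb E_{\mathcal M(X)}\phi-e^{\varepsilon}\mathbb E_{\mathcal M(X')}\phi\bigr).
\]
We parametrize tests by their Type~I error under $\mathcal M(X')$. Set $\beta:=\mathbb E_{\mathcal M(X')}\phi$ and write $\mathbb E_{\mathcal M(X)}\phi=1-\mathbb E_{\mathcal M(X)}[1-\phi]$. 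Optimizing over $\phi$ at fixed $\beta$ gives
\[
D^{\mathrm{HS}}_{e^\varepsilon}=1-\inf_{\beta\in[0,1]}\bigl(\Lambda(\mathcal M(X')\,\|\,\mathcal M(X))(\beta)+e^{\varepsilon}\beta\bigr)=1+\Lambda(\mathcal M(X')\,\|\,\mathcal M(X))^*(-e^{\varepsilon}).
\]
Taking the supremum over $X\sim X'$ turns this into a supremum over pairs related by $\sim^{op}$. We then use two facts. First, a supremum of conjugates equals the conjugate of the pointwise infimum. Second, conjugation is unchanged by passing to the convex lower-semicontinuous hull, i.e.\ $g^*=g^{***}$. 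Together with Theorem~\ref{th:optimal_tradeoff_function} applied to $\sim^{op}$, these give $\delta(\varepsilon)=1+(f^{op})^*(-e^\varepsilon)$. Lemma~\ref{lemma:relating_f_to_f_inv} gives $f^{op}=f^{-1}$, which yields the primal-to-dual formula. Throughout, we apply the convention that conjugates act on the $+\infty$-extensions to $\mathbb R$, so the infimum over $\beta\in[0,1]$ is legitimately a conjugate.

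\emph{Dual to primal.} Since $f^{-1}$ is a tradeoff function, it is convex and continuous on $[0,1]$, so its $+\infty$-extension is closed and Fenchel--Moreau gives $f^{-1}=(f^{-1})^{**}$. Substituting $\delta(\varepsilon)=1+(f^{-1})^*(-e^\varepsilon)$ gives
\[
f^{-1}(\beta)=\sup_{s}\bigl(s\beta-(f^{-1})^*(s)\bigr).
\]
We restrict to slopes $s=-e^\varepsilon<0$. This is justified because $f^{-1}$ is non-increasing, so slopes $s>0$ cannot be active on $[0,1]$, and the case $s=0$ is a limit. The restriction yields
\[
f^{-1}(\beta)=\sup_\varepsilon\bigl(1-\delta(\varepsilon)-e^\varepsilon\beta\bigr).
\]
This is a representation of $f^{-1}$ as a supremum of lines $\ell_\varepsilon(\beta)=1-\delta(\varepsilon)-e^\varepsilon\beta$ in the $(\beta,\text{value})$ plane. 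Each such line, read with the roles of the axes swapped, is $\alpha=1-\delta(\varepsilon)-e^\varepsilon\beta$, i.e.\ $\beta=e^{-\varepsilon}(1-\delta(\varepsilon)-\alpha)$. The epigraph of $f^{-1}$ is the intersection of the half-planes above the $\ell_\varepsilon$, and reflecting it across the diagonal gives the epigraph of $f$, using $(f^{-1})^{-1}=f$ for tradeoff functions (order-reversing symmetry, cf.\ Lemma~\ref{lemma:order_preserved_by_inverse}). Hence $f$ is the supremum of the reflected lines, which gives
\[
f(\alpha)=\sup_\varepsilon e^{-\varepsilon}(1-\delta(\varepsilon)-\alpha),
\]
truncated below at $0$.

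\emph{Main obstacle.} The delicate step is this last geometric reflection. Boundary effects have to be handled carefully: flat pieces of $f$ at value $0$ correspond to vertical segments of $f^{-1}$ and must be captured by slopes with $\varepsilon\to\pm\infty$. Similarly, the clipping at $0$ has to be accounted for when passing from the line supremum to $f$. We plan to verify the reflection pointwise: show the supremum of lines is $\le f$ using $f^{-1}(\beta)\ge \ell_\varepsilon(\beta)$ at $\beta=f(\alpha)$, and show it is $\ge f$ using a supporting line of $f^{-1}$ at the point $(f(\alpha),\alpha)$, obtaining the subgradient from convexity. Where no finite subgradient exists, at the endpoints, we will use limiting slopes. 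A secondary technical point is checking that $\sup$ over pairs commutes with the infimum over $\beta$ in the primal-to-dual step; this is immediate because both are suprema.
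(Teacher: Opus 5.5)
Your primal-to-dual argument is the paper's: tests via Lemma~\ref{lemma:HS-Div-via-HT}, parametrizing by the error under $\mathcal M(X')$, the supremum of conjugates as the conjugate of the infimum, $g^*=g^{***}$, Theorem~\ref{th:optimal_tradeoff_function} for $\sim^{op}$, and Lemma~\ref{lemma:relating_f_to_f_inv}. The dual-to-primal direction is where you take a different route, and it is valid.

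The paper writes $f(\alpha)=\inf\{y\in[0,1]: f^{-1}(y)\le\alpha\}$ and applies Lagrangian strong duality. Slater holds via $f^{-1}(1)=0<\alpha$, so this covers only $\alpha>0$. It identifies the dual function $\inf_y\bigl(y+\lambda(f^{-1}(y)-\alpha)\bigr)$ at $\lambda=e^{-\varepsilon}$ with $e^{-\varepsilon}(1-\delta(\varepsilon)-\alpha)$, then recovers $\alpha=0$ by a separate continuity/monotonicity argument.

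You instead apply Fenchel--Moreau to $f^{-1}$ and swap axes. Your restriction to negative slopes is sound. For $s\ge 0$, $s\beta-(f^{-1})^*(s)\le s(\beta-1)+f^{-1}(1)\le 0$. As $s\to 0^-$, the value tends to $-(f^{-1})^*(0)=\inf f^{-1}=0$, by continuity of the finite convex function $(f^{-1})^*$.

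The step you flag as delicate needs no subgradient or endpoint analysis. For $\alpha,\beta\in[0,1]$, continuity and monotonicity of $f$ give $\beta\ge f(\alpha)\iff\alpha\ge f^{-1}(\beta)$. By your line representation, the right side holds iff $\beta\ge e^{-\varepsilon}(1-\delta(\varepsilon)-\alpha)$ for every $\varepsilon$, i.e.\ iff $\beta\ge\hat f(\alpha):=\sup_\varepsilon e^{-\varepsilon}(1-\delta(\varepsilon)-\alpha)$. Choosing $\beta=f(\alpha)$ gives $\hat f\le f$. Since also $\hat f\ge 0$ (the $\varepsilon\to\infty$ limit), $\hat f(\alpha)\in[0,1]$, and choosing $\beta=\hat f(\alpha)$ gives $f\le\hat f$.

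This handles $\alpha=0$ and flat or vertical pieces uniformly, and shows your truncation at $0$ is automatic. It also avoids the involution $(f^{-1})^{-1}=f$. That fact is not the content of Lemma~\ref{lemma:order_preserved_by_inverse}; it follows from Lemma~\ref{lemma:relating_f_to_f_inv}.

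In short, the paper's route gets the $\varepsilon$-parametrization in one stroke from the Lagrangian, but needs a constraint qualification and a boundary case. Yours trades these for a Fenchel--Moreau step and an elementary order-theoretic inversion that works on all of $[0,1]$.
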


\begin{proof}
\begin{itemize}
    \item (Primal to Dual) Let $X \sim X^{'}$, denote $P := \mathcal M(X)$, and $Q := \mathcal M(X^{'})$. Further let $\mathcal H$ be the space of randomized hypothesis tests, i.e.,  $\phi  \in \mathcal H$ mapping to $[0,1]$
    
    \begin{align*}
    \delta(\varepsilon) &=\sup_{X \sim X^{'}} \sup_{A} \bigl(P(A)-e^\varepsilon Q(A)\bigr)  \\
    &= \sup_{X \sim X^{'}} \sup_{\phi \in \mathcal H} \bigl(\mathbb E_P(\phi)-e^\varepsilon \mathbb E_Q(\phi)\bigr) \qquad \text{(using Lemma \ref{lemma:HS-Div-via-HT} )}\\
    &= \sup_{X \sim X^{'}}\sup_{\alpha\in[0,1]}
    \left(
    \sup_{\mathbb E_Q[\phi]=\alpha}\mathbb E_P[\phi]
    - e^\varepsilon \alpha
    \right) \\
    &= 1 + \sup_{\alpha\in[0,1]}
    \left(
     -\inf_{X \sim X^{'}} \Lambda(Q \vert \vert P)(\alpha)
    - e^\varepsilon \alpha
    \right) \\
    &= 1 + \sup_{\alpha\in[0,1]}
    \left(- e^\varepsilon \alpha
     -\inf_{X' \sim^{op} X} \Lambda(Q \vert \vert P)(\alpha) \right) \\
    &= 1 + (\inf_{X' \sim^{op} X} \Lambda(Q \vert \vert P))^*(-e^\varepsilon) \\
    &= 1 + ((\inf_{X' \sim^{op} X} \Lambda(Q \vert \vert P))^{**})^*(-e^\varepsilon) \\
    &= 1 + (f^{op})^*(-e^\varepsilon) \\
    &= 1 + (f^{-1})^*(-e^\varepsilon) \qquad\text{(using Lemma \ref{lemma:relating_f_to_f_inv})}
    \end{align*}

    which proves the claim.
    
\item (Dual to Primal)
Write $f$ as inverse of $f^{-1}$, for $\alpha > 0$
\todo{Replace this with the fact that taking the conjugate. Since f is MNI, sup needs to be taken only on negative slopes. State this as a Lemma. This idea from an invertibility perspective shows the existence of dominating pairs; derive that for a general case.}
\begin{align*}
    f(\alpha) &= \inf_{y \in [0,1]} y \qquad \text{s.t.} \qquad f^{-1}(y) - \alpha \leq 0 \\
    &= \sup_{\lambda \geq 0} \inf_{y \in [0,1]} y + \lambda f^{-1}(y) - \lambda \alpha \qquad \text{(for $\alpha > 0$, $f^{-1}(1) - \alpha < 0$; strong duality holds)}\\
    &= \sup_{\lambda > 0} \inf_{y \in [0,1]} y + \lambda f^{-1}(y) - \lambda \alpha \qquad \text{(since for $\lambda = 0$; $\inf_{y \in [0,1]} y = 0$)} \\
    &= \sup_{\lambda > 0} - \lambda(\sup_{y  \in [0,1]} y(\frac{-1}{\lambda}) - f^{-1}(y)) - \lambda \alpha \\
    &= \sup_{\varepsilon \in \mathbb R} e^{-\varepsilon} (-(f^{-1})^{*}(-e^{\varepsilon}) - \alpha)\\
    &= \sup_{\varepsilon \in \mathbb R} e^{-\varepsilon} (1 -\delta(\varepsilon) - \alpha) \qquad \text{(Using Primal to Dual connection)}
\end{align*}

Denote $\hat f(\alpha) = \sup_{\epsilon \in \mathbb R} e^{-\varepsilon}(1 - \delta(\varepsilon) - \alpha)$. To show that the above equality holds at $\alpha = 0$, i.e.,  $f(0) =  \hat f(0)$ we use the continuity argument as follows. Firstly,

\begin{align*}
    f(0) &= f(\lim_{n \to \infty} \frac{1}{n}) \\
    &= \lim_{n \to \infty} f(\frac{1}{n}) \qquad \text{($f$ is continuous)} \\
    &= \lim_{n \to \infty} \hat f(\frac{1}{n}) \qquad \text{(for positive $\alpha$, $f(\alpha) = \hat f(\alpha)$)} \\
    &\geq \lim_{n \to \infty} e^{-\varepsilon} (1 -\delta(\varepsilon) - \frac{1}{n}) \qquad \text{(for all $\varepsilon \in \mathbb R$)} \\
    &= e^{-\varepsilon} (1 -\delta(\varepsilon)) \qquad \text{(for all $\varepsilon \in \mathbb R$)} \\ 
\end{align*}
Taking $\sup$ on both sides gives,
\[
f(0) \geq \hat f(0)
\]

Now, to show the other side of the inequality, observe that $\hat f$ is non-increasing, since $\sup$ preserves inequalities. Therefore, $\hat f(\frac{1}{n}) \leq \hat f(0)$, and
\[
f(0) = \lim_{n \to\infty} \hat f(\frac{1}{n}) \leq \lim_{n \to \infty} \hat f(0) = \hat f(0)
\]

This finally proves the dual-to-primal conversion.

\end{itemize}
\end{proof}
\todo{Add a Dominating Pair discussion for asymmetric relations (or rather general relations)}
\paragraph{Asymmetric to Symmetric Case} The primal-dual relationship presented in Theorem \ref{thm:primal_dual_asymmetric} holds even for the symmetric case. It's immediately evident in the primal-to-dual conversion as $f^{-1} = f$, if the underlying relation $\sim$ is symmetric. For the dual-to-primal case, we relate the privacy profiles of the opposite relations. 

\begin{lemma}
    \label{lemma:privacy_opposite}
    Let $\sim_{op}$ be defined as, $X \sim_{op} X^{'} \; \iff \; X^{'} \sim X$, and $\delta^{op}(\epsilon), \delta(\epsilon)$, be the the privacy profiles corresponding to $\sim_{op}, \sim$, respectively.
    \[
    \delta^{op}(\epsilon) = 1 - e^{\epsilon}(1 - \delta(-\epsilon))
    \]
\end{lemma}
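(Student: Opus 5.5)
The plan is a direct computation at the level of a single pair of distributions, followed by a supremum over neighboring pairs. Fix $X \sim X'$ and write $P := \mathcal M(X)$ and $Q := \mathcal M(X')$. By definition of $\sim_{op}$, the pairs relevant for $\delta^{op}$ are exactly those with $X' \sim_{op} X$, i.e.\ again $X \sim X'$, but with the arguments of the divergence swapped. Hence
\[
\delta^{op}(\epsilon) = \sup_{X \sim X'} D^{\mathrm{HS}}_{e^{\epsilon}}(Q \,\|\, P),
\qquad
\delta(-\epsilon) = \sup_{X \sim X'} D^{\mathrm{HS}}_{e^{-\epsilon}}(P \,\|\, Q).
\]
It therefore suffices to prove the single-pair identity
\[
D^{\mathrm{HS}}_{e^{\epsilon}}(Q \,\|\, P) = 1 - e^{\epsilon} + e^{\epsilon} D^{\mathrm{HS}}_{e^{-\epsilon}}(P \,\|\, Q).
\]

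\textbf{Single-pair identity.} I would use the event form of the hockey-stick divergence, $D^{\mathrm{HS}}_{\alpha}(P\,\|\,Q) = \sup_A \bigl(P(A) - \alpha Q(A)\bigr)$. This form was already used in the first line of the proof of Theorem~\ref{thm:primal_dual_asymmetric_appendix}; equivalently, one can use Lemma~\ref{lemma:HS-Div-via-HT} with $\phi$ replaced by $1-\phi$.

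For any event $A$, pass to the complement $A^c$:
\begin{align*}
Q(A) - e^{\epsilon}P(A) &= \bigl(1 - Q(A^c)\bigr) - e^{\epsilon}\bigl(1 - P(A^c)\bigr) \\
&= 1 - e^{\epsilon} + e^{\epsilon}\bigl(P(A^c) - e^{-\epsilon} Q(A^c)\bigr).
\end{align*}
Complementation is a bijection on events (respectively, $\phi \mapsto 1-\phi$ is a bijection on randomized tests). Taking the supremum over $A$ on the left is therefore the same as taking it over $A^c$ on the right. This yields the single-pair identity.

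\textbf{Supremum over pairs.} The map $t \mapsto 1 - e^{\epsilon} + e^{\epsilon} t$ is affine and strictly increasing, since its slope $e^{\epsilon}$ is positive. It therefore commutes with the supremum over $X \sim X'$. This gives
\[
\delta^{op}(\epsilon) = 1 - e^{\epsilon} + e^{\epsilon}\delta(-\epsilon) = 1 - e^{\epsilon}\bigl(1-\delta(-\epsilon)\bigr).
\]

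There is no real obstacle. The only points needing care are the bookkeeping of which argument comes first in each divergence under $\sim_{op}$, and confirming that the event form of the hockey-stick divergence agrees with the density form $\int (p-\alpha q)_+$. The latter holds because the supremum is attained at $A = \{p > \alpha q\}$.
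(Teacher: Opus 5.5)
Your proposal is correct and takes essentially the same route as the paper. Both pass to the complement of the test event, rewriting $Q(A) - e^{\epsilon}P(A)$ as $1 - e^{\epsilon} + e^{\epsilon}\bigl(P(A^c) - e^{-\epsilon}Q(A^c)\bigr)$, and then pull the positive factor $e^{\epsilon}$ through the supremum over neighboring pairs. The paper works with the specific likelihood-ratio region $\{dP/dQ > e^{\epsilon}\}$ and its complement, whereas your supremum over all events sidesteps the tie set $\{dP/dQ = e^{\epsilon}\}$ and spells out the argument-swap under $\sim_{op}$ that the paper leaves implicit.
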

\begin{proof}
    For every pair of distributions denoted as $P = \mathcal M(X), Q = \mathcal M(X^{'})$, define the region 
    \[
    \mathcal R_{P,Q}(\epsilon):= \{y: \frac{dP}{dQ}(y) > e^{\epsilon}\}
    \]

    \begin{align*}
        \delta^{op}(\varepsilon) = & \sup_{X \sim^{op } X{'}} P[\mathcal R_{P,Q}(\varepsilon)] - e^{\varepsilon} Q[\mathcal R_{P,Q}(\varepsilon)] \\
        =& \sup_{X \sim X^{'}} P[\mathcal R_{Q,P}(-\varepsilon)^c] - e^{\varepsilon} \mathcal M(X^{'})[\mathcal R_{Q,P}(-\varepsilon)^c] \\
        =& e^{\varepsilon}(\sup_{X \sim X^{'}} Q[\mathcal R_{Q,P}(-\varepsilon)]  - e^{-\varepsilon}P[\mathcal R_{Q,P}(-\varepsilon)] +e^{-\varepsilon}- 1)\\
        =& 1 - e^{\epsilon}(1 - \delta(-\epsilon))
\end{align*}
\end{proof}

Finally, we restate the primal-dual relationship for a symmetric neighboring relation and show that the general result from Theorem \ref{thm:primal_dual_asymmetric} reduces to the one presented in \citet{dong2019gaussian}.

\begin{corollary}
    \label{crl:symmetric_tradeoff_function}
    Let $\mathcal M$ be a randomized mechanism with the optimal tradeoff function $f$, and privacy profile $\delta$, for the symmetric neighboring relation $\approx$. The following (equivalent) conversions hold,
\begin{itemize}
    \item (Primal to Dual) \begin{equation*}
        \delta(\epsilon) = 1 + f^*(-e^{\epsilon})
    \end{equation*}

    \item (Dual to Primal)\begin{align*}
        f(\alpha) = \sup_{\epsilon \geq 0} \max \{(1 - \delta(\epsilon) -e^{\epsilon}\alpha), e^{-\epsilon}(1 - \delta(\epsilon) -\alpha)\} 
    \end{align*} 
\end{itemize}
\end{corollary}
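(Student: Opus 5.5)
For the primal-to-dual formula I would specialize Theorem~\ref{thm:primal_dual_asymmetric_appendix} directly. If $\approx$ is symmetric, then $\approx^{op}$ coincides with $\approx$, so the two relations have the same optimal tradeoff function. Lemma~\ref{lemma:relating_f_to_f_inv} then gives $f^{-1} = f^{op} = f$. Substituting this into the asymmetric primal-to-dual conversion $\delta(\epsilon) = 1 + (f^{-1})^*(-e^{\epsilon})$ yields $\delta(\epsilon) = 1 + f^*(-e^{\epsilon})$ immediately.

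For the dual-to-primal formula I would start from the asymmetric conversion
\[
f(\alpha) = \sup_{\epsilon \in \mathbb R} e^{-\epsilon}\bigl(1 - \delta(\epsilon) - \alpha\bigr)
\]
and split the supremum over $\epsilon$ into $\epsilon \ge 0$ and $\epsilon < 0$. The nonnegative half already has the form of the second term in the claimed max. For the negative half I would substitute $\epsilon = -\epsilon'$ with $\epsilon' > 0$, which gives terms $e^{\epsilon'}(1 - \delta(-\epsilon') - \alpha)$. By symmetry $\delta^{op} = \delta$, so Lemma~\ref{lemma:privacy_opposite} yields $\delta(\epsilon') = 1 - e^{\epsilon'}(1 - \delta(-\epsilon'))$. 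Solving, $e^{\epsilon'}(1 - \delta(-\epsilon')) = 1 - \delta(\epsilon')$. The negative-$\epsilon$ term therefore becomes $1 - \delta(\epsilon') - e^{\epsilon'}\alpha$, which is exactly the first term in the max.

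Combining the two halves gives $\sup_{\epsilon\ge 0}\max\{1-\delta(\epsilon)-e^{\epsilon}\alpha,\ e^{-\epsilon}(1-\delta(\epsilon)-\alpha)\}$. The boundary point $\epsilon = 0$ is counted in both branches, which is harmless: at $\epsilon = 0$ both expressions equal $1 - \delta(0) - \alpha$.

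The main obstacle is justifying the use of Lemma~\ref{lemma:privacy_opposite} with $\delta^{op} = \delta$. That lemma is proved through likelihood-ratio regions $\{dP/dQ > e^{\epsilon}\}$ and has to be applied at negative $\epsilon$, where one should check that the supremum over pairs still commutes with the algebraic rearrangement. Because the map $\delta(-\epsilon) \mapsto 1 - e^{\epsilon}(1 - \delta(-\epsilon))$ is increasing, the supremum passes through. Symmetry of the relation makes the index sets of the two suprema identical, so the identity holds pointwise in $\epsilon$.
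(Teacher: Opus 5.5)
Your proposal is correct and takes the same route as the paper. For primal-to-dual, symmetry gives $f^{-1}=f^{op}=f$ via Lemma~\ref{lemma:relating_f_to_f_inv}; for dual-to-primal, you split the supremum at $\epsilon=0$, reflect the negative half, and apply Lemma~\ref{lemma:privacy_opposite} with $\delta^{op}=\delta$. Your side remarks (the shared boundary point $\epsilon=0$, and the increasing affine map $\delta(-\epsilon)\mapsto 1-e^{\epsilon}(1-\delta(-\epsilon))$ commuting with the supremum over pairs for every $\epsilon\in\mathbb R$) are valid and are already implicit in the paper's argument.
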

\begin{proof}
    To show the dual to primal conversion, use Lemma \ref{lemma:privacy_opposite} to write $f(\alpha)$ as
     \begin{align*}
        f(\alpha) = & \sup_{\epsilon \in \mathbb R} e^{-\epsilon}(1 - \delta(\epsilon) -\alpha) \\
        =& \max \{\sup_{\epsilon \geq 0}e^{-\epsilon}(1 - \delta(\epsilon) -\alpha)  , \sup_{\epsilon \leq 0}e^{-\epsilon}(1 - \delta(\epsilon) -\alpha)\} \\
        =&\max\{\sup_{\epsilon \geq 0}e^{-\epsilon}(1 - \delta(\epsilon) -\alpha), \sup_{\epsilon \geq 0}e^{\epsilon}(1 - \delta(-\epsilon) -\alpha)\} \\
        =&\max\{\sup_{\epsilon \geq 0}e^{-\epsilon}(1 - \delta(\epsilon) -\alpha), \sup_{\epsilon \geq 0} 1 - \delta(\epsilon) - e^{\epsilon}\alpha\} \qquad \text{(Lemma \ref{lemma:privacy_opposite}, and $\delta^{op} = \delta$)} \\
        =&\sup_{\epsilon \geq 0}\max\{e^{-\epsilon}(1 - \delta(\epsilon) -\alpha),  1 - \delta(\epsilon) - e^{\epsilon}\alpha\}
     \end{align*}
\end{proof}
\subsection{Auxiliary Lemmas: Proofs}
\begin{lemma}
\label{lemma:epigraph_monotonicity_proof}
Let $X \subseteq \mathbb{R}$ and $f : X \to \mathbb{R}$. Define the property $(\mathcal P_X)$ for a set $A \subseteq X \times \mathbb{R}$ by
\begin{equation}
\label{eq:rightinclusion_proofs}
(\mathcal P_X) \qquad 
\forall (x,t) \in A,\ \forall \hat{x} \in X \text{ with } \hat{x} \ge x:\ (\hat{x}, t) \in A.
\end{equation}
Then $f$ is monotonically non-increasing if and only if $\operatorname{epi}(f)$ satisfies $(\mathcal P_X)$.
\end{lemma}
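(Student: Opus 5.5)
We prove the two implications separately, working straight from the definitions of the epigraph $\operatorname{epi}(f) = \{(x,t) \in X \times \mathbb R : f(x) \le t\}$ and of monotonicity. No earlier results are needed; the lemma is an elementary reformulation that is later fed into the convex-closure arguments of Lemma~\ref{lemma:convex_closure_preserves_right_inclusion} and Lemma~\ref{lemma:convex_closure_preserves_monotonicity}.

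\emph{($\Rightarrow$)} Assume $f$ is monotonically non-increasing. Take any $(x,t) \in \operatorname{epi}(f)$, so $f(x) \le t$, and any $\hat x \in X$ with $\hat x \ge x$. Monotonicity gives $f(\hat x) \le f(x)$. Chaining the two inequalities yields $f(\hat x) \le t$, that is, $(\hat x, t) \in \operatorname{epi}(f)$. Hence $\operatorname{epi}(f)$ satisfies $(\mathcal P_X)$.

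\emph{($\Leftarrow$)} Assume $\operatorname{epi}(f)$ satisfies $(\mathcal P_X)$, and let $x \le \hat x$ be points of $X$. Because $f$ is real-valued, the point $(x, f(x))$ lies in $\operatorname{epi}(f)$. Applying $(\mathcal P_X)$ with $t = f(x)$ gives $(\hat x, f(x)) \in \operatorname{epi}(f)$, which means $f(\hat x) \le f(x)$. So $f$ is non-increasing.

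The only point that needs care is in the reverse direction: choosing the witness $t = f(x)$ requires $f(x)$ to be finite. This holds because the statement takes $f : X \to \mathbb R$. If the lemma were later applied to extended-valued functions such as $+\infty$-extensions, one would restrict attention to points where $f(x) < \infty$. At points with $f(x) = +\infty$, the inequality $f(\hat x) \le f(x)$ holds trivially, so the same argument still goes through.
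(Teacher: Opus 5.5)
Your proof is correct and matches the paper's argument essentially verbatim. The forward direction chains $t \ge f(x) \ge f(\hat x)$, and the reverse direction applies $(\mathcal P_X)$ to the witness $(x, f(x)) \in \operatorname{epi}(f)$; your closing remark on extended-valued $f$ is a harmless aside that the statement as given does not need.
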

\begin{proof}
Recall that the epigraph of $f$ is
\[
\operatorname{epi}(f)=\{(x,t)\in X\times\mathbb{R} : t\ge f(x)\}.
\]

($\Rightarrow$) Assume that $f$ is monotonically non-increasing on $X$. Let $x,\hat{x}\in X$ with $x\le \hat{x}$ and let $t\in\mathbb{R}$ be such that $(x,t)\in\operatorname{epi}(f)$. Then $t\ge f(x) \geq f(\hat{x})$, since $f$ is non-increasing and $x\le \hat{x}$, which implies $(\hat{x},t)\in\operatorname{epi}(f)$.

($\Leftarrow$) Conversely, assume that property \ref{eq:rightinclusion} holds. Choose any arbitrary $x,\hat{x}\in X$ with $x\le \hat{x}$. Then $(x,f(x)) \in \operatorname{epi}(f)$, therefore by property \ref{eq:rightinclusion}, it follows that $(\hat{x},f(x))\in\operatorname{epi}(f)$, i.e., $f(x)\ge f(\hat{x})$.
\end{proof}

\begin{lemma}
\label{lemma:convex_closure_preserves_right_inclusion_proof}
Let $X \subseteq \mathbb{R}$ be a closed interval and let $f : X \to \mathbb{R}$. If $\operatorname{epi}(f)$ satisfies $(\mathcal P_X)$, then $\operatorname{cl}(\operatorname{conv}(\operatorname{epi}(f)))$ also satisfies $(\mathcal P_X)$.
\end{lemma}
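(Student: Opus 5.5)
Let $A:=\operatorname{epi}(f)\subseteq X\times\mathbb R$, where $X=[a,b]$ is a closed interval. The plan is to show that $(\mathcal P_X)$ is preserved first by taking the convex hull and then by taking the closure. Throughout, we use that $\operatorname{conv}(A)\subseteq X\times\mathbb R$ (since $X\times\mathbb R$ is convex) and that $\operatorname{cl}(\operatorname{conv}(A))\subseteq X\times\mathbb R$ (since $X$ is closed). Property $(\mathcal P_X)$ is only ever tested at points $\hat x\in X$.

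\emph{Step 1: convex hull.} Take $(x,t)\in\operatorname{conv}(A)$ and $\hat x\in X$ with $\hat x\ge x$. Write $(x,t)=\sum_k\lambda_k(x_k,t_k)$ with $(x_k,t_k)\in A$, $\lambda_k\ge0$ and $\sum_k\lambda_k=1$. Set $s:=\hat x-x\ge0$ and $\hat x_k:=x_k+s$. Some $\hat x_k$ may exceed $b$, so instead choose a single interpolation parameter $\theta\in[0,1]$ and define
\[
\hat x_k:=(1-\theta)x_k+\theta b\ \ge x_k .
\]
Then $\sum_k\lambda_k\hat x_k=(1-\theta)x+\theta b$. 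We choose $\theta$ so that this equals $\hat x$; this is possible because $x\le\hat x\le b$ (if $x=b$, then $\hat x=x$ and there is nothing to prove). Each $\hat x_k$ lies in $X$ and satisfies $\hat x_k\ge x_k$, so $(\hat x_k,t_k)\in A$ by $(\mathcal P_X)$ for $A$. Consequently $(\hat x,t)=\sum_k\lambda_k(\hat x_k,t_k)\in\operatorname{conv}(A)$.

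\emph{Step 2: closure.} Take $(x,t)\in\operatorname{cl}(\operatorname{conv}(A))$ and $\hat x\in X$ with $\hat x\ge x$, and pick a sequence $(x_n,t_n)\to(x,t)$ with $(x_n,t_n)\in\operatorname{conv}(A)$. Define $\hat x_n:=\max(x_n,\hat x)$. This point lies in $X$ because both $x_n$ and $\hat x$ do, and it satisfies $\hat x_n\ge x_n$. By Step 1, $(\hat x_n,t_n)\in\operatorname{conv}(A)$. Since $x_n\to x\le\hat x$, we have $\hat x_n\to\max(x,\hat x)=\hat x$, and hence $(\hat x,t)\in\operatorname{cl}(\operatorname{conv}(A))$.

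\emph{Main obstacle.} The only delicate point is Step 1, where shifting each $x_k$ by a common amount may leave the interval $X$. We avoid this by interpolating toward the right endpoint $b$ rather than translating. If $X$ were unbounded on the right, a uniform shift by $s$ would work directly. If one wants to allow $X=\mathbb R$ or a half-line, the argument covers both cases, since "closed interval" then only affects whether $b$ exists.
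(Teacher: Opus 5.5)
Your proof is correct and follows the same two-stage plan as the paper: first the convex hull, then the closure. Your closure step is essentially the paper's. The only difference is that you take $\hat x_n=\max(x_n,\hat x)$, whereas the paper waits until $x_n<\hat x$; your choice also absorbs the case $\hat x=x$ without a separate argument.

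The convex-hull step is where you genuinely differ, and yours is the version that works. The paper picks a single index $j$ with $x_j\le x$ and shifts only that atom to $x_j+\hat x-x$. But then $\sum_i\lambda_i\hat x_i=x+\lambda_j(\hat x-x)$, which equals $\hat x$ only when $\lambda_j=1$ or $\hat x=x$. Rescaling that one shift to $(\hat x-x)/\lambda_j$ does not rescue it, because it can push the atom outside $X$. For example, take $X=[0,1]$, $x_1=0$, $\lambda_1=0.1$, $x_2=0.5$, $\lambda_2=0.9$ and $\hat x=1$. Then $x=0.45$, and the rescaled shift sends $x_1$ to $5.5$.

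Your interpolation $\hat x_k=(1-\theta)x_k+\theta b$ with $\theta=(\hat x-x)/(b-x)$ moves every atom to the right and keeps each one in $[x_k,b]\subseteq X$. It also lands the combination exactly at $\hat x$. So it resolves the boundary problem that the paper's one-coordinate trick was trying to sidestep, and your separate treatment of right-unbounded $X$ via a uniform shift closes the remaining case.
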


\begin{proof}
Let $A:=\operatorname{epi}(f)$. Since $A\subseteq X\times\mathbb{R}$ and $X$ is a closed interval, convex combinations and limits preserve the first coordinate in $X$, hence
\[
\operatorname{cl}(\operatorname{conv}(A))\subseteq X\times\mathbb{R}.
\]
We show that $(\mathcal P_X)$ is preserved first by convex hull, then by closure.

\emph{(Convex hull.)} Take $(x,t)\in \operatorname{conv}(A)$. Then
\[
(x,t)=\sum_{i=1}^n \lambda_i (x_i,t_i),
\qquad \lambda_i\ge 0,\ \sum_{i=1}^n\lambda_i=1,\ (x_i,t_i)\in A.
\]
Let $\hat x\in X$ with $\hat x \ge x$. Since $x$ is the convex combination of $\{x_i\}_{i = 1}^{n}$, there exists $j \in \{1, \cdots n\}$ such that $x_j \leq x$, consequently, $x_j + \hat x - x \in [x_j, \hat x] \subseteq X$. Define $\hat x_i = x_i$ if $i \neq j$, and $\hat x_j = x_j + \hat x - x$. We have $\hat x_i\in X$ and $\hat x_i\ge x_i$ for all $i$. Since $A$ satisfies $(\mathcal P_X)$, it follows that $(\hat x_i,t_i)\in A$. Therefore
\[
(\hat x,t)=\sum_{i=1}^n \lambda_i (\hat x_i,t_i)\in \operatorname{conv}(A),
\]
so $\operatorname{conv}(A)$ satisfies $(\mathcal P_X)$.

\emph{(Closure.)} Now let $(x,t)\in \operatorname{cl}(\operatorname{conv}(A))$ and $\hat x\in X$ with $\hat x > x$. We assume strict inequality, because the case where $\hat x = x$ is trivial. There exist a sequence $(x_k,t_k)\in \operatorname{conv}(A)$ with $(x_k,t_k) \overset{k \to \infty}{\longrightarrow} (x,t)$. Define $\delta$ as $\delta:=\hat x-x > 0$, since $x_k \to x$, there exist $N \in \mathbb N$, such that for all $k > N$, $|x - x_k| < \frac{\delta}{2}$, or $x_k < \hat x$. This implies for all $k > N$, since $(x_k, t_k) \in \operatorname{conv}(A)$, $(\hat x, t_k) \in \operatorname{conv}(A)$. Therefore, $\lim_{k \to \infty}(\hat x, t_k) = (\hat x, t) \in \operatorname{cl}(\operatorname{conv}(A))$. Thus $\operatorname{cl}(\operatorname{conv}(\operatorname{epi}(f)))$ satisfies $(\mathcal P_X)$.
\end{proof}
\begin{lemma}
    \label{lemma:convex_closure_preserves_monotonicity_proof}
    Let $X\subseteq\mathbb{R}$ be a closed interval and let $f:X\to\mathbb{R}$ be monotonically non-increasing.
    Consider the biconjugate of the $+\infty$-extension of $f$ to $\mathbb{R}$, and denote its restriction to $X$ by $f^{**}\!\mid_X$.
    Then $f^{**}\!\mid_X : X\to\mathbb{R}$ is monotonically non-increasing.
\end{lemma}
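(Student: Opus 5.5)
The plan is to chain the two preceding lemmas through the standard epigraph characterization of the biconjugate. Extend $f$ by $+\infty$ outside $X$. By Fenchel--Moreau, the biconjugate $f^{**}$ is the closed convex hull of this extension. Its epigraph is therefore
\[
\operatorname{epi}(f^{**}) = \operatorname{cl}(\operatorname{conv}(\operatorname{epi}(f))),
\]
where $\operatorname{epi}(f)\subseteq X\times\mathbb R$, because the $+\infty$-extension contributes no points outside $X\times\mathbb R$. This identity holds provided $f^{**}$ is proper, i.e.\ $f$ admits an affine minorant. If that fails, $f^{**}\equiv-\infty$, which is trivially non-increasing. In the application $f$ is bounded below, so Lemma~\ref{lemma:conjugation_preserves_global_lower_bound} settles properness there.

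The argument then has three steps.
\begin{enumerate}
\item Since $f$ is non-increasing, Lemma~\ref{lemma:epigraph_monotonicity} shows that $\operatorname{epi}(f)$ satisfies $(\mathcal P_X)$.
\item Lemma~\ref{lemma:convex_closure_preserves_right_inclusion} then gives that $\operatorname{cl}(\operatorname{conv}(\operatorname{epi}(f)))$ satisfies $(\mathcal P_X)$. By the identity above, this set is exactly $\operatorname{epi}(f^{**})$.
\item Restricting to $X$ gives $\operatorname{epi}(f^{**}|_X)=\operatorname{epi}(f^{**})\cap (X\times\mathbb R)$, and this set still satisfies $(\mathcal P_X)$. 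Applying the reverse direction of Lemma~\ref{lemma:epigraph_monotonicity} to $f^{**}|_X$ yields that it is non-increasing.
\end{enumerate}

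The main obstacle is a technical gap in step~3. Lemma~\ref{lemma:epigraph_monotonicity} is stated for real-valued functions, but $f^{**}|_X$ could a priori take the value $-\infty$ or $+\infty$ at points of $X$. I would handle this directly rather than through the lemma. Take $x\le\hat x$ in $X$. For every $t\ge f^{**}(x)$ with $t$ real, the pair $(x,t)$ lies in the epigraph. Property $(\mathcal P_X)$ then puts $(\hat x,t)$ in the epigraph, so $f^{**}(\hat x)\le t$. Taking the infimum over such $t$ gives $f^{**}(\hat x)\le f^{**}(x)$. This argument works for extended values as well; if $f^{**}(x)=+\infty$ the inequality is trivial.

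The one remaining point is the epigraph identity itself. Since the paper only assumes standard convex analysis, I would cite it in the form: for proper $g$ with an affine minorant, $g^{**}=\operatorname{cl}\operatorname{conv} g$. I would also note that the closure of the convex hull of the epigraph is the epigraph of $\operatorname{cl}\operatorname{conv} g$, because it is closed under adding vertical rays.
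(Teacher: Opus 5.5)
Your proposal is correct and follows the paper's proof: the paper also uses the epigraph identity $\operatorname{epi}(f^{**})=\operatorname{cl}(\operatorname{conv}(\operatorname{epi}(f)))$, applies Lemma~\ref{lemma:epigraph_monotonicity} in the forward direction and then Lemma~\ref{lemma:convex_closure_preserves_right_inclusion}, and finishes with Lemma~\ref{lemma:epigraph_monotonicity} in reverse. Your two additions fill in details the paper leaves implicit without changing the route: the affine-minorant caveat for the epigraph identity, and the direct infimum argument in the last step that allows possibly extended values.
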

\begin{proof}
    Since $X$ is closed and convex, $\operatorname{cl}(\operatorname{conv}(\operatorname{epi}(f))) \subseteq X \times \mathbb R$. Since $\operatorname{epi}(f^{**}) = \operatorname{cl}(\operatorname{conv}(\operatorname{epi}(f)))$, and since $f$ is non-increasing on X, implies that $\operatorname{cl}(\operatorname{conv}(\operatorname{epi}(f)))$ satisfies $\mathcal P_X$(lemma \ref{lemma:epigraph_monotonicity}), therefore $\operatorname{epi}(f^{**})$ satisfies $\mathcal P_X$ (lemma \ref{lemma:convex_closure_preserves_right_inclusion}). Again, by Lemma \ref{lemma:epigraph_monotonicity}, $f^{**}$ is monotonically non-increasing on X.
\end{proof}

\begin{lemma}
    \label{lemma:HS-Div-via-HT_proof}
    Let $P$ and $Q$ be distributions on $Y$, and $\mathcal H$ be the space of hypothesis tests on $Y$. Then the HS divergence between $P$ and $Q$ can be written as the following supremum over $\mathcal H$.

    \begin{equation*}
        D_{\alpha}^{\text{HS}} = \sup_{\phi \in \mathcal H} \bigl(\mathbb E_P(\phi)-e^\varepsilon \mathbb E_Q(\phi)\bigr) 
    \end{equation*}
\end{lemma}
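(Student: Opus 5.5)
We need to show that $D^{\mathrm{HS}}_{e^\varepsilon}(P\|Q)=\sup_{\phi\in\mathcal H}\bigl(\mathbb E_P[\phi]-e^\varepsilon\mathbb E_Q[\phi]\bigr)$, where $\mathcal H$ is the set of measurable test functions $\phi:Y\to[0,1]$. The proof proves the two inequalities separately. Throughout we work with densities $p,q$ with respect to a common dominating measure $\mu$; for instance $\mu=P+Q$ always exists. The definition $\int (p-e^\varepsilon q)_+\,d\mu$ does not depend on the choice of $\mu$.

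\emph{Upper bound.} Fix any $\phi\in\mathcal H$. Then
\[
\mathbb E_P[\phi]-e^\varepsilon\mathbb E_Q[\phi]=\int \phi\,(p-e^\varepsilon q)\,d\mu .
\]
Since $0\le\phi\le 1$, we have pointwise $\phi\,(p-e^\varepsilon q)\le \phi\,(p-e^\varepsilon q)_+\le (p-e^\varepsilon q)_+$. Integrating gives $\mathbb E_P[\phi]-e^\varepsilon\mathbb E_Q[\phi]\le D^{\mathrm{HS}}_{e^\varepsilon}(P\|Q)$. Taking the supremum over $\phi$ yields the ``$\ge$'' direction of the claimed identity.

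\emph{Attainment.} Let $A:=\{y: p(y)>e^\varepsilon q(y)\}$, which is measurable because $p$ and $q$ are measurable. Take the deterministic test $\phi=\mathbbm 1_A\in\mathcal H$. Then
\[
\int \mathbbm 1_A\,(p-e^\varepsilon q)\,d\mu=\int (p-e^\varepsilon q)_+\,d\mu ,
\]
so the supremum is attained. As a byproduct, the supremum over randomized tests equals the supremum over events $\sup_A\bigl(P(A)-e^\varepsilon Q(A)\bigr)$. This is exactly the form used in the primal-to-dual proof of Theorem~\ref{thm:primal_dual_asymmetric_appendix}.

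The only delicate point is measure-theoretic: justifying that densities exist and that the definition of the HS divergence is independent of $\mu$. Taking $\mu=P+Q$ and using Radon--Nikodym settles both. The rest is the pointwise inequality above.
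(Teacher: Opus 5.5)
Your proof is correct and takes essentially the same route as the paper. Both pass to densities with respect to a common dominating measure (the paper uses $\frac{P+Q}{2}$) and show that the supremum over $[0,1]$-valued tests is attained by the indicator of the likelihood-ratio region $\{p > e^{\varepsilon} q\}$, which matches the supremum over events; your pointwise bound $\phi\,(p-e^{\varepsilon}q)\le (p-e^{\varepsilon}q)_+$ makes explicit the attainment step the paper only asserts, and because the test-side supremum never involves $\mu$, it also gives the $\mu$-independence of the integral for free.
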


\begin{proof}
    For all $\phi \in \mathcal H_b$, define $A_\phi:= \{a \vert \phi(a) = 1\}$. Denote the measure $\frac{P + Q}{2}$ as $D$, then $P \ll D$, and $Q \ll D$, hence $\frac{dP}{dD}$ and $\frac{dQ}{dD}$ exist.
    By the definition of the HS divergence,
    \[
    \delta(\varepsilon)
    =\sup_{A} \bigl(P(A)-e^\varepsilon Q(A)\bigr).
    \]
    
    \begin{equation*}
        \begin{aligned}
        \sup_{A} \bigl(P(A)-e^\varepsilon Q(A)\bigr) &= \sup_{A} \bigl(\mathbb E_P(\mathbbm 1_A)-e^\varepsilon \mathbb E_Q(\mathbbm 1_A)\bigr) \\
        & \leq \sup_{\phi \in \mathcal H} \bigl(\mathbb E_P(\phi)-e^\varepsilon \mathbb E_Q(\phi)\bigr) \\ 
        &=\sup_{\phi \in \mathcal H} \bigl(\mathbb E_D(\phi( \frac{dP}{dD}-e^\varepsilon \frac{dQ}{dD}) \bigr) \\        &= P(A_\phi)-e^\varepsilon Q(A_\phi) \\ &\text{(sup is attained by a binary HT $\phi$, based on the Likelihood ratios)} \\
        & \leq \sup_A \bigl(P(A)-e^\varepsilon Q(A)\bigr) 
        \end{aligned}
    \end{equation*}
    
\end{proof}

\begin{lemma}
    \label{lemma:order_preserved_by_inverse_proof}
    Let $f$, $g$ be two tradeoff functions, i.e., $f,g \in \mathcal F$, such that $f \leq g$. Then the left continuous inverse of a non-increasing function preserves this order, i.e., 
    \begin{equation*}
        f^{-1} \leq g^{-1}
    \end{equation*}
\end{lemma}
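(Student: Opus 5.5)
We prove the claim directly from the definition of the left-continuous inverse in \eqref{eq:left_continuou_inverse}, namely $f^{-1}(\alpha) = \inf\{t \in [0,1] : f(t) \le \alpha\}$, by comparing the sets over which the infima are taken.

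Fix $\alpha \in [0,1]$ and write
\[
S_f(\alpha) := \{t \in [0,1] : f(t) \le \alpha\}, \qquad S_g(\alpha) := \{t \in [0,1] : g(t) \le \alpha\}.
\]
If $t \in S_g(\alpha)$, then $f(t) \le g(t) \le \alpha$, so $t \in S_f(\alpha)$. Hence $S_g(\alpha) \subseteq S_f(\alpha)$. The infimum over a larger set can only be smaller, so
\[
f^{-1}(\alpha) = \inf S_f(\alpha) \le \inf S_g(\alpha) = g^{-1}(\alpha).
\]

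The only step needing care is whether these sets can be empty. Every tradeoff function satisfies $0 \le f(1) \le 1 - 1 = 0$, so $f(1) = 0 \le \alpha$. Thus $1 \in S_f(\alpha)$, and likewise $1 \in S_g(\alpha)$, for every $\alpha \in [0,1]$. Both sets are therefore nonempty subsets of $[0,1]$, both infima are finite real numbers in $[0,1]$, and the comparison above is legitimate. Since $\alpha$ was arbitrary, this gives $f^{-1} \le g^{-1}$ pointwise.

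Monotonicity, convexity and continuity of $f$ and $g$ are not used beyond $f(1) = g(1) = 0$; the argument is essentially a one-line set-inclusion comparison.
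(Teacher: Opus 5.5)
Your proof is correct and is essentially the paper's own argument: the inclusion $\{t : g(t)\le\alpha\}\subseteq\{t : f(t)\le\alpha\}$ forces $f^{-1}(\alpha)\le g^{-1}(\alpha)$. Your extra check that $t=1$ lies in both sets, because $f(1)=g(1)=0$, is left implicit in the paper; it guarantees both inverses take finite values in $[0,1]$, though the inequality itself would also hold in the extended reals.
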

\begin{proof}
    \begin{align*}
        &f  \leq g \\
        \implies &\{t \,| \, g(t) \leq \alpha\} \subseteq \{t \,| \, f(t) \leq \alpha\} \qquad \text{(for all $\alpha \in [0,1]$)}\\
        \implies & \inf_{t}\{t \,| \, f(t) \leq \alpha\} \leq \inf_t\{t \,| \, g(t) \leq \alpha\} \qquad \text{(for all $\alpha \in [0,1]$)}\\
        \implies & f^{-1} \leq g^{-1}
    \end{align*}
\end{proof}

\begin{lemma}
    \label{lemma:conjugation_preserves_global_lower_bound_proo}
    Let $f: \mathbb R \to \overline {\mathbb R}$  be a proper function with a global lower bound $L$, i.e., $-\infty < L \leq f$. Then, its biconjugate also has $L$ as a global lower bound, i.e., $L \leq f^{**}$. 
\end{lemma}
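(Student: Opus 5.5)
The plan is to use the geometric characterization of the biconjugate: $f^{**}$ is the pointwise supremum of all affine minorants of $f$, i.e. $f^{**}(x) = \sup\{a x + b : a,b \in \mathbb R,\ a y + b \le f(y)\ \forall y \in \mathbb R\}$. This holds for any proper $f$ that admits at least one affine minorant. Here $f$ has one, namely the constant function $y \mapsto L$, since $L \le f$ everywhere by hypothesis. So $f^*$ is not identically $+\infty$, and the supremum representation of $f^{**}$ is valid. Once that is in place, the conclusion is immediate: the constant $L$ is one of the affine minorants in the family, so $f^{**}(x) \ge L$ for every $x \in \mathbb R$.

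To stay self-contained, I would instead argue directly from the definition of the conjugate, without appealing to the affine-minorant theorem. First, evaluate the conjugate at the origin:
\[
f^*(0) = \sup_{y}\,(0\cdot y - f(y)) = -\inf_y f(y) \le -L .
\]
Second, note that $f^*(0)$ is finite. The bound above gives $f^*(0) < +\infty$. Properness gives some $y_0$ with $f(y_0) < \infty$, so $f^*(0) \ge -f(y_0) > -\infty$. Third, plug the single slope $s = 0$ into the definition of the biconjugate:
\[
f^{**}(x) = \sup_s\,\bigl(s x - f^*(s)\bigr) \ge 0\cdot x - f^*(0) \ge L .
\]
This holds for all $x$, which is the claim.

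The only real obstacle is the finiteness of $f^*(0)$. If $f^*(0)$ were $+\infty$, the term $0\cdot x - f^*(0)$ would be $-\infty$ and would give no information. That is exactly where both hypotheses are used: the lower bound $L > -\infty$ rules out $f^*(0) = +\infty$, and properness rules out $f^*(0) = -\infty$. Otherwise the argument is a one-line evaluation, and no closedness or convexity of $f$ is needed.
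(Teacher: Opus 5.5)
Your proposal is correct, and your self-contained second argument is the same as the paper's proof: the paper writes $f^{**}(x)=\sup_{\alpha}\inf_{\beta}\bigl(\alpha x-\alpha\beta+f(\beta)\bigr)$ and takes the slope $\alpha=0$ to get $f^{**}(x)\ge\inf_\beta f(\beta)\ge L$, which is exactly your evaluation $f^{**}(x)\ge -f^*(0)=\inf_y f(y)\ge L$. Your properness check is harmless but not needed: the only case that could break the bound is $f^*(0)=+\infty$, and the hypothesis $L>-\infty$ already excludes it.
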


\begin{proof}
    \begin{align*}
    f^{**}(x) &= \sup_{\alpha} \inf_{\beta} \alpha x - \beta \alpha + f(\beta) \\
    &\geq \inf_\beta f(\beta) \qquad \text{(choose $\alpha = 0$)} \\
    &\geq L
    \end{align*}
    
\end{proof}

\section{Certifying Robustness via Primal-Dual Characterization of DP}
\label{APP_Sec: Main_robustness_results}
\begin{theorem}[Certification via the dual formulation of privacy]
    \label{thm:main_theorem_appendix}
For the unperturbed input $\vect X$, let $c_1$ and $c_2$ denote the majority and second-majority classes, respectively, and let $p_1, p_2 \in [0,1]$ satisfy $\mathbb{E}_{\mathcal M (\vect X)}\bigl[\phi_{c_1}(z)\bigr] \;\ge\; p_1 \;>\; p_2 \;\ge\; \mathbb{E}_{z \sim \mathcal M(\vect X)}\bigl[\phi_{c_2}(z)\bigr]$. Then the classifier $g(\cdot)$ is $\rho$-robust at $\mathbf X$, i.e., for all $\vect X^{'} \approx_\rho \vect X$, $g(\vect X^{'}) = g(\vect X)$ if
\begin{equation*}
\min_{i \in \{1, ..., N\}} \bigl[\max_{\varepsilon \in \mathbb R} e^{-\varepsilon}\bigl(p_1 - \delta_i(\varepsilon)\bigr) + \max_{\varepsilon \in \mathbb R} e^{-\varepsilon}\bigl(1 - p_2 - \delta_i(\varepsilon)\bigr)\bigr]
\;>\;
1   
\end{equation*}
\end{theorem}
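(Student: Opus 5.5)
Fix an arbitrary $X' \approx_\rho X$. By the decomposition assumption there is some $i$ with $X \approx_{\rho_i} X'$, so it suffices to show $g(X') = c_1$ whenever the bracketed term for this $i$ exceeds $1$; the outer minimum then covers every $i$. Write $P = \mathcal M(X)$, $Q = \mathcal M(X')$, $q_c := \mathbb E_Q[\phi_c]$. Since $\sum_c \phi_c = 1$, we have $q_c \le 1 - q_{c_1}$ for every $c \neq c_1$. Hence it is enough to show
\[
q_{c_1} > 1 - q_{c_1},
\]
which is the same as $q_{c_1} > 1/2$. This gives $\arg\max_c q_c = c_1$.

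\textbf{Lower bound on $q_{c_1}$.} I would avoid going through $f$-DP and use the HS divergence directly, via Lemma~\ref{lemma:HS-Div-via-HT}. For any test $\phi$ and any $\varepsilon$,
\[
\mathbb E_P[\phi] - e^{\varepsilon}\mathbb E_Q[\phi] \le D^{\mathrm{HS}}_{e^\varepsilon}(P\|Q) \le \delta_i(\varepsilon),
\]
because $\delta_i$ is the supremum over $\approx_{\rho_i}$-neighbours with the pair ordered $(X, X')$. I have to be careful here: the profile is defined with $\mathcal M(X)$ in the first slot and $X \approx_{\rho_i} X'$, which matches the orientation needed. Applying this with $\phi = \phi_{c_1}$ gives $q_{c_1} \ge e^{-\varepsilon}(p_1 - \delta_i(\varepsilon))$ for all $\varepsilon$. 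Maximizing over $\varepsilon$ yields $q_{c_1} \ge A_i := \max_\varepsilon e^{-\varepsilon}(p_1-\delta_i(\varepsilon))$.

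\textbf{Upper bound on $q_{c_2}$.} This is the subtle step. The condition involves $1-p_2$, which suggests applying the same inequality to the complementary test $1-\phi_{c}$. For any $c \ne c_1$, $\mathbb E_P[1-\phi_c] \ge 1 - p_2$, since $p_2$ upper-bounds the second-largest class probability and hence every non-majority class. This gives
\[
1 - q_c \ge B_i := \max_\varepsilon e^{-\varepsilon}(1-p_2-\delta_i(\varepsilon)),
\]
so $q_c \le 1 - B_i$. Then $q_{c_1} \ge A_i > 1 - B_i \ge q_c$ exactly when $A_i + B_i > 1$, which is the stated condition. This route uses only the pairwise bound on $q_{c_1}$ versus each $q_c$, not the $q_{c_1} > 1/2$ reduction above, and it matches the theorem's form. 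Equivalently, by the Dual-to-Primal part of Theorem~\ref{thm:primal_dual_asymmetric}, $A_i = f_i(1-p_1)$ and $B_i = f_i(p_2)$, so the whole argument can be read through Lemma~\ref{lemma:fdp_reformulation}.

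The main obstacle is justifying $\mathbb E_P[\phi_c] \le p_2$ for all $c \neq c_1$, c_2, not just for $c_2$. This follows because $c_2$ is defined as the second-majority class, so $\mathbb E_P[\phi_c] \le \mathbb E_P[\phi_{c_2}] \le p_2$. A secondary point is checking that the maxima over $\varepsilon$ are attained; if they are not, I would replace them with suprema, and the strict inequality still goes through.
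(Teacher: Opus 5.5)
Your proof is correct, but your main argument takes a different route from the paper's. The paper stays in the primal picture. It takes $f_i$ to be the optimal tradeoff function for $\approx_{\rho_i}$ and invokes Proposition~2.1 of \citet{lyu2024adaptive} to get the sufficient condition $f_i(1-p_1) > 1 - f_i(p_2)$. It then uses the Dual-to-Primal formula of Theorem~\ref{thm:primal_dual_asymmetric} to rewrite $f_i(1-p_1)$ and $f_i(p_2)$ as the two suprema over $\varepsilon$; this is the alternative reading you mention at the end.

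You instead apply $\mathbb E_P[\phi]-e^{\varepsilon}\mathbb E_Q[\phi]\le D^{\mathrm{HS}}_{e^{\varepsilon}}(P\,\|\,Q)\le\delta_i(\varepsilon)$ directly to the two tests $\phi_{c_1}$ and $1-\phi_c$. This is the easy (weak-duality) direction of the Dual-to-Primal conversion, specialised to the tests you need. Your route buys three things:
\begin{itemize}
\item it is self-contained, never needing the optimal tradeoff function, biconjugates, left-continuous inverses, or the strong-duality step;
\item it makes the orientation of a possibly asymmetric $\approx_{\rho_i}$ explicit, with the clean input in the first slot;
\item it stays valid when $\delta_i$ is replaced by any pointwise upper bound on the privacy profile (e.g.\ from PLD accounting of a dominating pair), since both of your bounds are monotone in $\delta_i$.
\end{itemize}

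What the paper's route adds is the identification of $A_i$ and $B_i$ with the optimal tradeoff function itself. That is, passing through $\delta_i$ loses nothing relative to the best $f$-DP certificate for $\approx_{\rho_i}$. This is irrelevant for soundness, but it is what the tightness discussion (Lemma~\ref{app:lemma:tightness}) relies on.

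One presentational point: delete your opening reduction to $q_{c_1}>1/2$, because the hypothesis does not imply it. For example, $p_1=0.4$, $p_2=0.1$ with $\delta_i(\varepsilon)=(1-e^{\varepsilon})_+$ gives $A_i+B_i=1.3$, yet $q_{c_1}$ can equal $0.4$. You rightly end up not using this reduction.
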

\begin{proof}
    We verify robustness for each neighboring relation $\approx_{\rho_i}$
    Let $f_i$ be the optimal tradeoff function corresponding to the privacy profile $\delta_i(\varepsilon)$. This result directly follows from Theorem \ref{thm:primal_dual_asymmetric}, and Proposition $2.1$ in \citet{lyu2024adaptive}. Using the latter, $g(.)$ is $\rho-$robust at $\vect X$, if 

    \begin{align*}
        &f_i(1 - p_i) > 1 - f_i(p_2) \\
        \iff &    \max_{\epsilon_1 \in \mathbb R} e^{-\epsilon_1}\bigl(p_1 - \delta_i(\epsilon_1)\bigr)
    \;>\;
    \min_{\epsilon_2 \in \mathbb R} (1 - e^{-\epsilon_2}\bigl(1 - p_2 - \delta_i(\epsilon_2)\bigr))
    \end{align*}

\end{proof}

\begin{lemma}
    \label{app:lemma:tightness}
    Let $\delta_i(\cdot)$, and $f_i$ be the privacy profile and optimal tradeoff function for the decomposed relations $\approx_{\rho_i}$, with $\delta(\cdot) = \max_i \delta_i({\cdot})$ the privacy profile of $\approx_{\rho}$. If there exists a pair of data $(\vect X, \vect X')$ such that $\delta(\varepsilon) = D_{e^\varepsilon}^{\text{HS}}(\mathcal{M}(\vect X), \mathcal M(\vect X'))$, then $\min_i f_i = f$, the dual to primal conversion of $\delta(\cdot)$  
\end{lemma}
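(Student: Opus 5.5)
The plan is to prove the two inequalities $f \le \min_i f_i$ and $\min_i f_i \le f$ separately. Here $f$ denotes the optimal tradeoff function of $\mathcal M$ with respect to the full relation $\approx_\rho$. We implicitly assume that the attaining pair satisfies $\vect X \approx_\rho \vect X'$.

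First, we identify $f$. Since $X \approx_\rho X'$ iff $X \approx_{\rho_i} X'$ for some $i$, the supremum defining the privacy profile of $\approx_\rho$ splits into a maximum over $i$ of the suprema over each $\approx_{\rho_i}$. This gives exactly $\delta(\cdot) = \max_i \delta_i(\cdot)$. By the Dual to Primal part of Theorem~\ref{thm:primal_dual_asymmetric}, applied to $\approx_\rho$, the optimal tradeoff function of $\approx_\rho$ is $f(\alpha) = \sup_{\varepsilon} e^{-\varepsilon}(1-\delta(\varepsilon)-\alpha)$. This is precisely the dual-to-primal conversion of $\delta$ named in the statement.

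\emph{Easy direction ($f \le \min_i f_i$).} Since $\mathcal M$ is $f$-DP with respect to $\approx_\rho$, and every pair in $\approx_{\rho_i}$ is also a pair in $\approx_\rho$, the mechanism $\mathcal M$ is $f$-DP with respect to each $\approx_{\rho_i}$. Optimality of $f_i$ (Definition~\ref{def:optimal_tradeoff_function}) then gives $f \le f_i$ for every $i$. Hence $f \le \min_i f_i$. This direction needs no tightness assumption.

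\emph{Reverse direction ($\min_i f_i \le f$).} The attaining pair belongs to some sub-relation, say $\vect X \approx_{\rho_j} \vect X'$. Set $\Lambda := \Lambda(\mathcal M(\vect X)\,\|\,\mathcal M(\vect X'))$. Since $\mathcal M$ is $f_j$-DP for $\approx_{\rho_j}$, we have $f_j \le \Lambda$. It remains to show $\Lambda \le f$, and in fact $\Lambda = f$. For this, consider the singleton relation $\sim_0 := \{(\vect X, \vect X')\}$.
\begin{itemize}
\item By Theorem~\ref{th:optimal_tradeoff_function}, the optimal tradeoff function of $\sim_0$ is $\Lambda^{**}|_{[0,1]}$. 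This equals $\Lambda$, since $\Lambda$ is convex and continuous, hence closed.
\item By hypothesis, the privacy profile of $\sim_0$ is $D^{\mathrm{HS}}_{e^\varepsilon}(\mathcal M(\vect X)\,\|\,\mathcal M(\vect X')) = \delta(\varepsilon)$ for every $\varepsilon$.
\item Applying the Dual to Primal conversion of Theorem~\ref{thm:primal_dual_asymmetric} to $\sim_0$ yields $\Lambda(\alpha) = \sup_\varepsilon e^{-\varepsilon}(1-\delta(\varepsilon)-\alpha) = f(\alpha)$.
\end{itemize}
Therefore $\min_i f_i \le f_j \le \Lambda = f$, and combining with the easy direction gives $\min_i f_i = f$.

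The main obstacle I expect is making sure the singleton relation is a legitimate input to Theorem~\ref{thm:primal_dual_asymmetric}. The orientation of the hockey-stick divergence must match the orientation of the tradeoff function, as in the asymmetric treatment via $f^{-1}$ in Lemma~\ref{lemma:relating_f_to_f_inv}. Concretely, I would check that the theorem's proof only uses the supremum over pairs in the relation, so that it specializes verbatim to a single ordered pair. If this orientation check fails, I would instead derive $\Lambda = f$ directly. Both $\Lambda$ and $f$ are closed convex tradeoff functions. Their conjugate descriptions through $(\cdot)^{-1}$ coincide at all slopes $-e^\varepsilon$ because the profiles agree. Biconjugation, together with the order-preservation property of the inverse from Lemma~\ref{lemma:order_preserved_by_inverse}, then gives equality.
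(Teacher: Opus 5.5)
Your proposal is correct and follows the paper's own proof. That proof is the one-line chain $f = \mathcal G(\delta) = \mathcal G\bigl(D^{\mathrm{HS}}_{e^{(\cdot)}}(\mathcal M(\vect X)\,\|\,\mathcal M(\vect X'))\bigr) = \Lambda(\mathcal M(\vect X)\,\|\,\mathcal M(\vect X')) \ge \min_i f_i \ge f$, where $\mathcal G$ is the dual-to-primal map. Your singleton-relation argument for the third equality, the optimality argument for $\min_i f_i \ge f$, and the explicit assumption $\vect X \approx_\rho \vect X'$ just make precise steps the paper leaves implicit; the orientation check you worry about also goes through, since for a single ordered pair the theorem matches $D^{\mathrm{HS}}(P\,\|\,Q)$ with $\Lambda(P\,\|\,Q)$.
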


\begin{proof}
    Denote the dual to primal function as $\mathcal G$. $f = \mathcal{G}(\delta) = \mathcal G(D_{e^{(\cdot)}}^{\text{HS}}(\mathcal{M}(\vect X) , \mathcal M(\vect X'))) = \Lambda(\mathcal{M}(\vect X) \;\|\; \mathcal M(\vect X')) \geq \min_i f_i \geq f$.
     
    Therefore, no further decomposition is needed for tighter guarantees. Specifically, in the black-box sense, this holds for the sub-sampled Gaussian with respect to addition/removal~\citep[Theorem~M.4]{schuchardt2024unifiedmechanismspecificamplificationsubsampling}.. 
\end{proof}

\subsection{DP analysis of DPA and its composition with arbitrary mechanism}
\label{App:Sub_Sec:DPA}
We split the proof of \autoref{thm:DPA_compose_base} into two parts. In \autoref{App:Thm:DPA}, we interpret DPA as a randomized mechanism and derive its $f-$DP characteristic. We then use the Privacy Loss Distribution of DPA, discussed in \autoref{App:Thm:DPA}, to derive the privacy accounting of DPA combined with an arbitrary base mechanism.

\begin{theorem}[Tradeoff function for randomized DPA partition sampling]
\label{App:Thm:DPA}
Fix an integer $N\ge 1$ and a deterministic partitioning map
\[
H:\bigcup_{i=1}^N \mathcal T_i\to (\bigcup_{i=1}^N \mathcal T_i, \mathcal C),
\qquad
H(X)=(Y_1,\dots,Y_N),
\]
where each $Y_i$ is itself a dataset, and $ \mathcal {C} $ is the sigma algebra generated by the singleton sets. The randomized mechanism $\mathcal M$ maps $X_{train}$ to the distribution over $(\bigcup_{i=1}^N \mathcal T_i, \mathcal C)$, assigning each $H(X_{train})_i$ as probability mass of $1/N$. 
Let the adjacency relation be
\[
X \sim_R X' \quad \Longleftrightarrow \quad d_{\mathrm{train}}(X,X')\le R,
\]
where $d_{\mathrm{train}}$ counts the number of modified datapoints.

Then $\mathcal M$ is $f$--DP (with respect to $\sim_R$) with tradeoff function
\[
f(\alpha)\;=\;\max\Bigl\{1-\frac{R}{N}-\alpha,\,0\Bigr\},\qquad \alpha\in[0,1].
\]
\end{theorem}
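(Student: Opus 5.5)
The plan is to show, for every pair $X \sim_R X'$, that $\Lambda(\mathcal M(X)\,\|\,\mathcal M(X')) \ge f$ with $f(\alpha)=\max\{1-\tfrac{R}{N}-\alpha,0\}$. Then $\mathcal M$ is $f$-DP directly from the definition. Since $f$ is convex, continuous, non-increasing and satisfies $f(\alpha)\le 1-\alpha$, it is a valid tradeoff function. The key structural fact is the locality of the DPA partitioning map $H$: each datapoint is assigned to a partition by a deterministic hash of that datapoint alone. A change of one datapoint (insertion, deletion, or substitution) therefore alters at most two partitions. In the counting convention of DPA, where a substitution is a single change affecting at most one partition per added or removed point, $R$ changes alter at most $R$ of the $N$ coordinates of $H(X)$. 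So I will first establish that $|\{i : H(X)_i \neq H(X')_i\}| \le R$. This is the standard DPA argument of \citet{levine2021dpa}, and I will take it as given, modulo the substitution bookkeeping.

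Next I will write $P=\mathcal M(X)$ and $Q=\mathcal M(X')$ as explicit distributions on the countable space of datasets. $P$ puts mass $1/N$ on each $Y_i=H(X)_i$, with multiplicities summing when partitions coincide, and $Q$ does the same for $Y'_i$. The two distributions share a common part of total mass at least $1-R/N$: pairing index $i$ with $i$, the unchanged partitions contribute identical atoms. Hence the total variation satisfies $\mathrm{TV}(P,Q)\le R/N$. More precisely, both $P$ and $Q$ decompose as $(1-\tfrac{k}{N})\mu + \tfrac{k}{N}\nu_P$ and $(1-\tfrac{k}{N})\mu + \tfrac{k}{N}\nu_Q$ respectively, with $k\le R$.

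Then I will lower bound the tradeoff function. For any test $\phi$ with $\mathbb E_P[\phi]=\alpha$, I will write
\[
\mathbb E_Q[1-\phi] = 1-\mathbb E_Q[\phi] \ge 1-\mathbb E_P[\phi]-\mathrm{TV}(P,Q) \ge 1-\alpha-\tfrac{R}{N}.
\]
Combined with nonnegativity, this gives $\Lambda(P\|Q)(\alpha)\ge f(\alpha)$. Equivalently, in the dual language, $D^{\mathrm{HS}}_{e^\varepsilon}(P\|Q)\le \mathrm{TV}(P,Q)\le R/N$ for $\varepsilon\ge 0$. Converting via the Dual-to-Primal part of Theorem~\ref{thm:primal_dual_asymmetric} with $\delta(\varepsilon)\le R/N$ on $\varepsilon\ge0$ and the trivial bound elsewhere recovers the same $f$. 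This identifies $f$ as the tradeoff function of the pair $(\tfrac{R}{N}\text{-mass point disjointness})$, i.e.\ of the TV-type mechanism.

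The main obstacle is the first step: justifying rigorously that $R$ changes of each allowed type move at most $R$ partitions. Substitutions are the delicate case, because under a hash-based partition a substituted point can leave one partition and enter another, which would give $2R$. I will first try to handle this by interpreting $d_{\mathrm{train}}$ as the insertion/deletion count, so that a substitution costs two. If the statement is meant with substitutions costing one, I will instead appeal to the DPA convention of a label-independent or index-based hash, under which a substitution stays in the same partition. Optimality of $f$, which is not claimed here but is useful for Theorem~\ref{thm:DPA_compose_base}, would follow by exhibiting $X'$ whose $R$ altered partitions produce atoms disjoint from the support of $P$.
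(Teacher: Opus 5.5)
Your proposal is correct, but it takes a different route from the paper. The paper fixes a pair whose partitions differ in $r$ places and writes down the privacy loss distribution: $0$ with probability $1-\tfrac{r}{N}$ and $+\infty$ with probability $\tfrac{r}{N}$. From this it computes the full privacy profile, $\delta(\varepsilon)=\tfrac{R}{N}$ for $\varepsilon\ge 0$ and $\delta(\varepsilon)=1-(1-\tfrac{R}{N})e^{\varepsilon}$ for $\varepsilon<0$. It then obtains $f$ through the dual-to-primal conversion of Theorem~\ref{thm:primal_dual_asymmetric}.

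You instead stay in the primal picture. The common-part decomposition $P=(1-\tfrac{k}{N})\mu+\tfrac{k}{N}\nu_P$, $Q=(1-\tfrac{k}{N})\mu+\tfrac{k}{N}\nu_Q$ gives $\mathrm{TV}(P,Q)\le\tfrac{R}{N}$. The inequality $|\mathbb E_P[\phi]-\mathbb E_Q[\phi]|\le\mathrm{TV}(P,Q)$ for $[0,1]$-valued tests then lower-bounds $\Lambda(P\|Q)$ directly.

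What your route buys:
\begin{itemize}
\item It is more elementary and self-contained; no conjugacy machinery is needed.
\item It is slightly more robust. The paper's PLD silently assumes that the $Y_i$ are distinct and that the changed partitions fall outside the other distribution's support. Your TV bound covers coinciding atoms automatically, whereas in that case the paper's formula is only an upper bound, which it does not argue.
\end{itemize}

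What the paper's route buys is the explicit PLD with an atom of mass $\tfrac{R}{N}$ at $+\infty$, together with the exact profile. This is exactly what is reused in the convolution argument for Theorem~\ref{thm:DPA_compose_base}. Your argument establishes only the $f$-DP claim; optimality would need the extra construction you sketch.

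On the step you flag as the main obstacle: the paper does not address it either. It simply posits that $r$ modifications change $r$ partitions, so you are not missing an ingredient relative to the paper. Your caution is warranted, though. As literally stated, with an arbitrary deterministic $H$, the theorem needs a locality hypothesis. With DPA's per-sample hash, it holds when $d_{\mathrm{train}}$ counts insertions and deletions (symmetric difference). If a substitution counts as one change, the bound would become $\tfrac{2R}{N}$, unless the hash is invariant under the substitution (for example, label flips with a feature-only hash).
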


\begin{proof}
Fix $X\sim_R X'$ and write $H(X)=(Y_1,\dots,Y_N)$ and $H(X')=(Y'_1,\dots,Y'_N)$.
Define $P$ and $Q$ as $\mathcal M(X)$ and $\mathcal M(X')$.
By construction, $P$ and $Q$ are uniform measures supported on
\[
S:=\{Y_1,\dots,Y_N\},\qquad S':=\{Y'_1,\dots,Y'_N\},
\]
with $P(\{y\})=\frac{1}{N}$ for $y\in S$ and $0$ otherwise (and similarly for $Q$ with $S'$).

We derive the HS divergence assuming $r$ partitions are changed, then to get the privacy profile, we take $\sup$ over $r \leq R$. If we sample $y$ from $y \sim P$, $Q(y) = 0$, with probability $\frac{r}{N}$. Therefore, the privacy loss distribution under $P$ is
\[
\mathrm{PLD}_{P/Q}^{(r)}=\log\Bigl(\frac{dP}{dQ}\Bigr)=
\begin{cases}
0, & \text{with prob. } 1-\frac{r}{N},\\
+\infty, & \text{with prob. } \frac{r}{N}.
\end{cases}
\]

Using, 
\[
\delta^{(r)}(\varepsilon)
=\mathcal D_{e^\varepsilon}(P \Vert Q)
=\mathbb E_{Y\sim \mathrm{PLD}_{P/Q}^{(r)}}\!\bigl[\,\bigl(1-e^{\varepsilon-Y}\bigr)_+\,\bigr],
\qquad (t)_+:=\max\{t,0\}.
\]

\[
\delta^{(r)}(\varepsilon)
=\Bigl(1-\frac{r}{N}\Bigr)\bigl(1-e^{\varepsilon-0}\bigr)_+
+\frac{r}{N}\bigl(1-e^{\varepsilon-\infty}\bigr)_+.
\]
For $\varepsilon\ge 0$, $(1-e^\varepsilon)_+=0$ and $e^{\varepsilon-\infty}=0$, hence
\[
\delta(\varepsilon) = \sup_{r \leq R} \delta^{(r)}(\varepsilon) = \sup_{r \leq R} \frac{r}{N} = \frac{R}{N}.
\]

\[
f(\alpha)=\sup_{\varepsilon\ge 0}e^{-\varepsilon}\bigl(1-\delta(\varepsilon)-\alpha\bigr)
=\sup_{\varepsilon\ge 0}e^{-\varepsilon}\Bigl(1-\frac{R}{N}-\alpha\Bigr)
=\max\Bigl\{1-\frac{R}{N}-\alpha,\,0\Bigr\}.
\]

For $\varepsilon<0$, $(1-e^\varepsilon)_+=1-e^\varepsilon$ and $e^{\varepsilon-\infty}=0$, hence
\[
\delta(\varepsilon) = \sup_{r \leq R}\delta^{(r)}(\varepsilon)
=\sup_{r \leq R}\Bigl(1-\frac{r}{N}\Bigr)(1-e^\varepsilon)+\frac{r}{N}
=\sup_{r \leq R}1-\Bigl(1-\frac{r}{N}\Bigr)e^\varepsilon = 1-\Bigl(1-\frac{R}{N}\Bigr)e^\varepsilon.
\]

\[
e^{-\varepsilon}\bigl(1-\delta(\varepsilon)-\alpha\bigr)
=e^{-\varepsilon}\Bigl(\bigl(1-\tfrac{R}{N}\bigr)e^\varepsilon-\alpha\Bigr)
=\Bigl(1-\tfrac{R}{N}\Bigr)-\alpha e^{-\varepsilon}.
\]
the supremum over
$\varepsilon<0$ is attained at $\varepsilon\uparrow 0$ and equals $1-\frac{R}{N}-\alpha$. Taking the max over both cases proves the claim.
\end{proof}

\begin{theorem}[Composing DPA with another mechanism]
\label{thm:DPA_compose_base_appendix}
Let $\mathcal D_N$ denote the DPA mechanism with $N$ partitions, and let $\mathcal B$ be any inference-time mechanism on $\mathbb X$ that is $f_b$-DP with privacy profile $\delta_b(\epsilon)$ for a neighboring relation $\sim_b$. Define the combined relation as $\sim_c$ as $(\vect X, \vect x) \sim_c (\vect {X^{'}}, \vect {x^{'}}) \iff d_{\text{train}}(\vect X, \vect {X^{'}}) \;\; \land \;\; \vect x \sim_b \vect{x^{'}}$. Then the composed mechanism $\mathcal B \circ \mathcal D_N$, defined on $\mathcal \bigcup_{i = 1}^{N} T_i \times \mathbb X$ is $f_c$-DP with privacy profile $\delta_c(\epsilon)$ for the neighboring relation $\sim_c$,, where
\[
f_c(\alpha) =
\begin{dcases}
\Bigl(1 - \frac{R}{N}\Bigr)\,
f_b\!\left(\frac{\alpha}{1 - \frac{R}{N}}\right),
& \text{if } \alpha \le 1 - \frac{R}{N}, \\[6pt]
0, & \text{otherwise},
\end{dcases}
\]
and
\[
\delta_c(\epsilon)
=
\frac{R}{N}
+
(1 - \frac{R}{N})\,\delta_b(\epsilon).
\]
\end{theorem}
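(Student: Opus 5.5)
We plan to obtain the privacy profile first, via a mixture decomposition of the composed output distribution, and then read off $f_c$ from the dual-to-primal conversion of Theorem~\ref{thm:primal_dual_asymmetric}.

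Fix $(\vect X,\vect x)\sim_c(\vect X',\vect x')$ with $r\le R$ changed partitions. Write $P=\mathcal B\circ\mathcal D_N(\vect X,\vect x)$ and $Q=\mathcal B\circ\mathcal D_N(\vect X',\vect x')$. The output is $(Y,z)$, where $Y$ is the sampled partition and $z\sim\mathcal B(\cdot\mid Y,\vect x)$.

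As in the proof of Theorem~\ref{App:Thm:DPA}, the partitions split into two groups. There are $N-r$ unchanged partitions, which are common to both supports. There are $r$ changed partitions, which are disjoint from the support of the other distribution. Hence
\[
P=\bigl(1-\tfrac rN\bigr)P_0+\tfrac rN P_1,
\qquad
Q=\bigl(1-\tfrac rN\bigr)Q_0+\tfrac rN Q_1,
\]
where $P_0$ and $Q_0$ place the same uniform law on the common partitions. Conditionally on $Y$, these components apply $\mathcal B$ to $\vect x$ and to $\vect x'$ respectively, while $P_1\perp Q_1$.

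The hockey-stick divergence is additive across mutually singular pieces. It then gives
\[
D^{\mathrm{HS}}_{e^\varepsilon}(P\|Q)=\bigl(1-\tfrac rN\bigr)D^{\mathrm{HS}}_{e^\varepsilon}(P_0\|Q_0)+\tfrac rN.
\]
The term $\tfrac rN$ arises because on $\operatorname{supp}P_1$ the measure $Q$ vanishes, and the part of $Q_1$ on its own support contributes $(0-e^\varepsilon q)_+=0$. This holds whenever the changed partitions are treated as distinct atoms, which is the same assumption used in Theorem~\ref{App:Thm:DPA}.

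Next we bound $D^{\mathrm{HS}}_{e^\varepsilon}(P_0\|Q_0)$. It is an average over common partitions $Y$ of $D^{\mathrm{HS}}_{e^\varepsilon}(\mathcal B(Y,\vect x)\|\mathcal B(Y,\vect x'))$. Each term is at most $\delta_b(\varepsilon)$, assuming the relation $\sim_b$ is understood uniformly over the model $Y$. Because the right-hand side is nondecreasing in $r$ (as $\delta_b\le1$), taking the supremum over $r\le R$ and over neighbouring pairs gives
\[
\delta_c(\varepsilon)\le \frac RN+\Bigl(1-\frac RN\Bigr)\delta_b(\varepsilon).
\]
For equality, we choose a pair with exactly $R$ changed partitions, and an inference pair together with a model attaining, or approaching, $\delta_b(\varepsilon)$. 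This is possible when $\mathcal D_N$ can place all common partitions on such a model, for example when $\mathcal B$ does not depend on the model. We will state this as the tightness condition, or else present the profile as an upper bound.

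For $f_c$, we substitute $\delta_c$ into the dual-to-primal formula. Writing $q=1-\frac RN$, we get
\[
f_c(\alpha)=\sup_\varepsilon e^{-\varepsilon}\bigl(q-q\delta_b(\varepsilon)-\alpha\bigr)
=q\,\sup_\varepsilon e^{-\varepsilon}\Bigl(1-\delta_b(\varepsilon)-\tfrac{\alpha}{q}\Bigr).
\]
For $\alpha\le q$, this equals $q f_b(\alpha/q)$, again by Theorem~\ref{thm:primal_dual_asymmetric} applied to $\mathcal B$.

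For $\alpha>q$, every term in the supremum is negative. The supremum equals $0$, approached as $\varepsilon\to\infty$, which gives the second case.

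Finally, continuity at $\alpha=q$ follows from $f_b(1)=0$, so $f_c$ is a valid tradeoff function.

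The main obstacle is the mixture and singularity step. We must justify that the HS divergence splits exactly over the common and changed partitions, which means handling any overlap of a changed partition with an unchanged one and the dependence of $\mathcal B$ on the sampled model. If this splitting is delicate, we will instead bound the divergence using joint convexity of $D^{\mathrm{HS}}$ for the common part and the trivial bound of $1$ for the changed part. This gives the same upper bound, and tightness is then shown separately with an explicit extremal pair.
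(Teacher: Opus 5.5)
Your proof is correct and follows the paper's argument. Your split into common partitions (weight $1-\frac{r}{N}$) and mutually singular changed partitions (weight $\frac{r}{N}$) is the paper's observation that the DPA privacy loss $Z_d$ is either $0$ or $+\infty$, so that $Z_c = Z_d + Z_b$ gives $\delta_c = \frac{r}{N} + (1-\frac{r}{N})\delta_b$; both proofs then use monotonicity in $r$ and the dual-to-primal conversion in the same way, and your treatment of distinct atoms, model dependence of $\mathcal B$, and the attainment condition for equality spells out assumptions the paper's PLD computation leaves implicit.
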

\begin{proof}
    Let $Z_d$, $Z_b$ be the PLD of DPA and the base mechanism, respectively. The PLD of the composed mechanism is given by $Z_c = Z_d + Z_b$. Assuming $Z_d$ and $Z_b$ to be independent, we can evaluate the CDF of the composed PLD $\Phi_c(z)$ for some $z \in \mathbb R$ as

    \begin{align*}
        \Phi_c(z) &= \mathbb P[Z_c \leq z] \\
        &= \mathbb P[Z_d + Z_b \leq z] \\
        &= \mathbb E_{z_d \in \{0, +\infty\}} [\mathbb P[Z_b \leq z - z_d]] \\
        &= (1 - \frac{R}{N}) \Phi_b(z)
    \end{align*}

    This allows us to evaluate the mass at $+\infty$ as 
    \begin{align*}
        \mathbb P[Z_c = +\infty] &= 1 - \sup_z \Phi_c(z) \\
        &= 1 - (1 - \frac{R}{N})(1 - \mathbb P[Z_b = +\infty]) \\
        &= \frac{R}{N} + (1 - \frac{R}{N}) \mathbb P[Z_b = +\infty]
    \end{align*}

    Finally, we get the privacy profile from the PLD as,

    \begin{align*}
        \delta_c(\varepsilon) &=\mathbb E_{z\sim Z_c}\!\bigl[\,\bigl(1-e^{\varepsilon-z}\bigr)_+\,\bigr] \\
        &= \mathbb P[Z_c = +\infty] +  \mathbb E_{Z_d < \infty} [\mathbb E_{Z_b < \infty}[(1-e^{\varepsilon-z_d - z_b}\bigr)_+]] \\
        &=  \frac{R}{N} + (1 - \frac{R}{N}) \mathbb P[Z_b = +\infty] + (1- \frac{R}{N})\mathbb E_{Z_b < \infty}[(1-e^{\varepsilon-z_b}\bigr)_+] \\
        &=  \frac{R}{N} + (1 - \frac{R}{N}) \mathbb (P[Z_b = +\infty] + \mathbb E_{Z_b < \infty}[(1-e^{\varepsilon-z_b}\bigr)_+]) \\
        &= \frac{R}{N} + (1 - \frac{R}{N})\delta_b(\varepsilon)
    \end{align*}

    Notice that $\delta_b(\varepsilon) \leq 1$, this implies that the composed privacy profile $\delta_c(\varepsilon) = \delta_b(\varepsilon) + (1 - \delta_b(\varepsilon))\frac{R}{N}$, is monotonically increasing with respect to $R$. Therefore, the supremum over the neighboring relations is achieved at the maximum radius of perturbation $R$ in the training data.
    
  Now, to derive the optimal tradeoff function, use the result from \ref{thm:primal_dual_asymmetric}. 
    \begin{align*}
        f_c(\alpha) &= \sup_{\epsilon \in \mathbb R} e^{-\varepsilon}(1 - \delta_c(\varepsilon) - \alpha) \\
        &= \sup_{\epsilon \in \mathbb R} e^{-\varepsilon}(1 - \frac{R}{N} -(1 - \frac{R}{N})\delta_b(\varepsilon) - \alpha) \\
        &= (1 - \frac{R}{N})\sup_{\epsilon \in \mathbb R} e^{-\varepsilon}(1 - \delta_b(\varepsilon) - \frac{\alpha}{1 - \frac{R}{N}}) 
    \end{align*}

    Clearly, for $\alpha \leq 1 - \frac{R}{N}$, $f_c(\alpha) = (1 - \frac{R}{N})f_b(\frac{\alpha}{1 - \frac{R}{N}})$, and whenever $\alpha > 1 - \frac{R}{N}$, $1 - \frac{\alpha}{1 - \frac{R}{N}} - \delta(\varepsilon) < 0$ for every $\varepsilon$. Therefore, the supremum is taken as $\varepsilon \to \infty$ and is $0$.
\end{proof}

\subsection{Limitations of the Primal Perspective}
\label{appendix_subsec:limitaions_of_f-DP}
Primal DP ($f$-DP)~\citep{dong2019gaussian} offers an expressive framework for reasoning about tight composition of randomized mechanisms, including asymptotic behavior under composition. Precisely, if $f_1 = \Lambda(P_1 \,\|\, Q_1)$ and $f_2 = \Lambda(P_2 \,\|\, Q_2)$ are tradeoff functions, then their composition corresponds to the tradeoff function $\Lambda(P_1 \times P_2 \,\|\, Q_1 \times Q_2)$. However, this characterization does not provide a practical method for computing such compositions. Obtaining an analytic expression for the tradeoff function of a composition of mechanisms is
possible only when the convolutions of the corresponding Privacy Loss
Distributions (PLDs) can be computed analytically. We make
this limitation explicit by rewriting the Neyman--Pearson Lemma in terms of
PLDs in Theorem~\ref{thm:NP-PLRV}. 

We then revisit the tight characterization of the composition of randomized mechanisms from the $f$-DP perspective~\cite{dong2019gaussian}. Finally, we argue that evaluating the tradeoff function for a composition reduces to computing the convolution of the corresponding PLDs.

\begin{theorem}[Neyman--Pearson characterization via the PLRV]
\label{thm:NP-PLRV}
Assume that $\mathcal M(X)$ and $\mathcal M(X')$ admit densities $P$ and $Q$.
Define the privacy--loss random variables
\[
L_1 := \log \frac{P(y)}{Q(y)} \quad \text{for } y \sim P,
\qquad
L_2 := \log \frac{P(y)}{Q(y)} \quad \text{for } y \sim Q .
\]
Let $F_1$ and $F_2$ denote the cumulative distribution functions of $L_1$ and $L_2$.

Then for every $\alpha \in [0,1]$, the infimum over $\phi \in \mathcal H$ in
Equation~\eqref{eq:MainOPT} is attained by a likelihood--ratio test, and the
optimal value can be written in terms of the CDFs of the privacy--loss random
variables as
\[
\mathrm{Opt}(\alpha)
=
F_2\!\left(-F_1^{-1}(1-\alpha)\right),
\]
where $F_1^{-1}$ denotes the generalized inverse
\[
F_1^{-1}(u) := \inf\{ t \in \mathbb R : F_1(t) \ge u \}.
\]
\end{theorem}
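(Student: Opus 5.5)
The plan is to reduce $\mathrm{Opt}(\alpha)$, for the fixed pair $P=\mathcal M(X)$, $Q=\mathcal M(X')$, to the value of a Neyman--Pearson test (Lemma~\ref{lem:NP}), and then rewrite that value through the CDFs $F_1,F_2$ of the privacy-loss variables. Throughout I write $\ell(y)=\log\frac{P(y)}{Q(y)}$, taking values in $[-\infty,+\infty]$. Then $L_1=\ell(Y)$ with $Y\sim P$, and $L_2=\ell(Y)$ with $Y\sim Q$, exactly as in the statement.

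\textbf{Step 1: the optimal test is a likelihood-ratio test.} Consider minimizing $\mathbb E_Q[\phi]$ subject to $\mathbb E_P[\phi]=\alpha$. Swapping the roles of the hypotheses in the Neyman--Pearson lemma, I expect the minimizer to have the form
\[
\phi^\star=\mathbb 1\{\ell>t\}+\gamma\,\mathbb 1\{\ell=t\}.
\]
Its optimality follows from a Lagrangian argument. Pointwise, $\phi^\star$ maximizes $\phi\,(P-e^{t}Q)$, so for any feasible $\phi$,
\[
\mathbb E_P[\phi^\star-\phi]-e^{t}\,\mathbb E_Q[\phi^\star-\phi]\ge 0 .
\]
Since both tests have $\mathbb E_P$-value $\alpha$, the first term vanishes and hence $\mathbb E_Q[\phi^\star]\le\mathbb E_Q[\phi]$. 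The threshold is matched to the size constraint by taking $t=F_1^{-1}(1-\alpha)$ with the generalized inverse from the statement; this gives $P(\ell>t)\le\alpha\le P(\ell\ge t)$. When $F_1$ has an atom at $t$, I choose $\gamma\in[0,1]$ to fill the gap exactly. The edge cases $t=\pm\infty$ arise for $\alpha\in\{0,1\}$ and on the singular parts where $P\perp Q$; I will treat these separately using the convention that $\ell=+\infty$ on $\{Q=0\}$.

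\textbf{Step 2: evaluate the value.} With $\phi^\star$ fixed,
\[
\mathrm{Opt}(\alpha)=Q(\ell>t)+\gamma\,Q(\ell=t).
\]
This is an expression in the law of $L_2$: at continuity points it equals $1-F_2(t)$, with the atom contribution handled by $\gamma$ in general. At this stage the natural closed form is $1-F_2\bigl(F_1^{-1}(1-\alpha)\bigr)$.

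\textbf{Step 3 (main obstacle): matching the stated form $F_2(-F_1^{-1}(1-\alpha))$.} To pass from Step 2 to the statement I need the identity
\[
Q(\ell>t)=Q(\ell\le -t)=F_2(-t),
\]
that is, the law of $L_2$ must be symmetric under $s\mapsto -s$. I would first check this on the Gaussian pair $P=\mathcal N(\mu,1)$, $Q=\mathcal N(0,1)$. There $L_2\sim\mathcal N(-\mu^2/2,\mu^2)$ is symmetric about $-\mu^2/2$, not about $0$, so the reflection only gives
\[
1-F_2(t)=F_2(-\mu^2-t).
\]
Hence the identity does not hold for general $(P,Q)$, and I would establish the stated formula under an explicit additional hypothesis.

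The first hypothesis I would try is that $L_2$ is centred-symmetric; this holds, for example, for pairs with $\ell$ odd under a measure-preserving involution exchanging nothing. If that proves too restrictive, I would adopt the reading in which the second loss is reflected, since that is the only version that the Neyman--Pearson computation actually supports. The two ingredients above (the likelihood-ratio optimality and the $\gamma$-randomization at atoms) are unaffected by this choice. The only genuinely delicate point is this final CDF reflection, together with the boundary conventions at atoms and at $\pm\infty$.
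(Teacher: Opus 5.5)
Your Step~3 diagnosis is correct, and it is the crux: with $L_2=\log\frac{P}{Q}$ under $Q$, as the statement literally defines it, the claimed formula is false. Take $P=\mathcal N(1,1)$, $Q=\mathcal N(0,1)$ and $\alpha=\tfrac12$. The optimal test is $\mathbbm 1\{y\ge 1\}$, with value $1-\Phi(1)\approx 0.16$. The formula instead gives $F_1^{-1}(\tfrac12)=\tfrac12$ and $F_2(-\tfrac12)=\Phi(0)=\tfrac12$. In fact, for $P=\mathcal N(\mu,1)$, $Q=\mathcal N(0,1)$ with $\mu\neq 0$, the literal formula always returns $\alpha$.

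The paper's proof uses exactly what you call the reflected reading. It silently redefines $L_2:=\log\frac{Q(y)}{P(y)}$ for $y\sim Q$ and cites the Neyman--Pearson lemma (Lemma~\ref{lem:NP}) for $\phi^\star=\mathbbm 1\{P/Q\ge\kappa\}$. It then sets $\tau=\log\kappa=F_1^{-1}(1-\alpha)$ and reads off $\mathrm{Opt}(\alpha)=Q(\ell\ge\tau)=Q(L_2\le-\tau)=F_2(-\tau)$. So your Steps~1--2 plus the reflection are the paper's argument; you simply prove the Lagrangian/Neyman--Pearson step inline instead of citing it. Commit to that reading, and say explicitly that the statement's definition of $L_2$ must be corrected. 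Your Step~2 form $1-F_2(t)$, with the original $L_2$, is an equally valid repair; both forms equal $\mathrm{Opt}(\alpha)$ when there is no atom at the threshold.

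Two things need cleaning up.

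First, discard the ``centred-symmetric $L_2$'' option, because it is vacuous. Since $(\log x)_+\le x$, the positive part of $L_2$ is $Q$-integrable. Symmetry about $0$ then forces $\mathbb E_Q[L_2]=-\mathrm{KL}(Q\,\|\,P)=0$, i.e.\ $P=Q$. Even in that case the formula fails: $L_1=L_2=0$ almost surely gives $F_1^{-1}(1-\alpha)=0$ and $F_2(0)=1\neq\alpha$ for $\alpha\in[0,1)$. The clause about an involution ``exchanging nothing'' does not describe an actual condition.

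Second, keep your $\gamma$. Under the reflected reading, $F_2(-t)=Q(\ell\ge t)$, while the true optimum is $Q(\ell>t)+\gamma\,Q(\ell=t)$. The closed form is therefore exact only if $Q(\ell=t)=0$ (equivalently $P(\ell=t)=0$ for finite $t$) or $\gamma=1$. Otherwise it overestimates $\mathrm{Opt}(\alpha)$, which is the unsafe direction for certification. The case $P=Q$ above is a counterexample to the reflected formula as well. The paper's proof passes over this when it claims that $\mathbb P_{y\sim P}[L_1\ge\tau]=\alpha$ is equivalent to $F_1(\tau)=1-\alpha$. Your write-up should either add the hypothesis that $L_1$ has no atom at $F_1^{-1}(1-\alpha)$ (e.g.\ $L_1$ atomless), or state the result in the randomized form.
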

\begin{proof}
Let $P$ and $Q$ denote the densities of $\mathcal M(X)$ and $\mathcal M(X')$,
respectively.  By Lemma~\ref{lem:NP}, the infimum over $\phi \in \mathcal H$
in Equation~\eqref{eq:MainOPT} is attained by a likelihood--ratio test.
Hence, there exists a threshold $\kappa \ge 0$ such that the optimal test is
\[
\phi^\star(z)=\mathbbm{1}\!\left\{\frac{P(z)}{Q(z)} \ge \kappa\right\},
\]
where $\kappa$ is chosen so that $\mathbb E_{z\sim P}[\phi^\star(z)] = \alpha$.

Define the privacy-loss random variables
$L_1 := \log \frac{P(y)}{Q(y)}$ for $y \sim P$ and
$L_2 := \log \frac{Q(y)}{P(y)}$ for $y \sim Q$,
and let $F_1$ and $F_2$ denote their cumulative distribution functions.

Taking logarithms in the definition of $\phi^\star$ gives
$\phi^\star(y) = \mathbbm{1}\{L_1(y) \ge \tau\}$, where $\tau := \log \kappa$.
The constraint $\mathbb E_{y\sim P}[\phi^\star(y)] = \alpha$ therefore becomes
$\mathbb P_{y\sim P}[L_1(y) \ge \tau] = \alpha$, which is equivalent to
$F_1(\tau) = 1-\alpha$.
By definition of the generalized inverse, $\tau = F_1^{-1}(1-\alpha)$.

It remains to express the optimal value. By Lemma~\ref{lem:NP},
\[
\mathrm{Opt}(\alpha)
= \mathbb P_{y\sim Q}\!\left[\frac{P(y)}{Q(y)} \ge \kappa\right]
= \mathbb P_{y\sim Q}[L_2(y) \le -\tau].
\]
Substituting $\tau = F_1^{-1}(1-\alpha)$ and using the definition of the CDF $F_2$ yields
\[
\mathrm{Opt}(\alpha)
= F_2\!\left(-F_1^{-1}(1-\alpha)\right),
\]
which proves the claim.

\begin{remark}
If $\alpha \le \mathbb P[L_1 = +\infty] =  \mathbb P_{y \sim P}[Q = 0]$, the optimal test rejects only on the
set where the likelihood ratio is infinite, and therefore
$\mathrm{Opt}(\alpha)=0$.  In particular, the mass at $+\infty$ in the distribution of $L_1$ determines the smallest non-trivial value of $\alpha$, and hence must be tracked when working with the privacy loss distribution. Note that the above expression with the generalized inverse is consistent with this.
\end{remark}
\end{proof}

We now recall the tight characterization of the composition of randomized
mechanisms from the $f$-DP perspective~\cite{dong2019gaussian} and express it in terms of privacy--loss distributions. Let $\mathcal M_1$ and $\mathcal M_2$ be two randomized mechanisms with
dominating pairs $(P_1,Q_1)$ and $(P_2,Q_2)$, respectively.  It is shown in~\cite{dong2019gaussian} that the composed mechanism $\mathcal M_2 \circ \mathcal M_1$ admits the product pair
\[
(P_1 \times P_2,\; Q_1 \times Q_2)
\]
as a dominating pair, where $\times$ denotes the product (independent) distribution..

We now express the tradeoff function of the composition using
Theorem~\ref{thm:NP-PLRV}.  Let
\[
L_1 := \log \frac{P_1(y_1)}{Q_1(y_1)}, \qquad y_1 \sim P_1,
\]
and
\[
L_2 := \log \frac{P_2(y_2)}{Q_2(y_2)}, \qquad y_2 \sim P_2,
\]
denote the privacy-loss random variables of the two mechanisms, and let
$F_1$ and $F_2$ be their cumulative distribution functions.

For the composed mechanism, the privacy-loss random variable becomes
\begin{align*}
L(y_1,y_2) &=
\log \frac{P_1(y_1)P_2(y_2)}{Q_1(y_1)Q_2(y_2)}, \qquad y_1 \sim P_1, \; y_2 \sim P_2 \\ 
&= L_1(y_1) + L_2(y_2).
\end{align*}
Hence, the PLD of the composition is the distribution of the sum
$L_1 + L_2$. Applying Theorem~\ref{thm:NP-PLRV} to the product pair
$(P_1 \times P_2,\; Q_1 \times Q_2)$ yields that the tradeoff function of the composed mechanism is
\[
\mathrm{Opt}_{\mathcal M_2 \circ \mathcal M_1}(\alpha)
=
F^{(Q)}_{L_1+L_2}
\!\left(
- \left(F^{(P)}_{L_1+L_2}\right)^{-1}(1-\alpha)
\right),
\]
where $F^{(P)}_{L_1+L_2}$ and $F^{(Q)}_{L_1+L_2}$ denote the CDFs of
$L_1+L_2$ under $P_1\times P_2$ and $Q_1\times Q_2$, respectively. In particular, evaluating the tradeoff function for the composition reduces
to computing the cumulative distribution function of the sum $L_1+L_2$, that is, the convolution of the two privacy-loss distributions. Therefore, to obtain an analytic expression for the tradeoff function of a composition, it is important that the convolution of the Privacy-loss distributions can be computed explicitly. For a general mechanism,
this may be intractable, but it works for the Gaussian mechanism, as shown in \citet{dong2019gaussian, lyu2024adaptive}.

\paragraph{Example: Gaussian mechanisms.}  
Let $\mathcal M$ be a Gaussian mechanism with standard deviation $\sigma$, consider the dominating pair
\[
P = \mathcal N(\Delta, \sigma^2), \qquad Q = \mathcal N(0, \sigma^2).
\]
The log-likelihood ratio is
\[
L(y) = \log \frac{dP}{dQ}(y) = \frac{\Delta^2}{2\sigma^2} - \frac{\Delta}{\sigma^2} y,
\]
which is Gaussian under $y \sim P$. Therefore, the PLD of each mechanism is Gaussian, and the PLD of a composition, given by the sum $L_1 + L_2$ of independent Gaussian PLDs, is also Gaussian. Computing the convolution then reduces to simple addition of means and variances, which makes analytic computation of the tradeoff function straightforward, as
done in \citet{dong2019gaussian,lyu2024adaptive}.

We next outline the settings in which the primal perspective works and those in which it does not. 
\paragraph{Simple Mechanisms}
Various mechanisms, such as Gaussian, Laplacian, and Uniform, have been extensively analyzed using the Neyman–Pearson lemma for evasion robustness~\cite{cohen2019randomized, teng2020ell, yang2020randomizedsmoothingshapessizes}. While the composition of Gaussian mechanisms has been analyzed in \citet{dong2019gaussian}.
\paragraph{Subsampled Gaussian Mechanisms}
Analyzing subsampled Gaussian mechanisms using the Neyman–Pearson lemma is challenging~\citep{scholten2024hierarchicalrandomizedsmoothing}. However, we need compositions of such mechanisms to obtain poisoning and joint robustness guarantees while using DP-SGD for training.
\paragraph{Heterogeneous Composition}
In general, the primal DP characterization of heterogeneous compositions is not tractable. However, we need such compositions to combine mechanisms such as DP-SGD with Gaussian mechanisms or DPA with DP-SGD. In contrast, we show in Theorem~\ref{thm:DPA_compose_base} that DPA can be analytically composed with a base mechanism that admits a tractable primal characterization.
\section{Randomized Smoothing-based Certification}
\label{Appendix:sec:RS}
 We first introduce standard randomized smoothing through a Neyman-Pearson hypothesis testing formulation~\cite{cohen2019randomized,lee2019tight}. We then present its reinterpretation through the lens of $f$-differential privacy, following~\cite{lyu2024adaptive}. We begin by defining a smoothed classifier induced by a randomized mechanism $\mathcal M$ together with a collection of hypothesis tests $\{\phi_i\}_{i = 1}^{C}$.
\begin{definition}[$C$-Class Smoothed Classifier]
\label{def:c_class_smoothed_classifier}
Let $\mathcal M$ be a randomized mechanism mapping an input
$\vect X \in \mathbb X$ to an intermediate space $\mathbb Y$.
Let $\{\phi_i\}_{i=1}^C$ be a collection of hypothesis tests
\[
\phi_i : \mathbb Y \to (\{0,1\}),
\quad i \in \{1,\dots,C\},
\]
with $\phi_C = 1 - \sum_{i \neq C} \phi_i$.
The corresponding smoothed class probabilities are defined as
\[
p_i
\;:=\;
\mathbb E_{y \sim \mathcal M(\cdot \mid \vect X)}
\bigl[\phi_i(y)\bigr],
\quad i \in \{1,\dots,C\}.
\]
The induced smoothed classifier predicts the label
\[
\argmax_{i \in \{1,\dots,C\}} \; p_i .
\]
\end{definition}

For example, in Gaussian smoothing~\cite{salman2019provably,cohen2019randomized,zhai2020macer}, a Gaussian randomized mechanism $\mathcal G_{\sigma}: \mathbb{R}^D \to \mathbb R^D$ is applied to an input $\vect x \in \mathbb R^D$, producing a Gaussian distribution $\mathcal G_{\sigma}(\vect x):= \mathcal N(\vect x, \sigma^2 \mathbb I_D)$ over $\mathbb R^D$. A neural network is then applied as a hypothesis test, and the expectation over the Gaussian noise yields the smoothed prediction.

\paragraph{Certification as an Optimization Problem}
Certifying the robustness of smoothed classifiers at an input point $X$ can be reduced to solving optimization problems of the form
\begin{equation}
    \label{eq:MainOPT_App}
    \begin{aligned}
        \text{Opt}(\alpha) :=\inf_{\phi \in \mathcal{H}} \inf_{d(X,X') \le \delta}
        &\; \mathbb{E}_{x \sim \mathcal{M}(X')}[\phi(x)] \\
        \text{s.t.} \quad
        &\; \mathbb{E}_{x \sim \mathcal{M}(X)}[\phi(x)] = \alpha.
    \end{aligned}
\end{equation}
where $\mathcal{H}$ denotes a class of hypothesis tests and $d(\cdot,\cdot)$ measures the distance between inputs~\cite{zhang2020blackboxcertificationrandomizedsmoothing}. 

In the multiclass setting, we consider a collection of class-wise hypothesis tests. Let $p_m$ denote the expected value of the hypothesis test corresponding to the predicted (majority) class, and let $p_i$ denote the expected value of the test corresponding to any other class $i \neq m$. To certify robustness, we must compare the worst-case value of the predicted class with the worst-case value of any competing class. In particular, we compute the worst-case lower bound for the predicted class, $\mathrm{OPT}(p_m)$, and the worst-case upper bound for every other class, which can be written as $1 - \mathrm{OPT}(1 - p_i)$. Robustness is certified whenever
\[
\mathrm{OPT}(p_m) > \max_{i \ne m} \left(1 - \mathrm{OPT}(1 - p_i)\right).
\]
We now state the Neyman–Pearson lemma~\cite{neyman1933ix} and an $f$-DP–based reformulation to characterize the optimization over $\mathcal H$ in Equation~\ref{eq:MainOPT}.
\begin{lemma}[Neyman--Pearson Characterization \cite{neyman1933ix}]
\label{lem:NP}
For fix $X, X'$ and $\alpha \in [0,1]$, the infimum over $\phi \in \mathcal H$ in Equation~\eqref{eq:MainOPT} is attained by a likelihood–ratio test.  
In particular, there exists a threshold $\kappa \ge 0$ such that the optimal hypothesis test
$\phi^\star \in \mathcal H$ is given by
\[
\phi^\star(z)
\;=\;
\mathbbm 1\!\left\{
\frac{d\mathcal M(X)}{d\mathcal M(X')}(z) \ge \kappa
\right\},
\]
where $\kappa$ is chosen to satisfy
\[
\mathbb E_{z \sim \mathcal M(X)}[\phi^\star(z)] = \alpha .
\]
The corresponding optimal value is
\[
\text{Opt}(\alpha)
=
\mathbb P_{z \sim \mathcal M(X')}
\!\left[
\frac{d\mathcal M(X)}{d\mathcal M(X')}(z) \ge \kappa
\right].
\]
\end{lemma}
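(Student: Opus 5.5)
We would prove the lemma by the classical Neyman--Pearson exchange argument, carried out with respect to a common dominating measure so that the infinite likelihood-ratio region causes no trouble. Fix $X, X'$, write $P := \mathcal M(X)$, $Q := \mathcal M(X')$, and, as in the proof of Lemma~\ref{lemma:HS-Div-via-HT}, let $D := \frac{P+Q}{2}$ with densities $p = \frac{dP}{dD}$ and $q = \frac{dQ}{dD}$. Define the likelihood ratio $L(z) := p(z)/q(z) \in [0,+\infty]$, with the convention $L = +\infty$ where $q = 0 < p$. Since $\phi$ ranges over $\mathcal H$ and $X, X'$ are fixed, the inner problem in Equation~\eqref{eq:MainOPT} reduces to minimizing $\mathbb E_Q[\phi]$ subject to $\mathbb E_P[\phi] = \alpha$.

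\textbf{Step 1 (choice of threshold).} Consider $G(k) := P(L \ge k)$, which is non-increasing and left-continuous in $k$, with $G(0) = 1$. Set $\kappa := \sup\{k \ge 0 : G(k) \ge \alpha\}$. Then $P(L > \kappa) \le \alpha \le P(L \ge \kappa)$. We will define $\phi^\star = \mathbbm 1\{L > \kappa\} + \gamma\,\mathbbm 1\{L = \kappa\}$, where $\gamma \in [0,1]$ is chosen so that $\mathbb E_P[\phi^\star] = \alpha$ exactly. When the tie set $\{L = \kappa\}$ is $P$-null, $\gamma$ can be taken to be $1$, which recovers the indicator form in the statement. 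In general the indicator is correct up to randomization on the tie set; the formula for $\mathrm{Opt}(\alpha)$ should then be read with the same boundary convention. The degenerate case $\alpha \le P(L = +\infty)$ gives $\kappa = +\infty$, a test supported on $\{q = 0\}$, and value $0$, matching the remark after Theorem~\ref{thm:NP-PLRV}.

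\textbf{Step 2 (optimality).} Let $\phi \in \mathcal H$ be any test with $\mathbb E_P[\phi] = \alpha$. We check pointwise that $(\phi^\star - \phi)(p - \kappa q) \ge 0$ $D$-a.e.:
\begin{itemize}
\item on $\{L > \kappa\}$ we have $\phi^\star = 1 \ge \phi$ and $p - \kappa q > 0$;
\item on $\{L < \kappa\}$ we have $\phi^\star = 0 \le \phi$ and $p - \kappa q < 0$;
\item on $\{L = \kappa\}$ the second factor vanishes.
\end{itemize}
Integrating against $D$ gives
\[
\bigl(\mathbb E_P[\phi^\star] - \mathbb E_P[\phi]\bigr) - \kappa\bigl(\mathbb E_Q[\phi^\star] - \mathbb E_Q[\phi]\bigr) \ge 0.
\]
The first bracket is zero, so for $0 < \kappa < \infty$ this yields $\mathbb E_Q[\phi^\star] \le \mathbb E_Q[\phi]$. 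The infimum is therefore attained by $\phi^\star$, and its value is $\mathbb E_Q[\phi^\star]$, which equals $Q(L \ge \kappa)$ up to the tie-set convention.

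\textbf{Step 3 (edge cases).} We treat $\kappa = 0$ and $\kappa = \infty$ separately. If $\kappa = 0$, then $\alpha$ is close to $1$ and $\phi^\star$ accepts everything outside a $P$-null set, so the inequality reduces to a direct comparison of supports. If $\kappa = \infty$, the value is trivially $0$.

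The main obstacle we anticipate is Step~1: making $\mathbb E_P[\phi^\star] = \alpha$ hold exactly when $P$ has atoms of $L$, and handling the infinite-ratio set cleanly. Working with respect to $D$ and allowing randomization on the tie set should resolve both issues.
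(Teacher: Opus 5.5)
Your exchange argument is the standard proof of the classical Neyman--Pearson lemma, which is all the paper relies on here (it cites the result rather than proving it). Your caveat that exact size $\alpha$ generally requires randomizing on the tie set $\{L=\kappa\}$ is a correct refinement of the statement as written.

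One detail needs tightening. $\kappa=0$ occurs only when $\alpha=1$, because $P(L>0)=1$. In that case you must take $\gamma=0$, i.e.\ $\phi^\star=\mathbbm{1}\{p>0\}$, with value $Q(p>0)$. Your Step~1 default of $\gamma=1$ on a $P$-null tie set would instead give $\phi^\star\equiv 1$, which is suboptimal whenever $Q(p=0)>0$. So the literal form $\mathbbm{1}\{L\ge\kappa\}$ in the statement can also fail at $\alpha=1$.
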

\todo{Add more intuition here}
\begin{lemma}[$f$-DP Reformulation of Robustness Certification \cite{lyu2024adaptive}]
\label{lemma:fdp_reformulation_App}
Let $\sim_\delta$ be the relation defined by
\[
X \sim_\delta X'
\quad \text{if and only if} \quad
d(X,X') \le \delta.
\]
If the mechanism $\mathcal M$ satisfies $f$-differential privacy with respect to $\sim_\delta$, then
\[
\text{Opt}(\alpha) \;\ge\; f(1-\alpha).
\]
\end{lemma}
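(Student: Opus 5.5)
We prove the statement by fixing a pair $X \sim_\delta X'$ and reducing the inner infimum over tests to a single tradeoff function. The $f$-DP assumption then controls that tradeoff function. Fix $X, X'$ with $d(X,X')\le\delta$ and write $P:=\mathcal M(X)$, $Q:=\mathcal M(X')$. It suffices to show that for every test $\phi\in\mathcal H$ with $\mathbb E_P[\phi]=\alpha$ we have $\mathbb E_Q[\phi]\ge f(1-\alpha)$. Taking the infimum over $\phi$ and over $X'$ then gives $\mathrm{Opt}(\alpha)\ge f(1-\alpha)$.

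The key step is to pass to the complementary test $\psi:=1-\phi$, which is again a valid randomized test. Its errors, in the convention of the tradeoff function $\Lambda(P\,\|\,Q)$, are:
\begin{itemize}
\item Type~I error $\mathbb E_P[\psi]=1-\alpha$;
\item Type~II error $\mathbb E_Q[1-\psi]=\mathbb E_Q[\phi]$.
\end{itemize}
By the definition of $\Lambda(P\,\|\,Q)$ as the infimum of the Type~II error over tests with Type~I error fixed at $1-\alpha$,
\[
\mathbb E_Q[\phi]\;\ge\;\Lambda(P\,\|\,Q)(1-\alpha).
\]
Since $\mathcal M$ is $f$-DP with respect to $\sim_\delta$ and $X\sim_\delta X'$, we have $\Lambda(P\,\|\,Q)\ge f$ pointwise, hence $\mathbb E_Q[\phi]\ge f(1-\alpha)$.

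The main point needing care is the orientation of the relation. If $\sim_\delta$ is induced by a general distance, it is symmetric, so which of $X, X'$ plays the role of the first argument is immaterial. In the asymmetric setting, however, we must check that $\Lambda(\mathcal M(X)\,\|\,\mathcal M(X'))$ is the tradeoff function constrained by $f$-DP, and the complement trick above handles exactly this orientation. A second, minor point is that the tradeoff function is defined with an equality constraint on the Type~I error. The test $\psi$ achieves Type~I error exactly $1-\alpha$, so no relaxation or convexity argument is needed.
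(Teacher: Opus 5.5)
Your proof is correct. For a fixed $X'$ with $d(X,X')\le\delta$, the complementary test $1-\phi$ has Type~I error exactly $1-\alpha$ under $\mathcal M(X)$ and Type~II error $\mathbb E_{\mathcal M(X')}[\phi]$. Hence $\mathbb E_{\mathcal M(X')}[\phi]\ge\Lambda(\mathcal M(X)\,\|\,\mathcal M(X'))(1-\alpha)\ge f(1-\alpha)$, and taking infima over $\phi$ and $X'$ finishes the argument.

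The paper gives no proof of its own for this lemma; it imports the statement from \citet{lyu2024adaptive}. Your complement-test argument is the standard one behind that result, and it needs neither Neyman--Pearson nor attainment of the infimum.

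One small clarification on your orientation remark. The orientation is already fixed by the definitions, since both $\sim_\delta$ and the infimum in $\mathrm{Opt}$ range over pairs with $d(X,X')\le\delta$, with $X$ the unperturbed point. What the complement trick actually does is let you use $\Lambda(\mathcal M(X)\,\|\,\mathcal M(X'))$ rather than the reversed tradeoff function. So your argument does not need $d$ to be symmetric.
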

\paragraph{Smooth Joint Training-Test Procedure}
We introduce the randomized smoothing framework for a joint training--test procedure. 
This formulation subsumes robustness guarantees for the training algorithm alone and 
inference-time robustness as special cases.

\begin{definition}[Randomized Joint Training--Test Procedure]
\label{def:Joint-training-procedure}
A randomized joint training--test procedure $\mathcal J$ for $C$-class classification operates as follows.
Given an input $(\vect X_{\text{train}}, \vect x_{\text{test}})
\;\in\;
\bigcup_{i=1}^{\infty} T_i \times \mathbb X$, the procedure first applies a randomized mechanism
\[
\mathcal M : \bigcup_{i=1}^{\infty} T_i \times \mathbb X \;\to\; \mathbb Y,
\]
mapping the input to an intermediate space $\mathbb Y$.
This is followed by a collection of $C$ hypothesis tests
\[
\phi_i : \mathbb Y \to \{0,1\},
\qquad i \in \{1,\dots,C\},
\]
with $\phi_C \;=\; 1 - \sum_{i \neq C} \phi_i$. For each $i \in \{1,\dots,C\}$, the corresponding expected value
\[
\mathbb E_{y \sim \mathcal M(\vect X_{\text{train}}, \vect x_{\text{test}})}
\bigl[\phi_i(y)\bigr]
\]
defines the (smoothed) probability of predicting class~$i$.
The final prediction of the procedure is given by
\[
\argmax_{i \in \{1,\dots,C\}}
\mathbb E_{y \sim \mathcal M(\vect X_{\text{train}}, \vect x_{\text{test}})}
\bigl[\phi_i(y)\bigr].
\]
\end{definition}

\paragraph{Example: DPSGD and Inference-Time Gaussian Noise}
Let $\mathcal M_i$ denote the randomized mechanism corresponding to the $i$-th iteration of DP-SGD. After $I$ iterations, the composed training-time mechanism. 
\[
\mathcal M_{\text{train}}
\;:=\;
\mathcal M_I \circ \mathcal M_{I-1} \circ \cdots \circ \mathcal M_1
\]
is applied to the training dataset $\vect X_{\text{train}}$, inducing a distribution $\mathcal M_{\text{train}}(\vect X_{\text{train}})$ over the learned network parameters $w$. At inference time, an additional Gaussian randomized mechanism $\mathcal G_\sigma(\vect x_{\text{test}}) := \mathcal N(\vect x_{\text{test}}, \sigma^2 \mathbb I)$ is applied to the test input. The overall randomized joint training--test mechanism is thus given by\[
\mathcal M(\vect X_{\text{train}}, \vect x_{\text{test}})
\;=\;
\bigl(
w \sim \mathcal M_{\text{train}}(\vect X_{\text{train}}),
\;
\tilde{\vect x} \sim \mathcal G_\sigma(\vect x_{\text{test}})
\bigr).
\]
A collection of hypothesis tests $\{\phi_i\}_{i=1}^C$, typically implemented as the components of a neural network’s softmax output, maps the pair $(w, \tilde{\vect x})$ to class-wise predictions. The smoothed probability of predicting class~$i$ is given by
\[
p_i
\;=\;
\mathbb E_{(w, \tilde{\vect x}) \sim \mathcal M(\vect X_{\text{train}}, \vect x_{\text{test}})}
\bigl[\phi_i(w, \tilde{\vect x})\bigr],
\qquad i \in \{1,\dots,C\}.
\]

\paragraph{Example: DPA and Inference-Time Noise}
DPA applies a deterministic partitioning function $H$ and maps the training dataset  \(
\vect X_{\text{train}} \in \bigcup_{i=1}^{\infty} T_i\) to a collection of $N$ disjoint subsets \(
\mathbf Y := \{Y_1,\dots,Y_N\}, %
\) where each partition $Y_i \in \bigcup_{i=1}^{\infty}T_i$ is itself a dataset. The mechanism $\mathcal M_{\text{DPA}}$ samples a partition $Y_i$ to get a distribution $\mathcal{M_{\text{DPA}}}(\vect X_{\text{train}})$, over the partitions (training datasets). Finally, a model is trained on the sampled dataset, and at inference time $\mathcal G_{\sigma}$ is applied to the test data, yielding a joint training--test mechanism \(
\mathcal M(\vect X_{\text{train}}, \vect x_{\text{test}}) = \bigl(
Y_i \sim \mathcal M_{\text{train}}(\vect X_{\text{train}}),
\;
\tilde{\vect x} \sim \mathcal G_\sigma(\vect x_{\text{test}})
\bigr) \). The process of training a model and evaluation class prediction at the noisy test sample is modeled by a collection of hypothesis tests $\{\phi_i\}_{i=1}^{C}$, leading to class probabilities as for a given $(\vect X_{\text{train}}, \vect x_{\text{test}})$ pair,
\[
 p_i = \mathbb E_{Y \sim \mathcal{M}_{\text{train}}(\vect X_{\text{train}}), \tilde{\vect x} \sim \mathcal G_\sigma(\vect x_{\text{test}})}[\phi_i(Y, \tilde{\vect x})].
\]

\section{Corrected group privacy analysis of Sub-sampled Gaussian using R\'enyi-DP}
\label{App:Sec:Corrected_RDP_group}
This section presents a corrected version of the group privacy guarantee for the
Sub-sampled Gaussian Mechanism (SGM) derived in \cite[Theorem~8]{liu2023enhancing}.
The issue is that the supporting theorem used in the original argument assumes
that the underlying function $f$ has $\ell_2$-sensitivity at most $1$ with respect
to datasets differing in up to $r$ examples. However, in the application to Theorem $8$, the assumption is that $f$ has $\ell_2$-sensitivity at most $1$ for datasets differing in one example. Consequently, for datasets differing in up to $r$ examples, the worst-case
$\ell_2$-sensitivity is at most $r$, not $1$. 
\begin{theorem}[Corrected improved R\'enyi-DP group privacy for SGM]
Let $\mathcal M$ be the Sub-sampled Gaussian Mechanism (SGM) with sampling
probability $q$ and Gaussian noise variance $\sigma^2 I_d$. Suppose that the
underlying function $f$ has single-example $\ell_2$-sensitivity at most $1$,
i.e., for any pair of datasets $D,D'$ differing in one example,
\[
\|f(D)-f(D')\|_2 \le 1 .
\]
Let $D_1$ and $D_3$ be two datasets such that
\[
D_3 \in \mathcal B(D_1,r),
\]
i.e., $D_1$ and $D_3$ differ in at most $r$ examples. Then, for any order
$\alpha>1$,
\[
D_\alpha\bigl(\mathcal M(D_1)\,\|\,\mathcal M(D_3)\bigr)
\le
\mathrm{SG}\!\left(
\alpha,1-(1-q)^r,\frac{\sigma}{r}
\right),
\]
and similarly,
\[
D_\alpha\bigl(\mathcal M(D_3)\,\|\,\mathcal M(D_1)\bigr)
\le
\mathrm{SG}\!\left(
\alpha,1-(1-q)^r,\frac{\sigma}{r}
\right).
\]

Here, for sampling probability $\bar q$ and noise level $\bar\sigma$,
$\mathrm{SG}(\alpha,\bar q,\bar\sigma)$ denotes the standard R\'enyi-DP bound
for the unit-sensitivity Sub-sampled Gaussian Mechanism:
\[
\mathrm{SG}(\alpha,\bar q,\bar\sigma)
:=
\max\left\{
D_\alpha\!\left(
\mathcal N(0,\bar\sigma^2)
\;\middle\|\;
(1-\bar q)\mathcal N(0,\bar\sigma^2)
+
\bar q\,\mathcal N(1,\bar\sigma^2)
\right),
\right.
\]
\[
\left.
D_\alpha\!\left(
(1-\bar q)\mathcal N(0,\bar\sigma^2)
+
\bar q\,\mathcal N(1,\bar\sigma^2)
\;\middle\|\;
\mathcal N(0,\bar\sigma^2)
\right)
\right\}.
\]
\end{theorem}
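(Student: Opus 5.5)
The plan is to repeat the argument behind \cite[Theorem~8]{liu2023enhancing}, this time tracking the sensitivity correctly. The key idea is to view the $r$ differing examples as a single ``super-example'' that is subsampled with probability $\bar q = 1-(1-q)^r$ and has $\ell_2$-sensitivity $r$. Rescaling by $1/r$ then reduces everything to the standard unit-sensitivity subsampled Gaussian bound $\mathrm{SG}(\alpha,\bar q,\sigma/r)$.

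I treat the nested case first, where $D_3 = D_1 \cup E$ with $|E| \le r$; the case $D_1 = D_3 \cup E$ is symmetric and yields the second inequality. Under Poisson subsampling, the minibatch drawn from $D_3$ splits into independent parts $S \cup T$, with $S \sim \mathrm{Poisson}_q(D_1)$ and $T \sim \mathrm{Poisson}_q(E)$. The event $T = \emptyset$ has probability $(1-q)^r$ when $|E| = r$, and at least that when $|E| < r$. Conditioning on $S$ gives
\[
\mathcal M(D_1 \mid S) = \mathcal N(f(S), \sigma^2 I), \qquad
\mathcal M(D_3 \mid S) = (1-\bar q)\,\mathcal N(f(S), \sigma^2 I) + \bar q\, \mathbb E_{T \mid T \neq \emptyset}\bigl[\mathcal N(f(S \cup T), \sigma^2 I)\bigr].
\]
Applying the single-example sensitivity assumption $|T| \le r$ times via the triangle inequality gives $\|f(S\cup T) - f(S)\|_2 \le r$. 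This is exactly the point the original proof gets wrong: it implicitly used sensitivity $1$ here. If $|E| < r$, the mixing weight is at most $\bar q$. Monotonicity of the subsampled-Gaussian Rényi bound in the sampling rate then lets me upper bound using $\bar q$.

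Next, I use joint quasi-convexity of $\exp((\alpha-1)D_\alpha)$, which follows from joint convexity of $\int p^\alpha q^{1-\alpha}$ for $\alpha>1$. Both conditional distributions share the same mixing variable $S$, so $D_\alpha(\mathcal M(D_1)\|\mathcal M(D_3)) \le \sup_S D_\alpha(\mathcal M(D_1\mid S)\|\mathcal M(D_3\mid S))$. Translating by $-f(S)$ and scaling by $1/r$ leaves Rényi divergence unchanged. The problem then becomes bounding
\[
D_\alpha\Bigl(\mathcal N(0,\bar\sigma^2 I)\,\Big\|\,(1-\bar q)\mathcal N(0,\bar\sigma^2 I) + \bar q\,\mathbb E_{\mu}[\mathcal N(v,\bar\sigma^2 I)]\Bigr),
\]
and the reverse divergence, for $\bar\sigma = \sigma/r$ and an arbitrary distribution $\mu$ supported on the unit ball.

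The main obstacle is this last reduction. I must show that a general mixing distribution $\mu$ over the unit ball is dominated by a point mass at a unit vector, which by rotational invariance reduces to the one-dimensional $\mathrm{N}(0,\bar\sigma^2)$ versus $\mathrm{N}(1,\bar\sigma^2)$ pair in the definition of $\mathrm{SG}$. My first step is to apply joint convexity once more, now in the mixing component. The second argument is affine in $\mu$, so the divergence is convex in $\mu$ and its supremum is attained at extreme points, i.e.\ point masses $\delta_v$ with $\|v\|\le 1$. The first argument is the same for every $\mu$, so this step applies to both divergence directions. I then need monotonicity in $\|v\|$: the divergence for $\|v\|<1$ should be at most that for $\|v\|=1$. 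I would take this from the known results on subsampled Gaussian RDP, namely Mironov et al.\ and the tight analysis of Zhu and Wang. If a direct citation does not fit, I would instead argue through the one-dimensional projection onto $v$ and show directly that the Rényi integral is non-decreasing in the shift. Putting the two directions together produces the maximum in $\mathrm{SG}(\alpha,\bar q,\sigma/r)$.
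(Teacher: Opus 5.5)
Your proposal is correct and follows essentially the same route as the paper. You condition on the common Poisson subsample via quasi-convexity and use the corrected bound $\|f(S\cup T)-f(S)\|_2\le |T|\le r$, so the added records act as one super-example sampled at rate $1-(1-q)^r$; you then reduce to the one-dimensional Gaussian pair by rotational symmetry and rescale by $r$ with monotonicity to reach $\mathrm{SG}(\alpha,1-(1-q)^r,\sigma/r)$. Your convexity argument that collapses the mixture over nonempty $T$ to a single shift, and your monotonicity-in-sampling-rate step for $|E|<r$, spell out details the paper leaves to the argument of Liu et al.
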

\begin{proof}
Let $S,S'$ be a pair of datasets that differ in $r$ examples, and write
\[
S' = S\cup\{x_1,\ldots,x_r\}.
\]
Following the argument of \cite[Theorem~10]{liu2023enhancing}, using quasi-convexity of R\'enyi divergence, we obtain
\[
\begin{aligned}
D_\alpha(\mathcal M(S)\|\mathcal M(S'))
&\le
\sup_T
D_\alpha\left(
\mathcal N(0,\sigma^2 I_d)
\;\middle\|\;
\begin{aligned}
&\sum_{V\subseteq \{x_1,\ldots,x_r\}}
q^{|V|}(1-q)^{r-|V|} \\
&\qquad\qquad\cdot
\mathcal N(f(T\cup V)-f(T),\sigma^2 I_d)
\end{aligned}
\right).
\end{aligned}
\]

\textbf{Correction:} The key correction is the following sensitivity bound. Since $f$ has single-example $\ell_2$-sensitivity at most $1$, for every
$V\subseteq\{x_1,\ldots,x_r\}$,
\[
\|f(T\cup V)-f(T)\|_2 \le |V| \le r .
\]
Using the rotational symmetry of the Gaussian noise and the additivity of
R\'enyi divergence for product distributions, the above is bounded by the
one-dimensional worst case
\[
D_\alpha(\mathcal M(S)\|\mathcal M(S'))
\le
\sup_{c\le r}
D_\alpha\left(
\mathcal N(0,\sigma^2)
\;\middle\|\;
(1-q)^r\mathcal N(0,\sigma^2)
+
\bigl(1-(1-q)^r\bigr)\mathcal N(c,\sigma^2)
\right).
\]
Equivalently,
\begin{align*}
D_\alpha(\mathcal M(S)\|\mathcal M(S'))
&\le
\sup_{c'\le 1}
D_\alpha\left(
\mathcal N(0,(\sigma/(rc'))^2)
\;\middle\|\;
\begin{aligned}
& (1-q)^r\mathcal N(0,(\sigma/(rc'))^2) \\
&\quad + \bigl(1-(1-q)^r\bigr)
\mathcal N(1,(\sigma/(rc'))^2)
\end{aligned}
\right).
\end{align*}

Follow the rest of the arguments of \cite[Theorem~10]{liu2023enhancing}, to conclude:
\[
D_\alpha(\mathcal M(S)\|\mathcal M(S'))
\le
D_\alpha\left(
\mathcal N(0,(\sigma/r)^2)
\;\middle\|\;
(1-q')\mathcal N(0,(\sigma/r)^2)
+
q'\mathcal N(1,(\sigma/r)^2)
\right),
\]
where
\[
q' = 1-(1-q)^r .
\]
The reverse direction is identical and gives
\[
D_\alpha(\mathcal M(S')\|\mathcal M(S))
\le
D_\alpha\left(
(1-q')\mathcal N(0,(\sigma/r)^2)
+
q'\mathcal N(1,(\sigma/r)^2)
\;\middle\|\;
\mathcal N(0,(\sigma/r)^2)
\right).
\]
Thus the sampled Gaussian mechanism satisfies the claimed bound with effective
sampling probability $q'=1-(1-q)^r$ and effective noise level $\sigma/r$.
\end{proof}
\clearpage

\input{content/random_preprocessing}

\clearpage

\section{Additional experiments}\label{app:more}

\begin{figure}[htbp]
    \centering
    \begin{minipage}{0.48\textwidth}
        \centering
        \includegraphics[width=\textwidth]{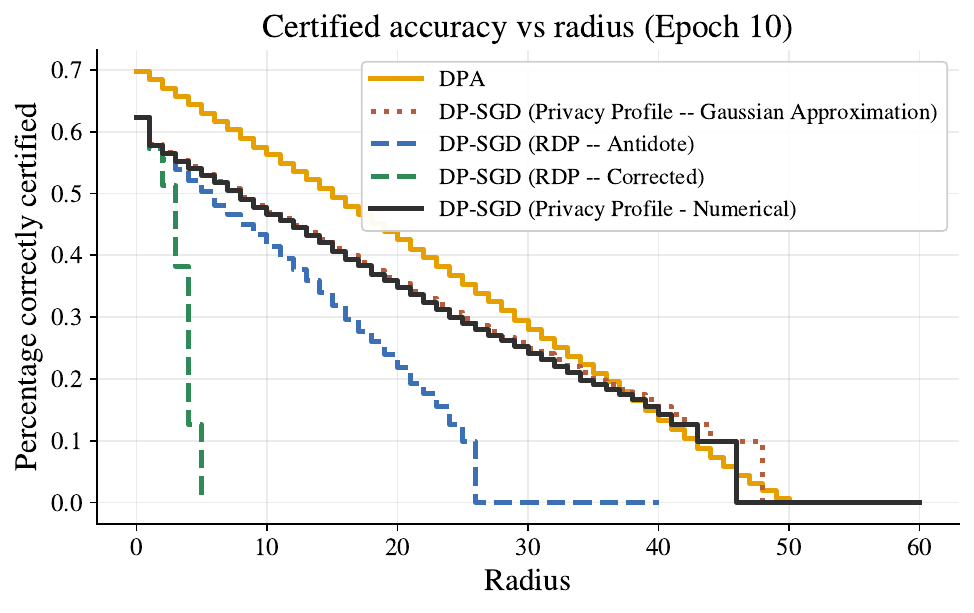}
        \caption{Comparison of certified accuracies of our method (Privacy Profile -- Numerical) for DP-SGD with epoch 10, with RDP-based accounting and DPA using $100$ partitions for CIFAR-10.}
        \label{fig:cifar10_certacc_sigma_0}
    \end{minipage}
    \hfill
    \begin{minipage}{0.48\textwidth}
        \centering
        \includegraphics[width=\textwidth]{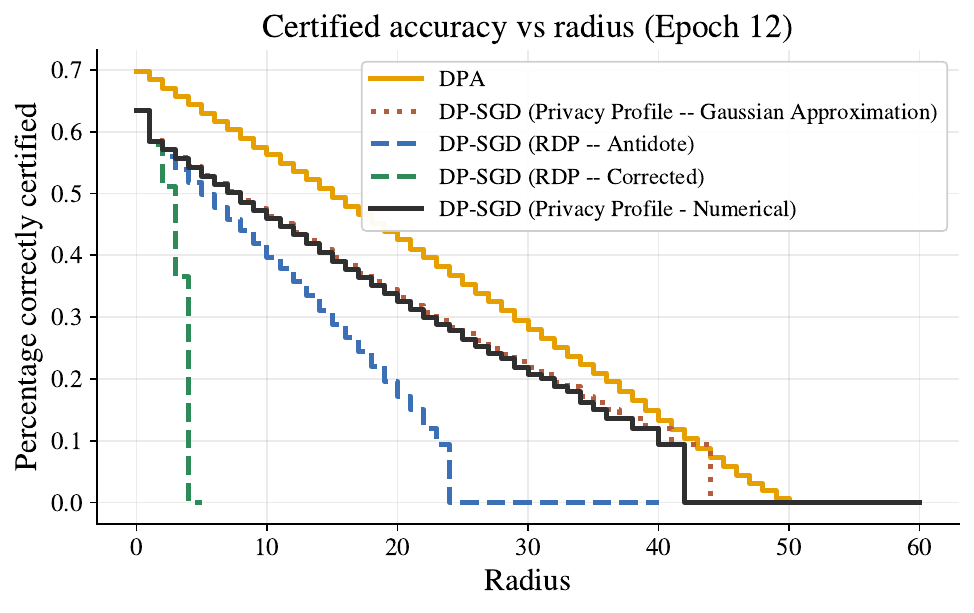}
        \caption{Comparison of certified accuracies of our method (Privacy Profile -- Numerical) for DP-SGD with epoch 12, with RDP-based accounting and DPA using $100$ partitions for CIFAR-10.}
        \label{fig:cifar10_certacc_sigma_0.01}
    \end{minipage}
\end{figure}

\begin{figure}[htbp]
    \centering
    \begin{minipage}{0.48\textwidth}
        \centering
        \includegraphics[width=\textwidth]{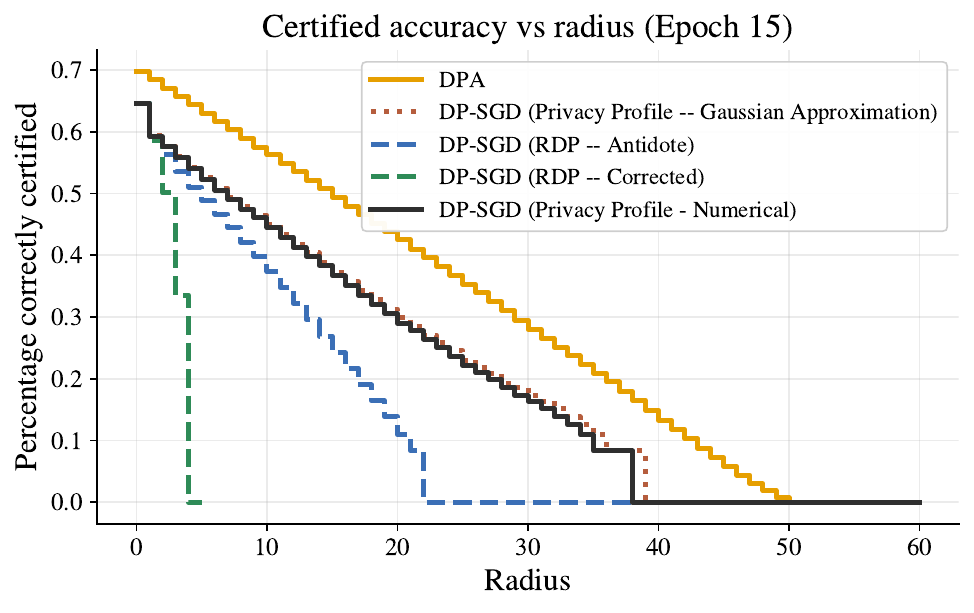}
        \caption{\small{Comparison of certified accuracies of our method (Privacy Profile -- Numerical) for DP-SGD with epoch 15, with RDP-based accounting and DPA using $100$ partitions for CIFAR-10.}}
        \label{fig:cifar10_certacc_sigma_0.02}
    \end{minipage}
    \hfill
    \begin{minipage}{0.48\textwidth}
        \centering
        \includegraphics[width=\textwidth]{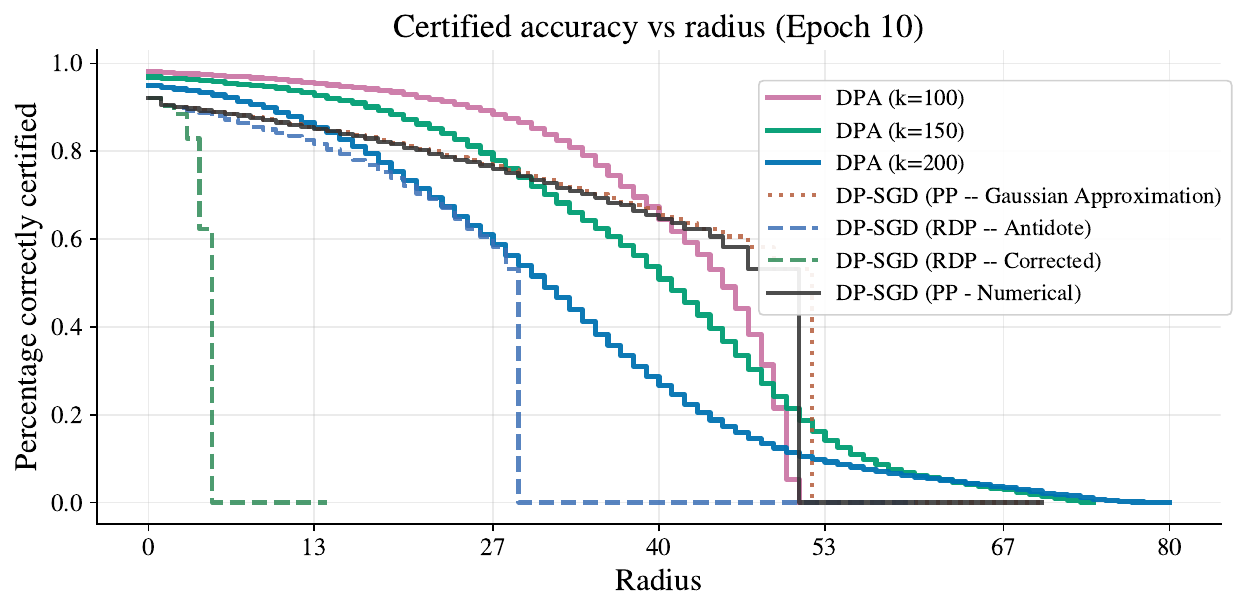}
        \caption{Comparison of certified accuracies of our method (Privacy Profile -- Numerical) for DP-SGD with epoch 10, with RDP-based accounting and DPA using $100, 150, 200$ partitions for MNIST.}
        \label{fig:cifar10_certacc_sigma_0.03}
    \end{minipage}
\end{figure}

\begin{figure}[htbp]
    \centering
    \begin{minipage}{0.48\textwidth}
        \centering
        \includegraphics[width=\textwidth]{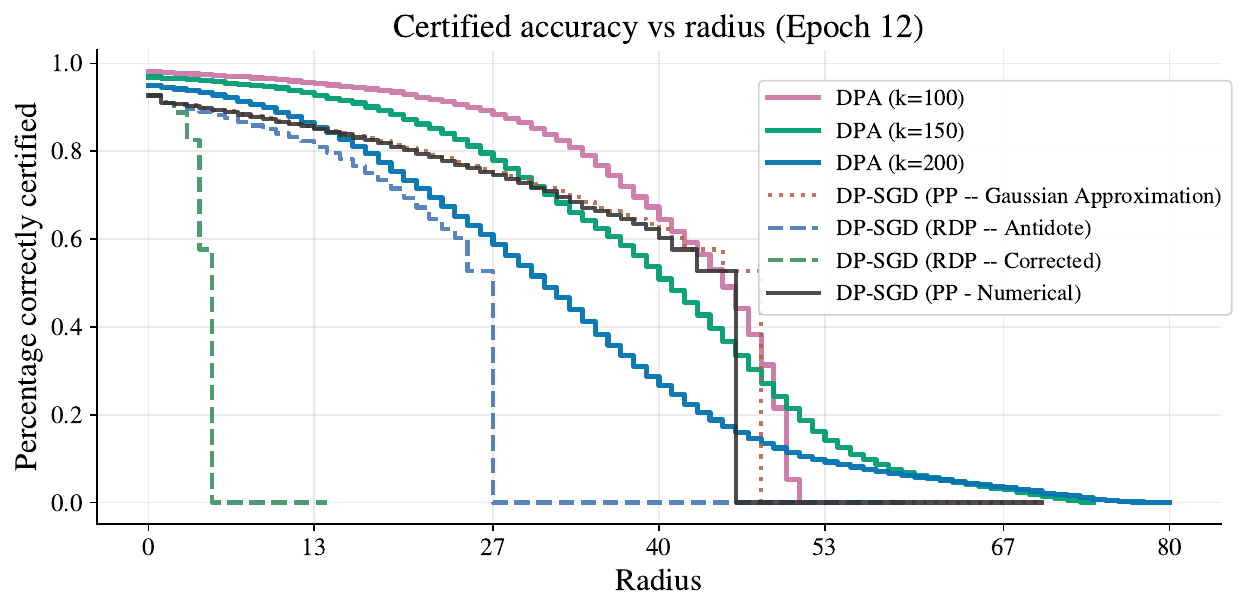}
         \caption{Comparison of certified accuracies of our method (Privacy Profile -- Numerical) for DP-SGD with epoch 12, with RDP-based accounting and DPA using $100, 150, 200$ partitions for MNIST.}
        \label{fig:cifar10_certacc_sigma_0.04}
    \end{minipage}
    \hfill
    \begin{minipage}{0.48\textwidth}
        \centering
        \includegraphics[width=\textwidth]{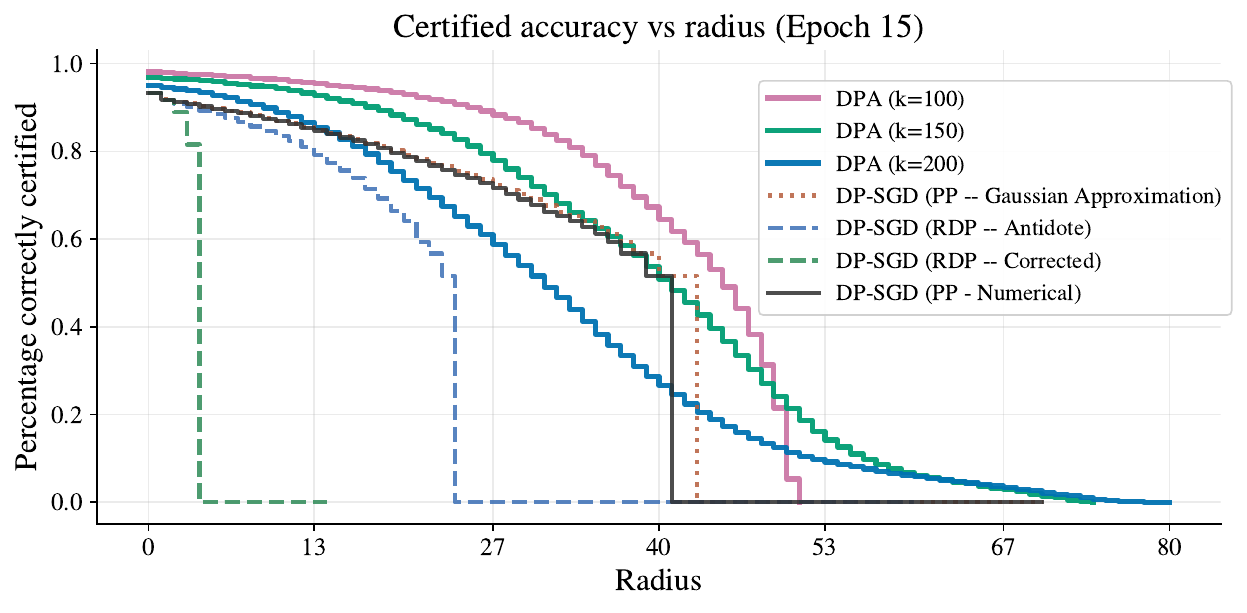}
         \caption{Comparison of certified accuracies of our method (Privacy Profile -- Numerical) for DP-SGD with epoch 15, with RDP-based accounting and DPA using $100, 150, 200$ partitions for MNIST.}
        \label{fig:cifar10_certacc_sigma_0.05}
    \end{minipage}
\end{figure}

%% file: content/random_preprocessing.tex
\section{Amplification by random preprocessing}

\begin{theorem}
\label{thm:amplification_random_preprocessiing}
    Let $X$ and $X'$ be datasets of $N$ $d$-dimensional real-valued vectors.
    Assume that they are related via substitution of $K$ records with bounded norm $r$, i.e., additive perturbation of $K$ records
    with $\ell_2$-norm $r$.
    Consider the mechanism $M(y \mid X) = \sum_{\vy \in \{0,1\}^N} \int B(y \mid \tilde{X}, \vy) G(\tilde{X} \mid X)\cdot  S(\vy) \mathrm{d} \tilde{X} $,
    where $S(\vy)$ is a distribution over Poisson subsampling indicator vectors,
    $G(\tilde{X} \mid X)$ adds additive Gaussian noise with scale $\sigma_\mathrm{in}$
    to all samples and
    base mechanism $B(y \mid \tilde{X}, \vy)$ releases noisy clipped gradients only for samples with $y = 1$, i.e., DP-SGD.
    Let $P, Q$ be a dominating pair of $B$ under insertion of $K$ records and removal of $K$ records in $\tilde{X}$.
    Then the additive noise subsampled mechanism $M$ admits the following bound on its privacy profile for all neighboring $X \simeq X'$:
    \begin{equation*}
        \mathcal{H}_\alpha(M(\cdot \mid X) || M(\cdot \mid X')) \leq \mathrm{TVD}(\mathcal{N}(0,\sigma_\mathrm{in}) || \mathcal{N}(r, \sigma_\mathrm{in})) \cdot \mathcal{H}_\alpha(P || Q),
    \end{equation*}
    where TVD is the total variation distance of univariate Gaussians with scale $\sigma_\mathrm{in}$.
\end{theorem}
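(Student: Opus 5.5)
We prove the bound by coupling the input noise of the two datasets and then using joint convexity of the hockey-stick divergence. Since $X$ and $X'$ differ in $K$ records, each by an additive perturbation of $\ell_2$-norm at most $r$, it suffices to handle the perturbed records. For a single perturbed record $\vx_k' = \vx_k + \vect{v}_k$ with $\|\vect{v}_k\|_2 \le r$, the noisy versions are $\tilde\vx_k \sim \mathcal N(\vx_k,\sigma_\mathrm{in}^2 I)$ and $\tilde\vx_k' \sim \mathcal N(\vx_k',\sigma_\mathrm{in}^2 I)$. By rotational invariance, the total variation distance between these two Gaussians equals $\tau := \mathrm{TVD}(\mathcal N(0,\sigma_\mathrm{in})\,\|\,\mathcal N(r,\sigma_\mathrm{in}))$ when $\|\vect{v}_k\|_2 = r$, and is at most $\tau$ otherwise.

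\emph{Step 1: maximal coupling.} We take a maximal coupling of the two noisy records. With probability $1-\tau$ they coincide, and with probability $\tau$ they are drawn from the normalized residual measures $\mu_k$ and $\mu_k'$. To make the single factor $\tau$ in the statement come out, the $K$ records must be coupled jointly, not independently. We therefore treat the $K$ perturbed records as one vector in $\mathbb R^{Kd}$ and shift it by $(\vect v_1,\dots,\vect v_K)$. Its norm is at most $\sqrt K r$, which would give $\mathrm{TVD}$ at shift $\sqrt K r$.

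This is where I expect the main obstacle. The stated bound uses $\mathrm{TVD}$ at shift $r$, so the joint-vector shortcut is too loose. Instead we couple all $K$ records through a single shared event. We use a common latent variable $\omega$ with probability $1-\tau$ of ``all agree'' and probability $\tau$ of ``all residual''. This is valid because each marginal coupling only needs the right marginals, and the joint law of the $K$ independent noises can be realized with correlated coupling indicators. We need to check that this keeps the correct product marginals on each side. If it does not, we fall back to the $\sqrt K r$ version and interpret $r$ accordingly.

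\emph{Step 2: mixture decomposition.} Under the coupling, write
\[
M(\cdot\mid X) = (1-\tau)\,A + \tau\, P',\qquad M(\cdot\mid X') = (1-\tau)\,A + \tau\, Q'.
\]
Here $A$ is the output law when the noisy datasets agree, and $P'$, $Q'$ are the output laws conditioned on the residual event. The standard advanced-joint-convexity bound for hockey-stick divergences, as in the subsampling analyses of \citet{balle2018amplification}, gives
\[
\mathcal H_\alpha\bigl((1-\tau)A+\tau P'\,\|\,(1-\tau)A+\tau Q'\bigr)\le \tau\,\mathcal H_{\alpha}(P'\|Q')
\]
when $\alpha \ge 1$, since the common component cancels. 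For $\alpha<1$ the same cancellation applies to the reversed pair.

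\emph{Step 3: bounding the residual term.} On the residual event, the noisy datasets $\tilde X$ and $\tilde X'$ differ in at most $K$ records, with arbitrary values. Any substitution of $K$ records can be realized as removing $K$ records and inserting $K$ others. We condition on the shared noise of the unchanged records and on the subsampling vector $\vy$, which has the same law $S$ on both sides. Quasi-convexity of $\mathcal H_\alpha$ then bounds $\mathcal H_\alpha(P'\|Q')$ by the supremum over such neighboring noisy datasets of the divergence of $B$ composed with subsampling. By the hypothesis that $(P,Q)$ dominates $B$ under insertion and removal of $K$ records, this supremum is at most $\mathcal H_\alpha(P\|Q)$. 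Combining Steps 2 and 3 gives the claimed bound.
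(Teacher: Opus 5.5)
The shared-event coupling in your Step~1 does not work, and it cannot be repaired. Write $\nu_k$ for the normalized overlap of the $k$-th pair of Gaussians. A single indicator $\omega$ makes the law of the perturbed noisy records under $X$ equal to $(1-\tau)\prod_k\nu_k+\tau\prod_k\mu_k$. For $K\ge 2$ this is not the product $\prod_k\bigl((1-\tau)\nu_k+\tau\mu_k\bigr)$ of independent Gaussians, so it is not a coupling of $G(\cdot\mid X)$ and $G(\cdot\mid X')$ at all.

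More fundamentally, every coupling satisfies $\Pr[\tilde X\neq\tilde X']\ge\mathrm{TVD}\bigl(G(\cdot\mid X)\,\|\,G(\cdot\mid X')\bigr)$. For $K$ records each shifted by norm exactly $r$, this equals $\mathrm{TVD}(\mathcal N(0,\sigma_\mathrm{in})\,\|\,\mathcal N(\sqrt K r,\sigma_\mathrm{in}))>\tau$. Under that per-record reading the bound is even false for a general base mechanism: if the output determined the noisy records, the left side at $\alpha=1$ would equal this joint TVD, while the right side is at most $\tau$.

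So your fallback is not optional; it is the only viable route. Read $r$ as the $\ell_2$ norm of the whole perturbation $X'-X$, and take one maximal coupling of the dataset-level Gaussians, whose weight is $\tau$ by translation and rotation invariance. This is exactly the paper's proof, after its post-processing reduction to the $K$ substituted records. It takes $w=\mathrm{TVD}(\mathcal N(0,\sigma_\mathrm{in}^2\mathbf I)\,\|\,\mathcal N(r,\sigma_\mathrm{in}^2\mathbf I))$ for the whole noisy dataset. Your Steps~2--3 then match it: joint convexity removes the common component, quasi-convexity handles the residual input noise, and $K$ removals plus $K$ insertions invoke the dominating pair.

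Two further corrections.

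First, your claim for $\alpha<1$ is wrong. There $\mathcal H_\alpha(A\|A)=1-\alpha$. Using $\mathcal H_\alpha(M\|M')=1-\alpha+\alpha\,\mathcal H_{1/\alpha}(M'\|M)$ on the reversed pair only yields $\tau\,\mathcal H_\alpha(P'\|Q')+(1-\tau)(1-\alpha)$. The extra term is unavoidable: a data-independent $B$ with dominating pair $P=Q$ gives left side $1-\alpha$ and right side $\tau(1-\alpha)$. So the statement must be restricted to $\alpha\ge 1$. The paper's proof implicitly does the same when it drops $(1-w)\,\mathcal H_\alpha(M_0\|M_0')$, which vanishes only for $\alpha\ge1$.

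Second, in Step~3 do not condition on $\vy$. Applying quasi-convexity over the subsampling vector would replace the subsampled mechanism by its worst non-subsampled conditional, discarding the amplification that $(P,Q)$ encodes. Condition only on the input noise and keep $\sum_{\vy}B(\cdot\mid\tilde X,\vy)S(\vy)$ intact, as the paper does.
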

\begin{proof}
    Via post-processing property of differential privacy, the mechanism is at least as private as a mechanism $M'$ that releases all records but the substituted ones without any subsampling or additive noise.
    This mechanism $M'$ is equally private as only applying our original mechanism $M$ to the $K$ substituted records, i.e.,
    we can analyze w.l.o.g. the setting where $N=K$, i.e., there are no non-substituted records in $S$ and $S'$.
    A standard result from optimal transport theory is that any pair of distributions admits a maximal coupling (see~\cite{balle2018amplification}) for a discussion in the context of differential privacy. Via projection of this maximal coupling onto its marginals~\cite{balle2018amplification}, we can show
    \begin{align*}
        &G(\tilde{X} \mid X) = (1-w) \cdot G_0 + w \cdot G_1\\
        &G(\tilde{X} \mid X') = (1-w) \cdot G_0 + w \cdot G_1' 
    \end{align*}
    with
    \begin{equation*}
        w = \mathrm{TVD}(\mathcal{N}(0,\sigma_\mathrm{in}^2 \mathbf{I}) || \mathcal{N}(r, \sigma_\mathrm{in}^2 \mathbf{I}))
        = \mathrm{TVD}(\mathcal{N}(0,\sigma_\mathrm{in}) || \mathcal{N}(r, \sigma_\mathrm{in}))
    \end{equation*}
    where the last result follows from translation-invariance of Gaussian f-divergences.

    Define mechanisms
    \begin{align*}
        M_0 = M(y \mid X) = \sum_{\vy \in \{0,1\}^N} \int B(y \mid \tilde{X}, \vy) G_0(\tilde{X})\cdot  S(\vy) \mathrm{d} \tilde{X}
        \\
        M_1 = M(y \mid X) = \sum_{\vy \in \{0,1\}^N} \int B(y \mid \tilde{X}, \vy) G_1(\tilde{X})\cdot  S(\vy) \mathrm{d} \tilde{X}
        \\
        M_1' = M(y \mid X) = \sum_{\vy \in \{0,1\}^N} \int B(y \mid \tilde{X}, \vy) G_1(\tilde{X})\cdot  S(\vy) \mathrm{d} \tilde{X} 
    \end{align*}
    The hockey-stick divergence is jointly convex in the space of distributions~\cite{balle2018amplification} and thus quasi-convex.
    It thus follows immediately that
    \begin{align*}
         &\mathcal{H}_\alpha(M(\cdot \mid X) || M(\cdot \mid X'))
         \\
         &
         w
         \leq H_\alpha(M_1(\cdot) || M'_1(\cdot))
         +
         (1-w) \cdot 
         H_\alpha(M_0(\cdot) || M'_0(\cdot))
         \\
         &
         \leq 
         w
         H_\alpha(M_1(\cdot) || M'_1(\cdot))
         \\
         &
         \leq 
         w
         H_\alpha(
         \sum_{\vy \in \{0,1\}^N}  B(y \mid \tilde{X}, \vy) \cdot  S(\vy)
         || 
         \sum_{\vy \in \{0,1\}^N}  B(y \mid \tilde{X'}, \vy) \cdot  S(\vy)
         )
    \end{align*}
    where the last step is due to quasi-convexity applied to the additive noise distribution, and $X$ and $X'$ are two arbitrary datasets with $K$ records.
    By definition, any two datasets of size $K$ are related by $K$ insertions and $K$ removals.
    The result then immediately follows from our assumption about dominance of $(P,Q)$.
\end{proof}